\documentclass{article}

\usepackage{arxiv}

\usepackage[utf8]{inputenc} 
\usepackage[T1]{fontenc}    
\usepackage{hyperref}       
\usepackage{url}            
\usepackage{booktabs}       
\usepackage{amsfonts}       
\usepackage{nicefrac}       
\usepackage{microtype}      
\usepackage{lipsum}

\usepackage[utf8]{inputenc} 
\usepackage[T1]{fontenc}    
\usepackage{url}            
\usepackage{booktabs,bigstrut}       
\usepackage{amsfonts}       
\usepackage{nicefrac}       
\usepackage{microtype,picins}      
\usepackage{amssymb}
\usepackage{amsmath}
\usepackage{amsthm,subfigure}
\usepackage{graphicx}
\usepackage{algpseudocode}
\usepackage[linesnumbered,ruled]{algorithm2e}
\usepackage{xfrac}
\usepackage{color}
\usepackage{wrapfig}
\usepackage{paralist}
\usepackage{mathtools}
\usepackage{comment}
\usepackage{hyperref}
\usepackage[svgnames]{xcolor}
\usepackage[customcolors,shade]{hf-tikz}
\usepackage{empheq,multicol,titlesec}

\usepackage[most]{tcolorbox}
\newtcbox{\mymath}[1][]{%
    nobeforeafter, math upper, tcbox raise base,
    enhanced, colframe=blue!30!black,
    colback=gray!30, boxrule=1pt,
    #1}
\usepackage{multirow,nicefrac}
\usepackage{tikz}
\usepackage{colortbl, array}
\usepackage{hhline}
\usepackage{algpseudocode}
\usepackage[linesnumbered,ruled]{algorithm2e}


\definecolor{rulecolor}{RGB}{0,71,171}
\definecolor{tableheadcolor}{RGB}{204,229,255}



%

\usepackage{caption}
\captionsetup{justification   = raggedright,
              singlelinecheck = false}

\newcommand{\argmax}{\operatornamewithlimits{argmax}}

\newtheorem{definition}{Definition}
\newtheorem{proposition}{Proposition}

\makeatletter
\newsavebox{\mybox}\newsavebox{\mysim}
\newcommand{\distras}[1]{%
  \savebox{\mybox}{\hbox{\kern3pt$\scriptstyle#1$\kern3pt}}%
  \savebox{\mysim}{\hbox{$\sim$}}%
  \mathbin{\overset{#1}{\kern\z@\resizebox{\wd\mybox}{\ht\mysim}{$\sim$}}}%
}

  \usepackage{xcolor,colortbl,subfigure,wrapfig}
\definecolor{Gray}{gray}{0.85}
\definecolor{LightCyan2}{rgb}{0.98,0.82,0.17}
\definecolor{LightCyan}{rgb}{0.56,0.68,0.78}

\title{POIRot: A rotation invariant omni-directional pointnet \thanks{A tribute to Agatha Christie}}

\author{
Liu~Yang, Rudrasis~Chakraborty~and~Stella~X.~Yu \\
University of California\\ Berkeley, USA. \\
  \texttt{\{liu-yang, rudra, stellayu\}@berkeley.edu} \\
}

\begin{document}
\maketitle

Point-cloud is an efficient way to represent 3D world. Analysis of point-cloud deals with understanding the underlying 3D geometric structure. But due to the lack of smooth topology, and hence the lack of neighborhood structure, standard correlation can not be directly applied on point-cloud. One of the popular approaches to do point correlation is to partition the point-cloud into voxels and extract features using standard 3D correlation. But this approach suffers from sparsity of point-cloud and hence results in multiple empty voxels. One possible solution to deal with this problem is to learn a MLP to map a point or its local neighborhood to a high dimensional feature space. All these methods suffer from a large number of parameters requirement and are susceptible to random rotations. A popular way to make the model ``invariant'' to rotations is to use data augmentation techniques with small rotations but the potential drawback includes \begin{inparaenum}[\bfseries (a)] \item more training samples \item susceptible to large rotations. \end{inparaenum}
In this work, we develop a rotation invariant point-cloud segmentation and classification scheme based on the omni-directional camera model (dubbed as {\bf POIRot$^1$}). Our proposed model is rotationally invariant and can preserve geometric shape of a 3D point-cloud. Because of the inherent rotation invariant property, our proposed framework requires fewer number of parameters (please see \cite{Iandola2017SqueezeNetAA} and the references therein for motivation of lean models). Several experiments have been performed to show that our proposed method can beat the state-of-the-art algorithms in classification and part segmentation applications. Furthermore, we have applied our proposed framework to detect corpus callosum shape from a 3D brain scan represented as a point-cloud. We have empirically shown that our proposed method can detect corpus callosum shape from the 3D brain point-cloud given only the atlas of the corpus callosum. 

\section{Introduction}\label{sec:introduction}

Point-cloud is an efficient way to represent 3D world \cite{pointnet,pointnet++}. The recent years have witnessed the popularity of 3D computer vision tasks with the advent of 3D sensors and modeling devices. The 3D sensors such as depth cameras, LiDAR can output 3D point-cloud, which is a key component in several 3D vision tasks including but not limited to virtual/ augmented reality \cite{lin2018novel, rambach2016learning}, 3D scenes understanding \cite{tulsiani2018factoring, vasu2018occlusion, dai2018scancomplete}, and autonomous driving \cite{chen2018lidar, shen2018stereo, wu2018squeezeseg}.

Due to the enormous popularity of correlation neural networks (CNNs) in computer vision tasks \cite{deng2009imagenet, krizhevsky2012imagenet}, an obvious approach is to use CNNs to process point-cloud. But unfortunately, due to the lack of the smooth topology of a point-cloud, standard correlation can not be applied as it is. This is mainly due to the fact that at a given point in a point-cloud, it is hard to define a grid structure analogous to an image, hence applying standard correlation turns out to be a non-trivial and challenging problem.  To alleviate this problem, several researchers \cite{wu20153d, riegler2017octnet, wang2017cnn,su2015multi, qi2016volumetric,kalogerakis20173d, zhou2018learning}  use CNNs to process 3D point-cloud is by first mapping the point-cloud on a smooth topological space where doing correlation makes sense. Several popular solutions to overcome the main bottleneck of lack of a smooth neighborhood topology of a 3D point include \begin{inparaenum} \item converting the 3D point-cloud into regular voxel representation \cite{wu20153d, riegler2017octnet, wang2017cnn} or \item using view projection \cite{su2015multi, qi2016volumetric,kalogerakis20173d, zhou2018learning}\end{inparaenum}. Most of these voxel based methods suffer from possible sparsity of point-clouds which results in multiple empty voxels. 

One possible solution is to use MLP to extract features from each point \cite{pointnet} or a local neighborhood of each point \cite{pointnet++}. These models \cite{pointnet, pointnet++, rethage2018fully, MR3D}  directly work on 3D point-clouds. Similar to the CNNs, given a set of points, the ``point correlation layer'' finds ``local patch'' around each point by using point affinity matrix. Here the affinity matrix is defined as the adjacency matrix for the fully connected graph on point-cloud. These local patches are used in standard correlation operator to extract local patches and this operator is defined as ``point correlation''. This basic ``point correlation'' operator layers are stacked together to extract features. But unlike images, defining local patches on point-cloud needs to deal with geometric structure of point-cloud. In most methods \cite{pointnet, pointnet++,DGCNN, li2018pointcnn}, researchers use nearest neighbors to define local neighborhood, a.k.a., ``local patch''. 

The rationale behind using MLP to extract features from each point or a local neighborhood can be thought of as mapping the local neighborhood in a high dimensional space ($\subset \mathbf{R}^n$ for some $n$) from where extracting features using standard correlation makes sense. This is analogous to kernel methods \cite{scholkopf1997kernel} where the features are mapped in Hilbert space, but nonetheless this essentially implies that on the feature space we use the topology induced from the Hilbert space. In the context of point-cloud processing, this analogy translates to the use of the induced topology from the standard smooth topology of $\mathbf{R}^n$.  But due to the presence of geometric structures in a 3D point-cloud, in order to induce the globally flat topology from $\mathbf{R}^n$, $n$ needs to be very large. Naturally this increases the complexity of the model in terms of number of parameters and computational time. To overcome this limitation, one can embed the local structures and geometry of the 3D point-cloud in a ``curved'' space with known non-Euclidean geometry. One of the well-known non-Euclidean spaces is hypersphere, whose topology we use to induce a topology on point-cloud.

 Using the induced topology from sphere, we define a correlation operator on the point-cloud. In order to define correlation operator, we first put sphere on each point of the point-cloud and collect response from it. This essentially represents the local geometry in the point-cloud captured as spherical response. We use spherical correlation to extract rotation equivariant features. Unlike previous methods, we {\it implicitly} look at the interaction between points in the point-cloud by looking at the collective spherical responses. After extracting rotation equivariant local features, we look at {\it explicit} interaction between points in the point-cloud. In section \ref{sec:classmodel}, we give the detailed description of the scheme to extract local and global features for classification and segmentation tasks.  Our proposed method has several advantages over the previous methods including: \begin{inparaenum}[\bfseries (a)]  \item the induced spherical topology makes our proposed scheme invariant to rotations \item due to the presence of intrinsic geometry, our proposed scheme has much leaner model \item the interaction between local features makes the proposed method invariant to permutations. \item  our proposed segmentation scheme outperforms the state-of-the-art methods on benchmark datasets. \item we can achieve similar classification accuracy on rotated data  without any explicit data augmentation. \end{inparaenum} Before describing our proposed method in detail, we discuss some of the previous work.

\subsection{Related Works:} Previous works on 3D point-clouds either deal with representing 3D shapes based on 3D grid, and use standard correlation networks \cite{wu20153d, maturana2015voxnet, tatarchenko2017octree}. These methods mostly suffer from inefficient usage of 3D voxels due to empty voxels and fail to capture 3D geometric shapes. Furthermore, due to the computational complexity of 3D correlation operations, this is not a desirable choice. In some recent work \cite{klokov2017escape, riegler2017octnet, tatarchenko2017octree, wang2017cnn} researchers proposed techniques to somewhat overcome these limitations but still the partition of point-cloud into voxels makes these algorithm not suitable to capture 3D geometric shapes. 

Another major body of work is mostly deal with developing ``correlation'' like techniques on 3D point-clouds. As a first work in this genre, PointNet \cite{pointnet} embeds each point coordinate in a high dimensional space by learning a mapping and then aggregating information by pooling the features. Although achieving reasonable accuracy, PointNet did not learn any local geometric information of the 3D shape. PointNet++ \cite{pointnet++} handled this by proposing a hierarchical application of isolated 3D point feature learning to multiple subsets of point-cloud data. The authors ideally used the single point processing unit hierarchically on multiple subsets of the point-cloud. Several other researchers proposed techniques to combine local neighborhood information either by defining correlation operator like $\chi$-conv \cite{li2018pointcnn} or by using a dynamic graph based technique \cite{DGCNN}. In order to capture geometric shapes, \cite{su2018splatnet} extracted local structure by grouping points based on permutohedral lattices \cite{adams2010fast}, and then applied bilateral correlation \cite{bilaterCNN} for feature learning. Super-point graphs \cite{landrieu2018large} proposed to partition point-cloud into super-points to learn 3D geometric shapes. 

Though there is a large body of work defining ``correlation'' like technique on point-clouds, none of them define a equivariant correlation operator on point-cloud. As stated before, the challenge is mostly due to lack of smooth topology in 3D point-clouds which are naturally equipped with discrete topology. This motivates us to define a correlation operator by defining an induced smooth topology on point-clouds. In the rest of the paper, we first describe our proposed classification and segmentation technique for point-clouds in Section \ref{sec:classmodel} with experimental validations in Section \ref{sec:results}.

\section{A rotation invariant CNN for point-clouds} \label{sec:classmodel}
In this section, we first give the motivation of our proposed geometric framework for processing 3D point-cloud. More specifically, we will point out that the existing methods has several shortcomings including:  \begin{inparaenum} \item earlier methods either mapped each point or its neighborhood in a point-cloud into higher dimension in order to extract features and hence require significantly large number of parameters \item none of these previous methods define a correlation operator on point-clouds preserving geometric invariance properties.\end{inparaenum} To avoid these shortcomings, we propose our framework to process 3D point-clouds which is \begin{inparaenum} \item inherently invariant to rigid transformations, \item use definition of correlation on point-clouds by induced topology from sphere \item leaner compared to the previous methods \end{inparaenum}

We first propose a rotation invariant correlation neural network (CNN) using an induced spherical topology on the point-cloud. Though the formulation described below can be applied on $\mathbf{R}^n$ for any $n\in \mathbf{Z}^+$, in this work we have restricted ourselves to $n=3$ as our proposed framework is specifically designed for 3D point-cloud. Our proposed framework consists of three basic building blocks which we describe next.

\pichskip{5pt}
\parpic[r][b]{%
  \begin{minipage}{42mm}
  \centering
        \includegraphics[scale=0.17]{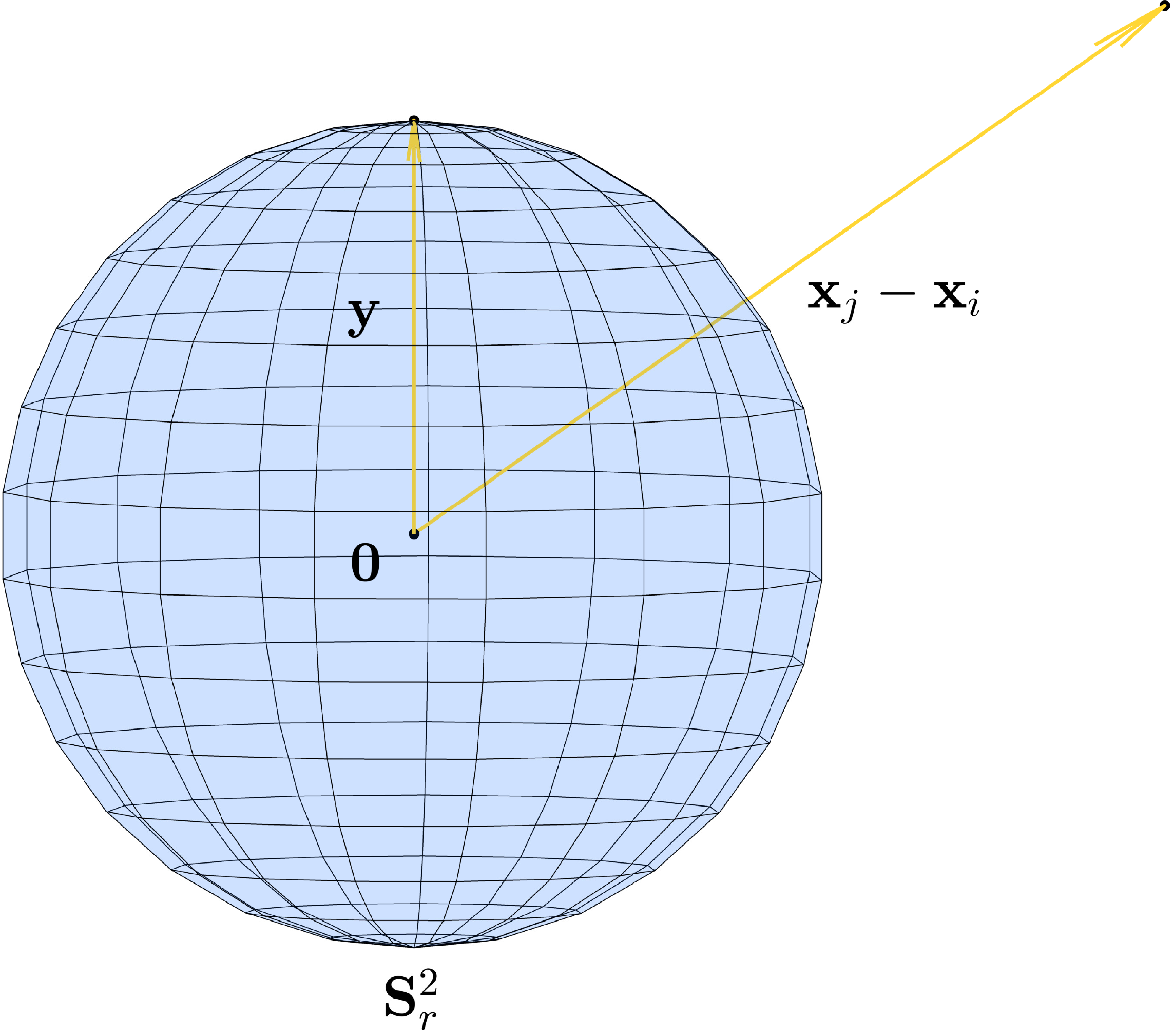}
        \captionof{figure}{Response from point-cloud collected on sphere.}
        \label{classmodel:figm1}
    \vspace{-5em}
  \end{minipage}
}
    
\subsection{Collecting responses on the point-cloud:} At each point in the point-cloud we put a sphere and collect the response from the entire point-cloud. This gives at each point $\mathbf{x}_i \in \mathbf{R}^3$, a function $f_i: \mathbf{S}^2_r(\mathbf{x}_i) \rightarrow \mathbf{R}$, where, $\mathbf{S}^2_r(\mathbf{x}_i)$ is the sphere of radius $r>0$ centered at $\mathbf{x}_i$. Thus given the point-cloud $X = \left\{\mathbf{x}_i\right\}_{i=1}^N$, we represent by $\left\{f_i\right\}$. Now, we collect the combined responses from entire point-cloud. Before doing that, for each $\mathbf{x}_i$, we subtract $\mathbf{x}_i$ from $X$ so that $\mathbf{S}^2_r(\mathbf{x}_i)$ at $\mathbf{x}_i$ is centered at the origin of $\mathbf{R}^3$. Without any loss of generality, we will denote the sphere centered at the origin by $\mathbf{S}^2_r$. Now, given $\mathbf{y} \in \mathbf{S}^2_r$, we compute the response $f_i(\mathbf{y})$ (an example is shown in Fig. \ref{classmodel:figm1}) as 
\begin{align}
\label{classmodel:eq1}
f_i(\mathbf{y})= \sum_{\mathbf{x}_j \not\in \mathbf{B}^2_r(\mathbf{x}_i)} \max\left\{0, \mathbf{y}^t(\mathbf{x}_j - \mathbf{x}_i)\right\}, 
\end{align}
where, $\mathbf{B}^2_r(\mathbf{x}_i)$ is the unit ball with radius $r$ centered at $\mathbf{x}_i$. The reason for ignoring the negative responses is twofold: \begin{inparaenum} \item Given $\mathbf{y}, \widetilde{\mathbf{y}} \in \mathbf{S}^2_r$ and $\mathbf{x} \in \mathbf{R}^3$, if $\mathbf{x}^t\mathbf{y}$ and $\mathbf{x}^t \widetilde{\mathbf{y}}$ differ in sign (assume $\mathbf{x}^t\mathbf{y} \geq 0$), then the two points, $\mathbf{y}, \widetilde{\mathbf{y}}$ on  $\mathbf{S}^2_r$ must lie on two hemispheres  separated by the equator perpendicular to $\mathbf{x}$. Thus, we can eliminate $\widetilde{\mathbf{y}}$ as there exists a $-\widetilde{\mathbf{y}} \in \mathbf{S}^2_r$ such that $\mathbf{x}^t \left(-\widetilde{\mathbf{y}}\right)\geq 0$. Thus eliminating negative responses will reduce information bottleneck. \item The underlying hypothesis is that response from every point in the point-cloud should be captured by exactly one antipodal point on  $\mathbf{S}^2_r$, thus eliminating negative responses will reduce the amount of conflicting information gathered on $\mathbf{S}^2_r$. \end{inparaenum} A schematic diagram depicting a point-cloud and the corresponding collected response is given in Fig.~\ref{classmodel:fig1}.

\begin{figure}[!ht]
 \centering
      \includegraphics[scale=0.22]{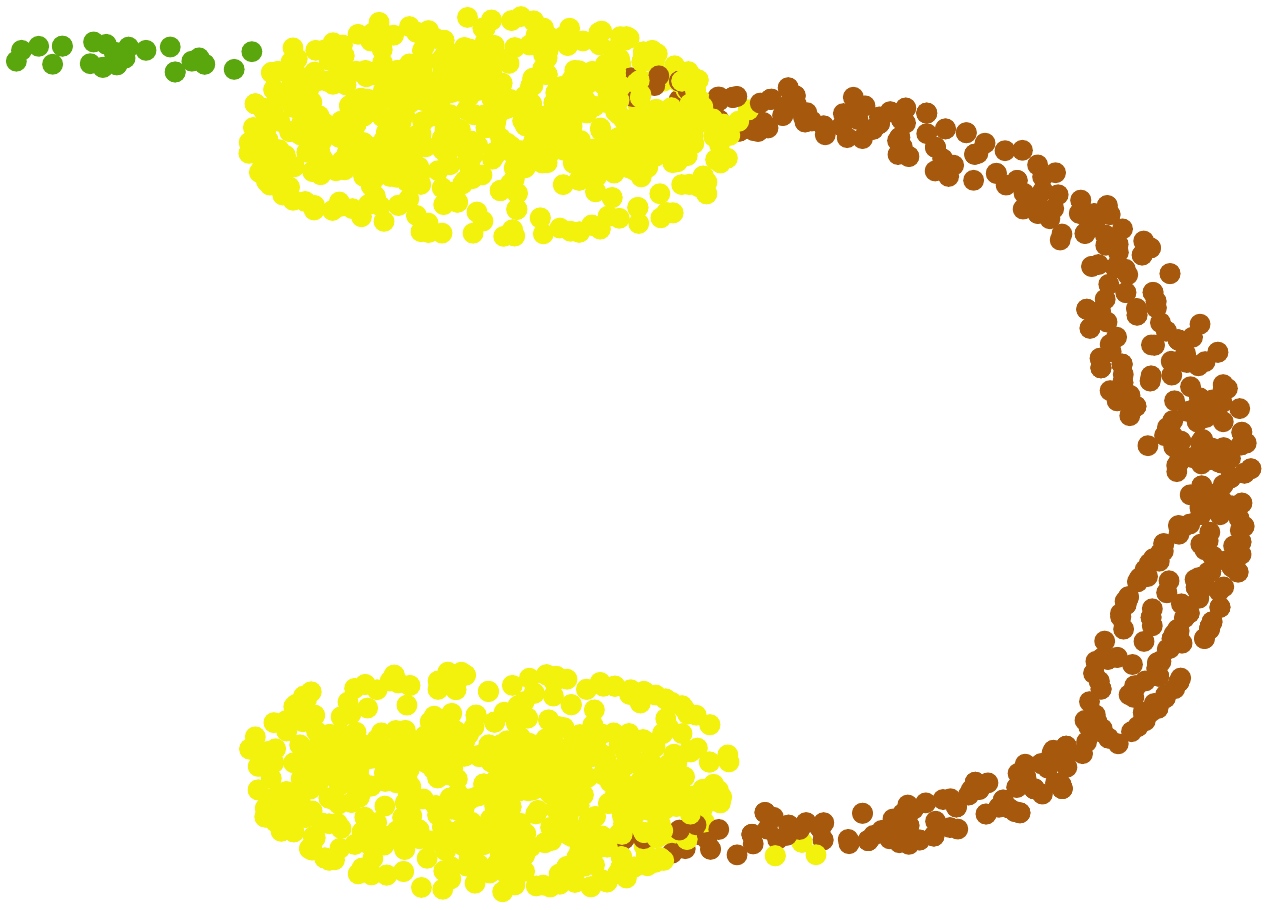}
       \includegraphics[scale=0.21]{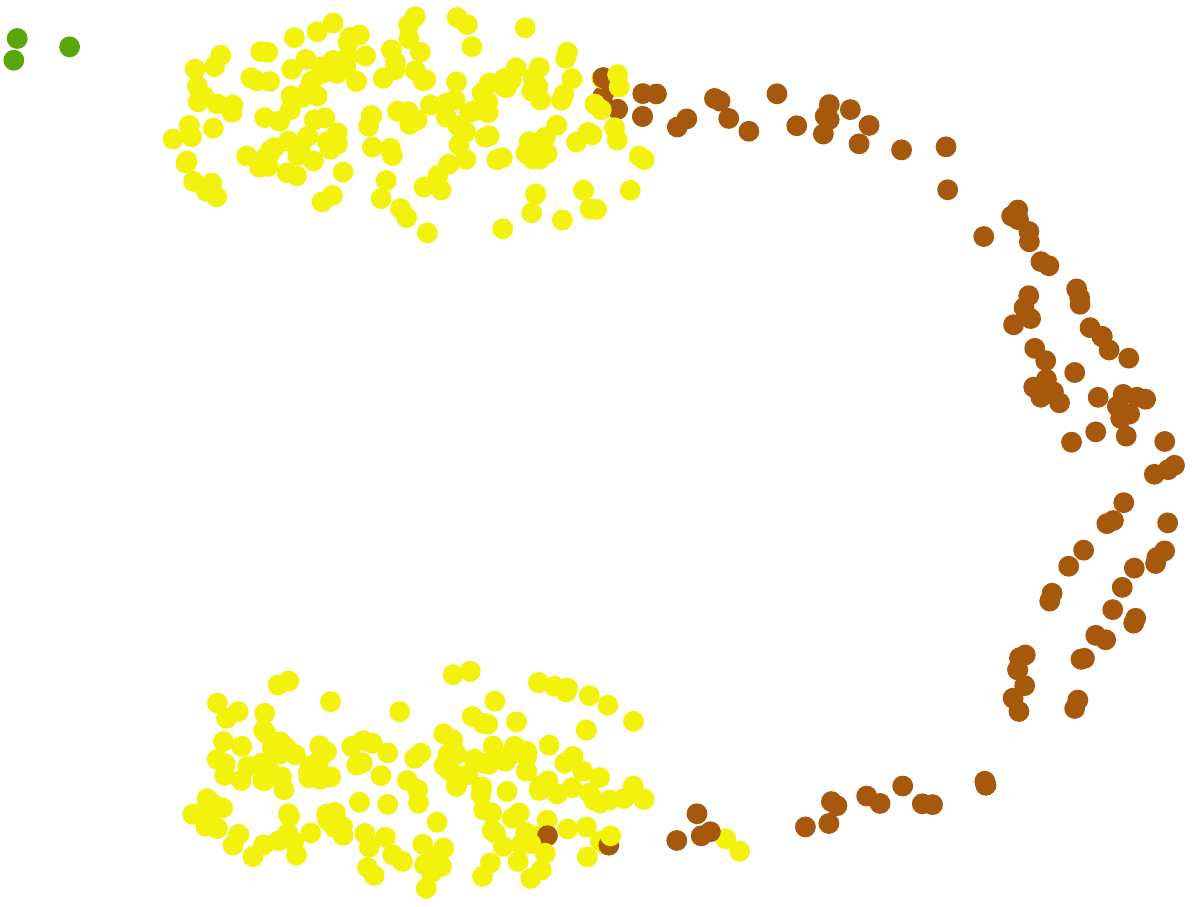}
       \includegraphics[scale=0.14]{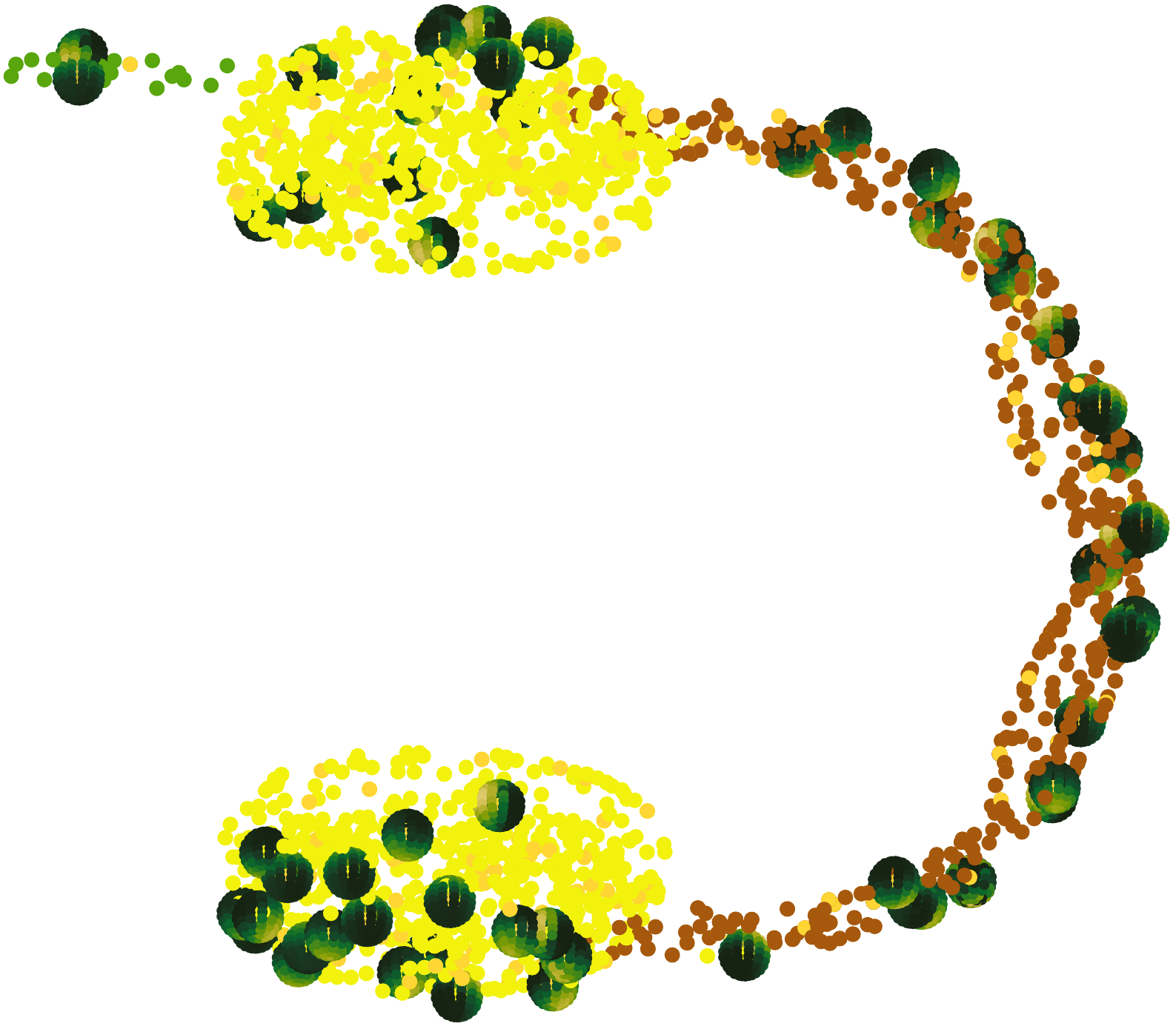}
       \caption{({\it Left:}) The original point-cloud, ({\it Middle:}) the downsampled point-cloud, ({\it Right}) convex hull points.}
      \label{classmodel:fig1}
\end{figure}

This representation can be viewed as {\it putting omni-directional camera at each point and collecting the responses in each viewing direction.} This analogy makes one wonder: {\it Is there a necessity for $N$ cameras where $N$ is the number of points in the point-cloud?} Obviously, for a dense point-cloud the answer is no and hence we propose a {\it multinomial downsampling} strategy as follows.

\pichskip{5pt}
\parpic[r][b]{%
  \begin{minipage}{42mm}
  \centering
        \includegraphics[scale=0.28]{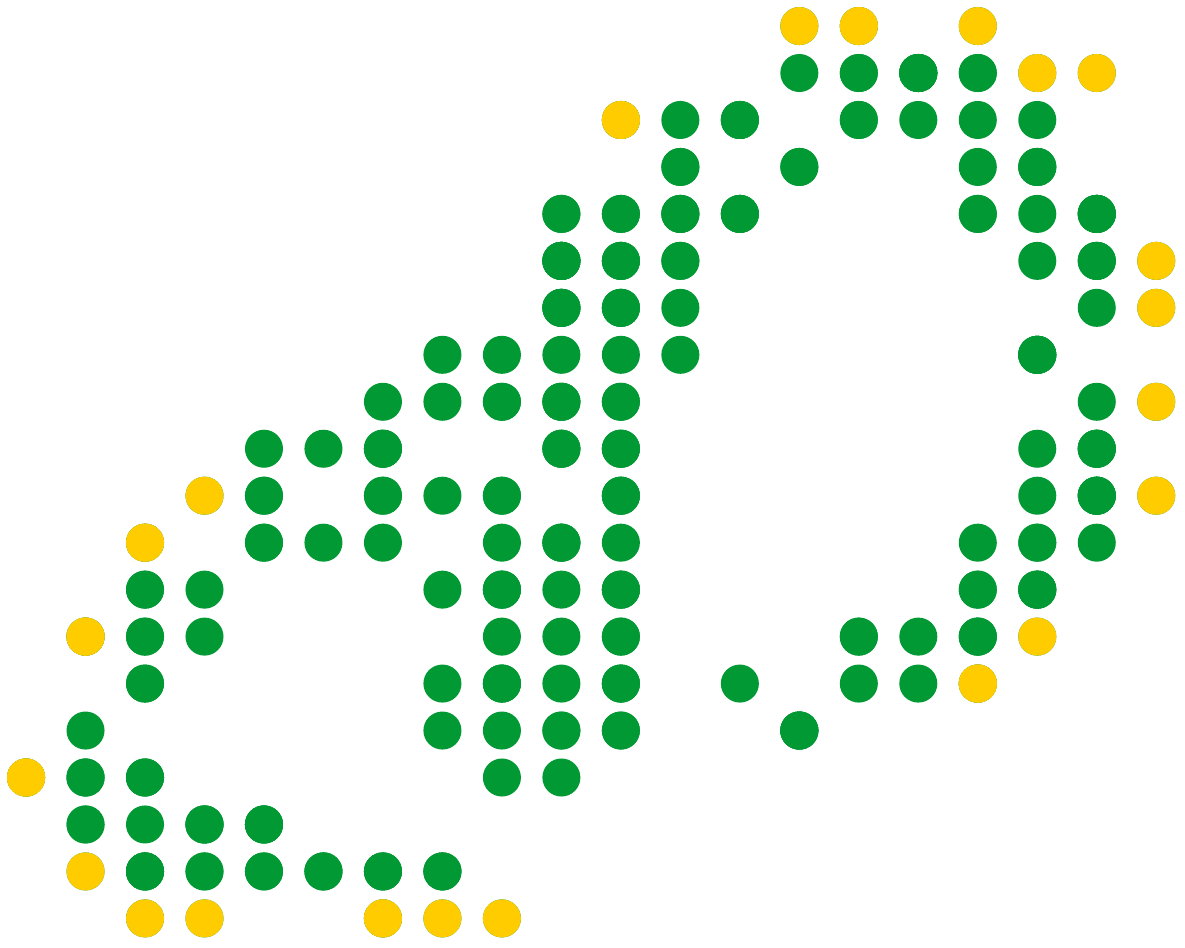}
        \captionof{figure}{Point-cloud with convex set of points (shown in orange).}
        \label{classmodel:fig0}
    \vspace{-0.1em}
  \end{minipage}
}

 {\bf Multinomial downsampling:}  
\begin{inparaenum}[\bfseries{(a)}]  \item on each $\mathbf{S}^2_r$ centered at $\mathbf{x}$, we collect the omni-directional response from the entire point-cloud. \item for each $i^{th}$ point in the point-cloud, we assign a value $v_i$ to be the largest response collected on $\mathbf{S}^2_r$. \item we use normalized $v_i$ as the sampling probability for the multinomial distribution. We draw $n<N$ number of sub-sampled points from this multinomial distribution. \end{inparaenum} In the following proposition, we claim that the set of points selected according to the largest responses lie on the convex hull of the point-cloud (an example is shown in Fig. \ref{classmodel:fig0}). 
    
\begin{proposition}
\label{classmodel:prop1}
Let $J$ be the set of indices corresponding to the points with the largest responses. Then $\left\{\mathbf{x}_j\right\}_{j\in J}$ lie on the convex hull of the point-cloud. 
\end{proposition}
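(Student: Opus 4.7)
The plan is to prove this via a convexity argument applied to the hinge function $\max\{0,\cdot\}$ that defines $f_i$ in \eqref{classmodel:eq1}, showing that a point not on the convex hull cannot achieve the largest response. First I would reparameterize $\mathbf{y} = r\mathbf{u}$ with $\mathbf{u}$ on the unit sphere, so that $f_i(r\mathbf{u}) = r \sum_{\mathbf{x}_j \notin \mathbf{B}^2_r(\mathbf{x}_i)} \max\{0,\mathbf{u}^t(\mathbf{x}_j - \mathbf{x}_i)\}$, and set $v_i := \max_{\|\mathbf{u}\|=1} f_i(r\mathbf{u})$. The proposition then reduces to: any index $i$ with $v_i = \max_k v_k$ corresponds to a point $\mathbf{x}_i$ on the convex hull of $X$.

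The core step is a convex-combination bound. Suppose $\mathbf{x}_i$ is not a convex hull vertex. By Carath\'eodory's theorem there exist convex hull vertices $\{\mathbf{x}_{k_\ell}\}$ and weights $\alpha_\ell \geq 0$ with $\sum_\ell \alpha_\ell = 1$ such that $\mathbf{x}_i = \sum_\ell \alpha_\ell \mathbf{x}_{k_\ell}$. For any $j$ and unit $\mathbf{u}$, linearity gives $\mathbf{u}^t(\mathbf{x}_j - \mathbf{x}_i) = \sum_\ell \alpha_\ell \mathbf{u}^t(\mathbf{x}_j - \mathbf{x}_{k_\ell})$, so by convexity of the hinge,
\begin{equation*}
\max\{0, \mathbf{u}^t(\mathbf{x}_j - \mathbf{x}_i)\} \leq \sum_\ell \alpha_\ell \max\{0, \mathbf{u}^t(\mathbf{x}_j - \mathbf{x}_{k_\ell})\}.
\end{equation*}
Summing this over all $j$ and taking $\max$ over $\mathbf{u}$, together with the elementary fact that the maximum of a non-negative convex combination of functions is at most the convex combination of their maxima, yields $v_i \leq \sum_\ell \alpha_\ell v_{k_\ell} \leq \max_\ell v_{k_\ell}$. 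Hence no non-vertex point can strictly exceed the maximum response over convex hull vertices, so every index achieving the largest response must correspond to a point on the convex hull.

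The main obstacle I anticipate is the $i$-dependent exclusion ball $\mathbf{B}^2_r(\mathbf{x}_i)$ in the definition of $f_i$: the terms summed for $f_i$ and for $f_{k_\ell}$ are indexed by different subsets. My resolution is to apply the pointwise hinge inequality to \emph{all} $j$ uniformly; the terms missing from the left-hand side (those with $\mathbf{x}_j \in \mathbf{B}^2_r(\mathbf{x}_i)$) are themselves non-negative, so dropping them only weakens the left side, while enlarging the right-hand sum adds non-negative terms, preserving the inequality. A secondary subtlety is that the argument gives only the weak bound $v_i \leq \max_\ell v_{k_\ell}$; to conclude that $J$ contains \emph{only} hull vertices rather than merely containing one, one needs strict inequality for strictly interior points, which holds generically since an interior $\mathbf{x}_i$ admits a non-degenerate Carath\'eodory decomposition for which the hinge convexity is strict on at least one summand at the maximizing direction.
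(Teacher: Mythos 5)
Your route is genuinely different from the paper's. The paper argues by contrapositive with a direct two--point comparison at a fixed direction $\mathbf{y}$: it takes $\widetilde{\mathbf{x}_l}$ to be a hull vertex for which every $\mathbf{y}^t(\mathbf{x}_i - \widetilde{\mathbf{x}_l})$ is positive (a supporting--hyperplane vertex in direction $\mathbf{y}$, so nothing is clipped) and asserts $f_l(\mathbf{y}) > f_k(\mathbf{y})$ for a non--hull $\mathbf{x}_k$. Your Carath\'eodory--plus--Jensen argument instead bounds an interior point's response by a convex combination of hull--vertex responses; the pointwise hinge inequality $\max\{0,\sum_\ell \alpha_\ell z_\ell\} \le \sum_\ell \alpha_\ell \max\{0,z_\ell\}$ is airtight, and your version avoids the paper's loose ``definitions'' of being on or off the hull via signs of inner products. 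In its core step your argument is the more rigorous of the two.

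Two steps do not close, however. First, your handling of the $i$--dependent exclusion ball only repairs one side of the inequality. Writing $\widetilde{f}_i(\mathbf{u})$ for the sum over \emph{all} $j$, you correctly get $f_i \le r\widetilde{f}_i \le r\sum_\ell \alpha_\ell \widetilde{f}_{k_\ell}$; but $\widetilde{f}_{k_\ell} \ge f_{k_\ell}$, since restoring the terms with $\mathbf{x}_j \in \mathbf{B}^2_r(\mathbf{x}_{k_\ell})$ \emph{enlarges} the right--hand side beyond $\sum_\ell \alpha_\ell f_{k_\ell}$. So the chain delivers a bound by $\max_\ell \max_{\mathbf{u}} r\widetilde{f}_{k_\ell}(\mathbf{u})$, not by $\max_\ell v_{k_\ell}$ with the $v$'s computed as in Eq.~\ref{classmodel:eq1}; you would need to bound the discrepancy (each restored term is at most $r$ for unit $\mathbf{u}$, so the error is of order $r$ times the number of points within $r$ of a hull vertex) or prove the statement for the unrestricted responses. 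Second, the weak inequality $v_i \le \max_\ell v_{k_\ell}$ shows only that $J$ \emph{contains} a hull vertex, not that it contains \emph{no} interior point; excluding ties is exactly the content of the proposition, and the genericity you invoke is not automatic --- if all the Carath\'eodory vertices project to the same value along the maximizing direction, the hinge inequality is an equality for every $j$ and your strictness mechanism vanishes. The paper's own proof asserts the strict inequality $f_l(\mathbf{y}) > f_k(\mathbf{y})$ with no more justification, so this is a shared defect, but your write--up should not present ``holds generically'' as a resolution.
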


\begin{proof}
We will prove it by contra-positive. Let $\mathbf{y}\in \mathbf{S}^2_r$. Let $\mathbf{x}_k$ be the point that does not lie on the convex hull of the point-cloud, i.e., there exists a $\mathbf{x}_i$ such that $\mathbf{y}^t(\mathbf{x}_i - \mathbf{x}_k)<0$. Let $\widetilde{\mathbf{x}_l}$ be a point on the convex hull of the point-cloud, i.e.,  for $\mathbf{x}_i$ and $\mathbf{y}\in \mathbf{S}^2_r$, $\mathbf{y}^t(\mathbf{x}_i - \widetilde{\mathbf{x}_l})>0$. Thus, $f_l(\mathbf{y}) > f_k(\mathbf{y})$, which concludes that 
$k \not\in J$. Thus by contra-positive, we conclude the proof.
\end{proof}   

Let $S = \left\{\mathbf{x}_i\right\}_{i\in I}$ be a given set of points. As mentioned before, we will collect response at each $\mathbf{x}_i\in S$ as in Algorithm \ref{classmodel:alg1}.
 \begin{algorithm}
 \KwData{Input $X=\left\{\mathbf{x}_i\right\}$, $S=\left\{\mathbf{x}_i\right\}_{i\in I}\subset X$, $r>0$}
 \KwResult{Responses $\left\{f_i: \mathbf{S}^2 \rightarrow \mathbf{R}\right\}_{i\in I}$}
 Generate a grid on $\left\{\mathbf{y}_j\right\}_{j=1}^K$ on $\mathbf{S}^2_r$\;
 For each $\mathbf{x}_i \in S$ and for each $\mathbf{y}_j$, assign $f_i(\mathbf{y}_j) = \sum_{\mathbf{x}_k \not\in \mathbf{B}^2_r(\mathbf{x}_i)} \max\left\{0, \mathbf{y}_j^t(\mathbf{x}_k - \mathbf{x}_i)\right\}$ \;

 \caption{Compute responses at each grid point on a given set of points $S$.}
 \label{classmodel:alg1}
\end{algorithm}
Observe that \begin{inparaenum} \item the proposed  multinomial downsampling strategy can be viewed as a data augmentation technique. \item as a consequence of this downsampling, our proposed model is robust to outliers. \end{inparaenum} Now that we have a set of spherical signals $\left\{f_i: \mathbf{S}^2 \rightarrow \mathbf{R}\right\}_{i\in I}$, the next step is to extract invariant features from the spherical signals.

\pichskip{5pt}
\parpic[r][b]{%
  \begin{minipage}{15mm}
  \centering
        \includegraphics[scale=0.30]{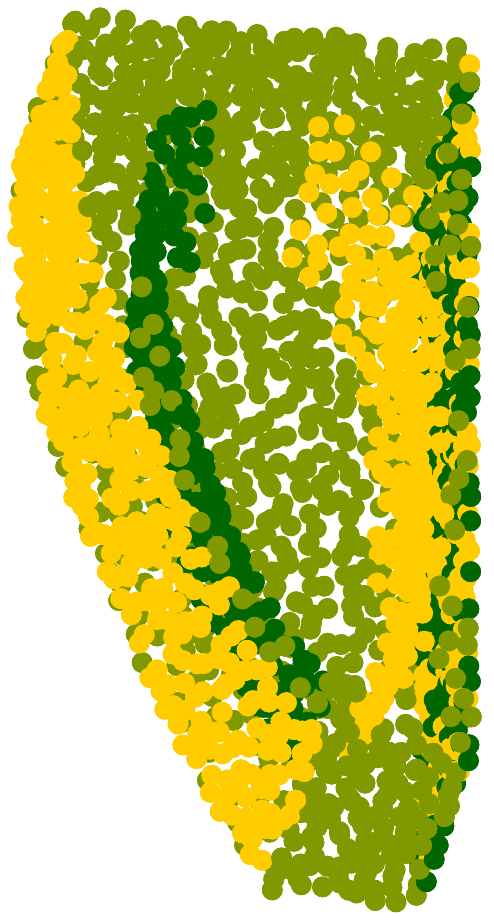}
    \vspace{-1em}
  \end{minipage}
}

{\bf Partitioning the response based on normal directions:} If we have normal information present, we can use this information to separate the collected responses. In Algorithm \ref{classmodel:alg1}, we change the construction of $f_i(\mathbf{y}_j)$ as follows:

\begin{align}
f^l_i(\mathbf{y}_j) = \sum_{{\substack{\mathbf{x}_k \not\in \mathbf{B}^2_r(\mathbf{x}_i) \\ \mathbf{n}_k^t\mathbf{n}_i \in [\frac{(l-1)\pi}{2n}, \frac{l\pi}{2n})}}} \max\left\{0, \mathbf{y}_j^t(\mathbf{x}_k - \mathbf{x}_i)\right\}, 
\end{align}
where, $n$ is the number of partitions we choose based on normal directions and $\mathbf{n}_k$ is the normal direction for $\mathbf{x}_k$. Thus we essentially partition the responses from $\left\{\mathbf{x}_k\right\}$ collected at $\mathbf{S}_r^2$ centered at $\mathbf{x}_i$ based on the similarity of normal at $\mathbf{x}_i$, denoted by $\mathbf{n}_i$ with the normal at $\mathbf{x}_k$, denoted by $\mathbf{n}_k$. An example for partitioning into 3 channels is shown in the adjacent figure.

\subsection{Generating spherical invariant response from the point-cloud:}
Given $\left\{f_i: \mathbf{S}^2 \rightarrow \mathbf{R}\right\}_{i\in I}$, we will use the spherical correlation network as defined in \cite{cohen2017convolutional}. For completeness, we will first briefly introduce the correlation operator before discussing the usage. 

{\bf Correlation operators:} 
\begin{definition}[$\mathbf{S}^2$ correlation]
\label{classmodel:def1}
Given $f:\mathbf{S}^2 \rightarrow \mathbf{R}$ (the signal) and $w:\mathbf{S}^2 \rightarrow \mathbf{R}$ (the learnable kernel), we define the correlation operator $f\star w: \textsf{SO}(3) \rightarrow \mathbf{R}$ as
\begin{align}
\label{classmodel:eq2}
\left(f\star w\right)(g) = \int_{\mathbf{S}^2} f(\mathbf{x}) w(g^{-1}\cdot \mathbf{x}) \omega(\mathbf{x}).
\end{align}
Here, $\omega$ is the chosen volume density on $\mathbf{S}^2$. As showed in \cite{cohen2017convolutional,chakraborty2018cnn}, the above definition of spherical correlation is equivariant to the action of $\textsf{SO}(3)$. 
\end{definition}
 As the output of the spherical correlation operator is a function on $\textsf{SO}(3)$ (the $3\times 3$ special orthogonal group), we will use the correlation operator on $\textsf{SO}(3)$ as defined in \cite{helgason}.
 
\begin{definition}[$\textsf{SO}(3)$ correlation]
\label{classmodel:def2}
Given $f:\textsf{SO}(3) \rightarrow \mathbf{R}$ (the signal) and $w:\textsf{SO}(3) \rightarrow \mathbf{R}$ (the learnable kernel), we define the correlation operator $f\star w: \textsf{SO}(3) \rightarrow \mathbf{R}$ as
\begin{align}
\label{classmodel:eq3}
\left(f\star w\right)(g) = \int_{\textsf{SO}(3)} f(\mathbf{x}) w(g^{-1} \mathbf{x}) \widehat{\omega}(\mathbf{x}).
\end{align}
Here, $\widehat{\omega}$ is the Haar measure on  $\textsf{SO}(3)$. This definition of $\textsf{SO}(3)$ correlation is equivariant to the action of $\textsf{SO}(3)$. 
\end{definition}

We will use the rotational equivariance property of the spherical correlation to extract rotation invariant features from the 3D point set. But in order to do that, we need to address the following questions \begin{inparaenum} \item what is the effect of rotation on the spherical signal? \item how to get invariant features? \end{inparaenum}

{\it What is effect of rotation on the spherical signal:} If we rotate the 3D point-cloud by a rotation matrix $R$, the responses as computed using Algorithm \ref{classmodel:alg1} will be rotated by the same matrix, this is stated in the following proposition. 

\begin{proposition}
\label{classmodel:prop2}
If the point-cloud $X$ is rotated by the matrix $R$, then the corresponding responses $\left\{f_i\right\}$ are rotated by the matrix $R$.
\end{proposition}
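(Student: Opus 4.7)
The plan is a direct computation using only the definition of $f_i$ in equation~(\ref{classmodel:eq1}) together with two elementary facts about rotations: $R$ is an isometry (so it preserves Euclidean norms, hence the ball membership condition $\mathbf{x}_j \not\in \mathbf{B}^2_r(\mathbf{x}_i)$), and $R$ is orthogonal (so $R^t = R^{-1}$, which lets us shuttle the rotation from one side of an inner product to the other). Once these are noted, the statement reduces to a one-line substitution.

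More concretely, I would first fix what ``rotated by $R$'' means for a function on the sphere: given $f:\mathbf{S}^2_r\to\mathbf{R}$, its rotate is $(R\cdot f)(\mathbf{y}) = f(R^{-1}\mathbf{y})$. Let $\widetilde{\mathbf{x}}_i = R\mathbf{x}_i$ denote the rotated point-cloud and let $\widetilde{f}_i$ be the response computed on the rotated cloud centered at $\widetilde{\mathbf{x}}_i$ as prescribed by Algorithm~\ref{classmodel:alg1}. Then I would write
\begin{align*}
\widetilde{f}_i(\mathbf{y}) = \sum_{\widetilde{\mathbf{x}}_j \not\in \mathbf{B}^2_r(\widetilde{\mathbf{x}}_i)} \max\{0,\mathbf{y}^t(\widetilde{\mathbf{x}}_j - \widetilde{\mathbf{x}}_i)\}.
\end{align*}

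Next I would observe that $\|\widetilde{\mathbf{x}}_j - \widetilde{\mathbf{x}}_i\| = \|R(\mathbf{x}_j - \mathbf{x}_i)\| = \|\mathbf{x}_j - \mathbf{x}_i\|$, so the summation index set is identical to the one in the definition of $f_i$: $\widetilde{\mathbf{x}}_j \not\in \mathbf{B}^2_r(\widetilde{\mathbf{x}}_i)$ iff $\mathbf{x}_j\not\in\mathbf{B}^2_r(\mathbf{x}_i)$. Inside the $\max$ I would then rewrite $\mathbf{y}^t(R\mathbf{x}_j - R\mathbf{x}_i) = (R^t\mathbf{y})^t(\mathbf{x}_j - \mathbf{x}_i) = (R^{-1}\mathbf{y})^t(\mathbf{x}_j - \mathbf{x}_i)$, which turns the sum into exactly $f_i(R^{-1}\mathbf{y}) = (R\cdot f_i)(\mathbf{y})$, finishing the proof.

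There is no real obstacle here; the only thing worth being careful about is the bookkeeping on what ``rotated by $R$'' means for a function versus for a point (an active/passive convention issue). As long as one is consistent — using $(R\cdot f)(\mathbf{y}) := f(R^{-1}\mathbf{y})$, which is the convention compatible with the equivariance statement of Definition~\ref{classmodel:def1} — the orthogonality identity $R^t = R^{-1}$ makes the two rotations cancel in the desired way, and the ball condition being preserved under isometries is exactly what lets us identify the summation domains before and after rotation.
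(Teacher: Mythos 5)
Your proposal is correct and follows essentially the same route as the paper's own proof: a direct substitution into the definition of $f_i$ in Eq.~\eqref{classmodel:eq1}, observing that each term $\mathbf{y}^t(\mathbf{x}_j - \mathbf{x}_i)$ becomes $\mathbf{y}^t R(\mathbf{x}_j - \mathbf{x}_i)$ after rotation. You are in fact more careful than the paper, which omits both the check that the summation index set is preserved (since $R$ is an isometry, ball membership is unchanged) and the final orthogonality step $\mathbf{y}^t R(\mathbf{x}_j-\mathbf{x}_i) = (R^{-1}\mathbf{y})^t(\mathbf{x}_j-\mathbf{x}_i)$ identifying the result with $f_i(R^{-1}\mathbf{y})$.
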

\begin{proof}
Given $r>0$ and $\mathbf{x}_i \in X$, let $\mathbf{x}_j \not\in \mathbf{B}^2_r(\mathbf{x}_i)$. Let $\widetilde{X} = RX$ be the rotated point set. Let $\mathbf{y} \in \mathbf{S}^2_r$, then $\mathbf{y}^t\left(\mathbf{x}_j - \mathbf{x}_i\right)$ becomes $\mathbf{y}^t R\left(\mathbf{x}_j - \mathbf{x}_i\right)$ after rotation. And as $f_i$ in Eq. \ref{classmodel:eq1} is sum of he responses $\mathbf{y}^t\left(\mathbf{x}_j - \mathbf{x}_i\right)$ for all $\mathbf{x}_j \not\in \mathbf{B}^2_r(\mathbf{x}_i)$, this concludes the proof.
\end{proof}

Thus by the above proposition, with rotation of the point-cloud, the spherical signal also gets rotated. And observe that as $\mathbf{S}^2$ and $\textsf{SO}(3)$ correlation operators are equivariant to rotations, the output features after cascaded $\mathbf{S}^2$ and $\textsf{SO}(3)$ correlation operators are also rotational equivariant. But, in order to extract rotational invariant features, we will use an invariant layer as defined next. But before that, we will define equivariance and invariance for completeness.

{\bf Equivariance and Invariance:} Given $f:\mathbf{S}^2\rightarrow \mathbf{R}$ and $R \in \textsf{SO}(3)$, an operator on $f$, denoted by $\mathfrak{F}(f):\textsf{SO}(3)\rightarrow \mathbf{R}$ is \begin{inparaenum}[\bfseries (a)] \item {\it equivariant} to the action of $\textsf{SO}(3)$ if
\begin{align}
R\cdot \mathfrak{F}(f) = \mathfrak{F}(R\cdot f), 
\end{align}
\item {\it invariant} to the action of $\textsf{SO}(3)$ if
\begin{align}
\mathfrak{F}(R\cdot f) = \mathfrak{F}(f), 
\end{align}
\end{inparaenum} where, $R\cdot f(\mathbf{x}) := f(R^{-1} \cdot \mathbf{x})$ for all $\mathbf{x} \in \mathbf{S}^2$. 

{\it How to get invariant features:} As the output of the spherical correlation, $f:\textsf{SO}(3)\rightarrow \mathbf{R}$ is equivariant to the action of $\textsf{SO}(3)$, we will integrate $f$ over $\textsf{SO}(3)$ with respect to the Haar measure to get $\textsf{SO}(3)$ invariant feature. This entails ``quotienting'' out the group $\textsf{SO}(3)$, which results the invariant features with respect to $\textsf{SO}(3)$. 

Notice that we will use $\mathbf{S}^2$ correlation followed by cascaded $\textsf{SO}(3)$ correlation with intermediate ReLU and normalization operators, followed by the invariant layer to generate rotation-invariant features.

{\bf Non-linear operator:} In order to design a deep architecture, it is essential to use some amount of non-linearities in-between correlation operators. As the aforementioned correlation operator is $\mathbf{R}$ valued, we can use ReLU operator as our choice of non-linearity.

{\bf Normalization:} It is well-known that normalization is crucial for stability of optimization and even achieving better optimum. We will resort to two types of normalization schemes, namely {\it Batch-normalization} \cite{ioffe2015batch} and {\it Activation-Normalization} \cite{kingma2018glow}.  We use PyTorch implementation of Batchnorm. For Actnorm, given an input tensor $T \in \mathbf{R}^{B \times c \times h \times w}$ ($B, c, h, w$ denotes the batch size, number of channels and spatial resolution respectively.), we learn the scaling tensor $H \in \mathbf{R}^{1\times c\times 1\times 1}$ and bias tensor $B\in \mathbf{R}^{1\times c\times 1\times 1}$ to get the normalized output as $T \mapsto H\odot T + B$.

A schematic depicting the pipeline to extract rotation invariant features from spherical signal is shown in Fig.~\ref{classmodel:fig2}.

 \begin{figure}[!ht]
 \centering
\begin{tikzpicture}
\node[inner sep=0pt] (fig1) at (0,0)
    {\includegraphics[width=0.12\textwidth, height=2.3cm]{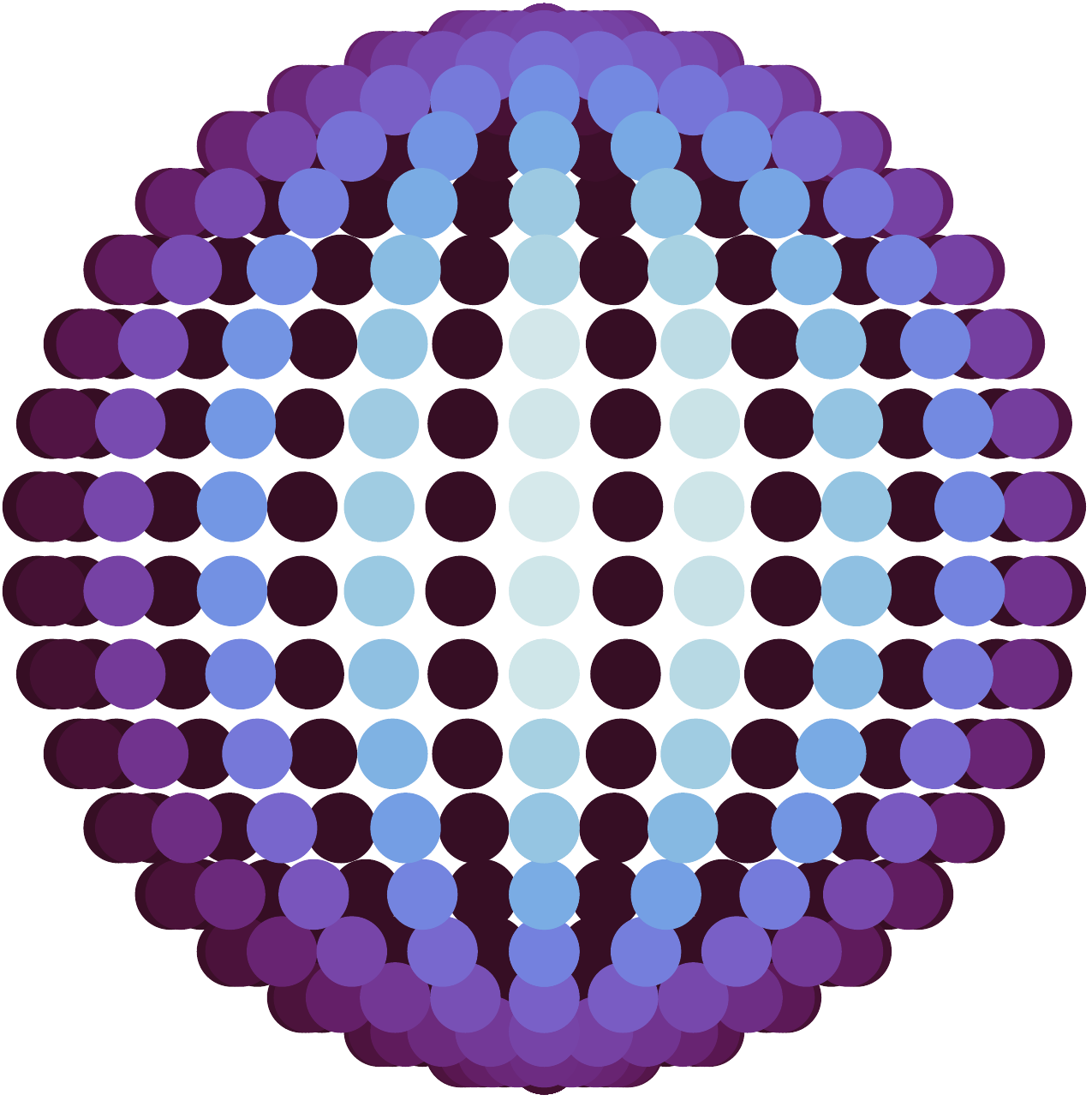}};
\node[inner sep=0pt] (fig2) at (3,0)
    {\includegraphics[width=0.12\textwidth, height=2.1cm]{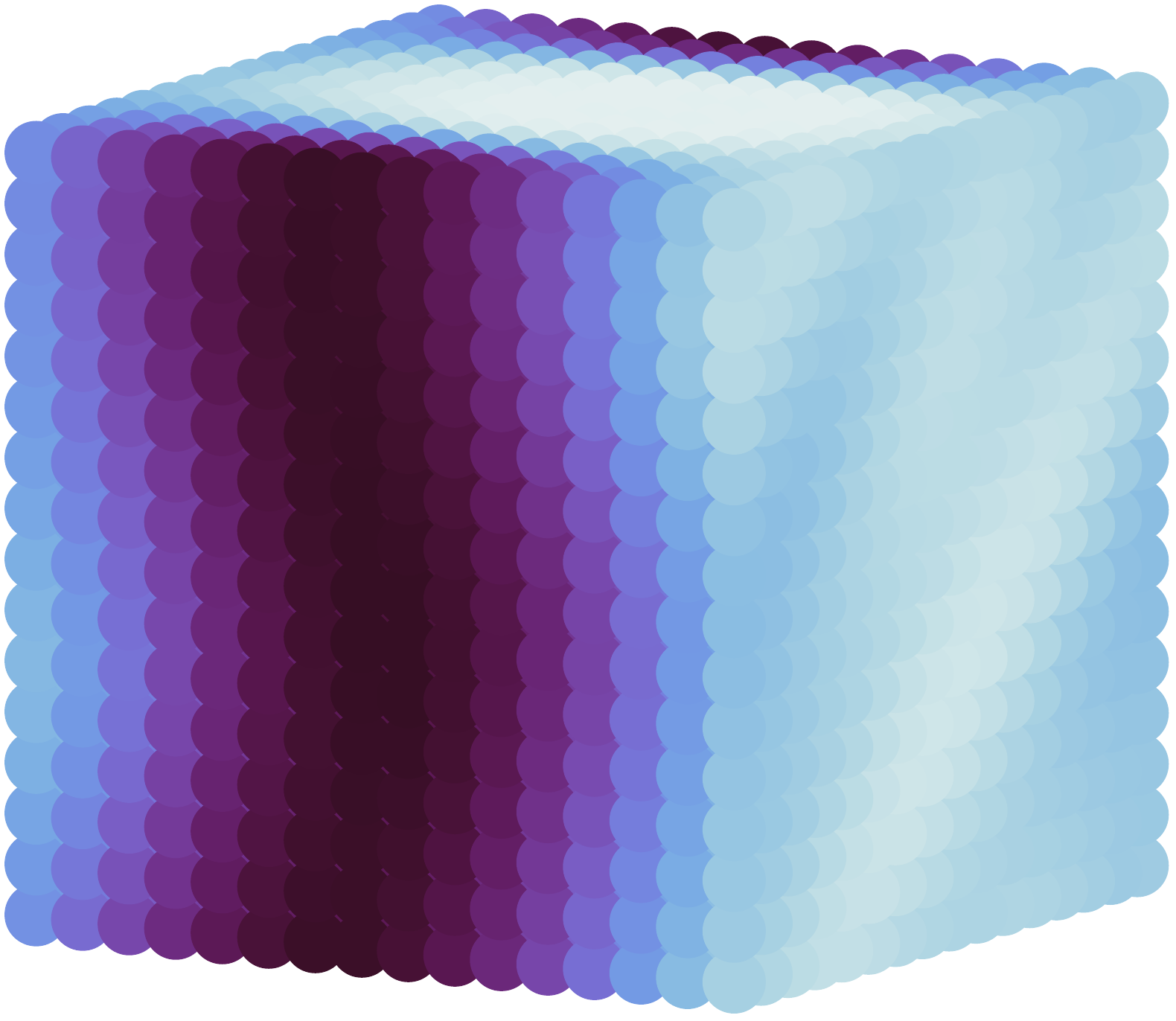}};
    \node[inner sep=0pt] (fig3) at (6,0)
    {\includegraphics[width=0.12\textwidth, height=2.1cm]{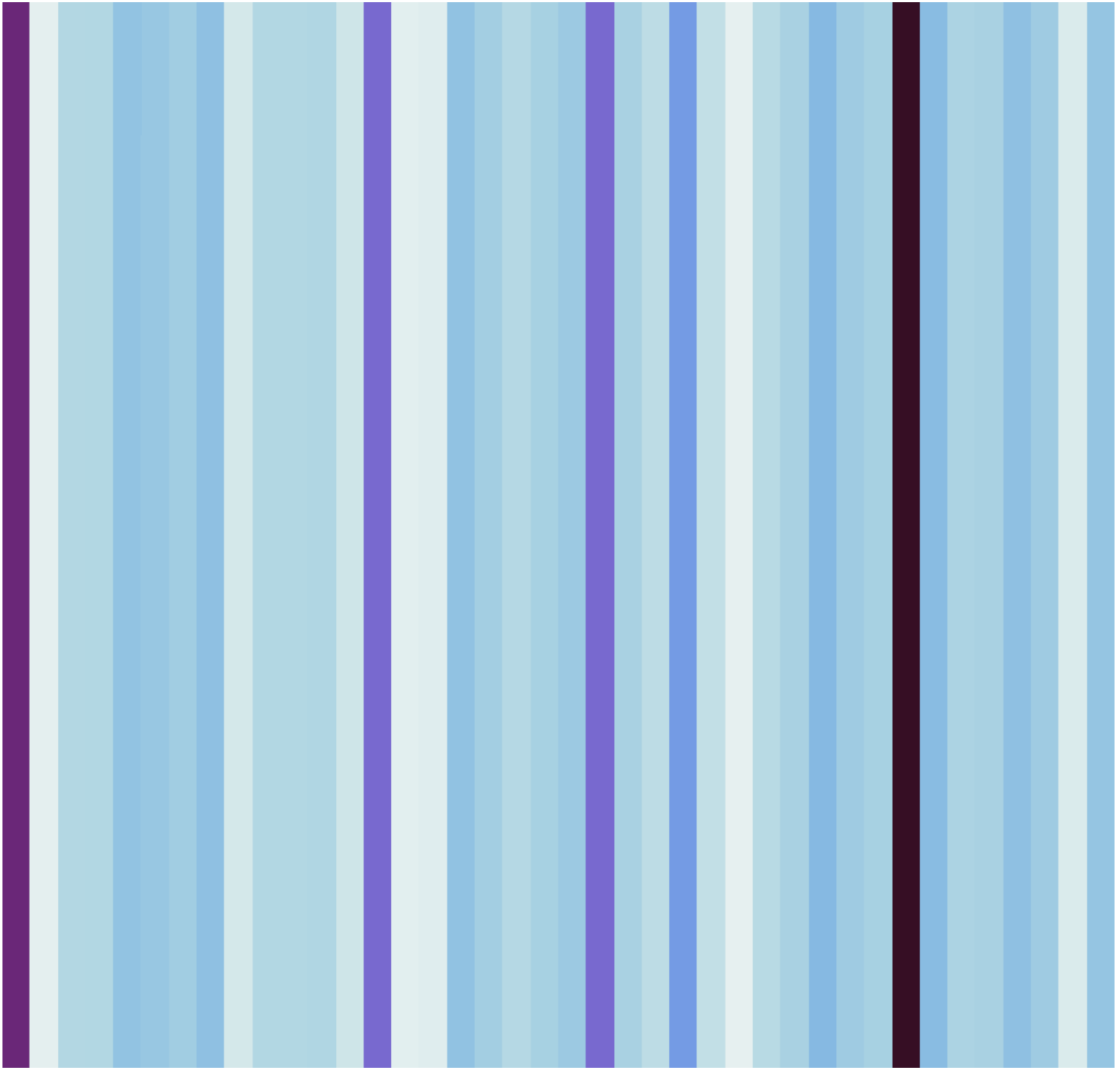}};
    \draw[->, line width=0.8mm] (1.1,0) -- (1.9, 0);
    \draw[->, line width=0.8mm] (4.1,0) -- (4.9, 0);
\end{tikzpicture}
\caption{A schematic from response on a sphere to the final invariant layer.}
      \label{classmodel:fig2}
\end{figure}

\subsubsection{Reduction of parameters}

Before moving forward with the rest of our proposed model architecture, we will introduce a leaner variant of spherical correlation operator, {\it Tensor Ring}. Observe that the correlation operators defined in Eqs. \ref{classmodel:eq2}, \ref{classmodel:eq3} are going to be computed in Fourier basis of the respective domain. For example, on $\mathbf{S}^2$, we use Spherical Harmonics basis \cite{hobson1931theory} to compute spherical correlation operator. Thus, for spherical correlation, we denote the learnable kernel $w:\mathbf{S}^2 \rightarrow \mathbf{R}$ with the corresponding coefficient (with respect to Spherical Harmonics basis) matrix $W\in \mathbf{R}^{C_{\text{in}}\times N\times N\times C_{\text{out}}}$, where, $C_{\text{in}}$, $C_{\text{out}}$ are the input and output number of channels and $N \times N$ is the resolution of the parametrization of $\mathbf{S}^2$. 

  \begin{figure}[!ht]
 \centering
      \includegraphics[scale=0.45]{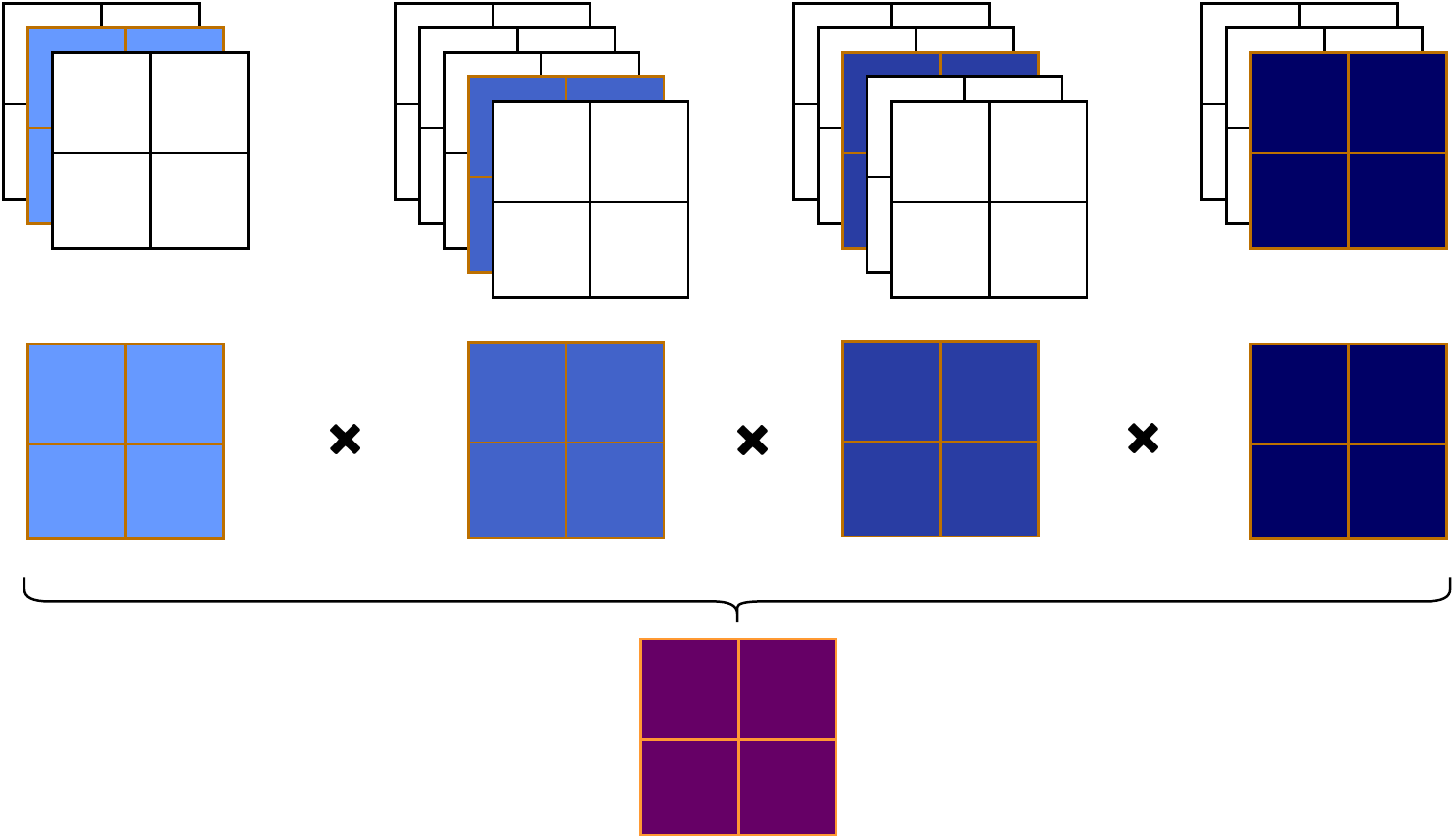}
       \caption{An example demonstrating Tensor Ring as given in Eq. \ref{eq_tr} with $k=4, p=2, n_1=3, n_2 = 5, n_3 = 5, n_4 = 3$. Here after the multiplication of tensor cores we use trace operator on the result.}
      \label{classmodel:figm3}
\end{figure}

Any tensor of the size $W\in \mathbf{R}^{n_1\times \cdots n_k}$ can be decomposed using Tensor Ring \cite{zhao2016tensor} as follows:
\begin{align}
\label{eq_tr}
W(i_1, \cdots i_k) = \text{trace}\left(T_1(i_1)\cdots T_k(i_k)\right),
\end{align}
where, $T_j(i_j) \in \mathbf{R}^{p\times p}$ and $i_j \in [1, \cdots n_j]$ for all $j$. Here $p$ is the size of the core of the tensor ring. It has been shown that under some assumptions any tensor can be decomposed in tensor ring form with arbitrary approximation error \cite{zhao2016tensor}. An example demonstrating tensor ring in Fig. \ref{classmodel:figm3}. Notice that this form of decomposition amounts of learning $p^2(\sum_{j=1}^k n_j)$ number of parameters compared to $\prod_{j=1}^k n_j$ number of parameters. To give a concrete example, for a tensor $W \in \mathbf{R}^{10\times 20\times 20 \times 40}$ with a core size $p=5$, using tensor ring we use $2250$ number of parameters instead of $160000$, which amounts to {it $98.59\%$ parameters reduction.}

In our implementation of spherical correlation, we use the tensor ring form in order to implement the correlation operator. We will use $\Phi^l$ to denote the spherical correlation block to extract rotation invariant local features. After extracting this features from each of the selected points, $\left\{\mathbf{x}_i\right\}_{i\in I}$, our algorithm to combine features is different for classification and segmentation. Below, we will first discuss our proposed scheme for segmentation followed by classification. 

\subsection{Combining local and global features:} Before describing the scheme to combine the extracted features,  we will first discuss the necessity of extracting global features from a point-cloud. {\it Why global features are needed?} It is obvious that global features are helpful for classification, though correlation is a powerful operator to extract local features, even for standard (Euclidean) correlation operators, aggregating local features, e.g., pooling is a necessity. As for classification we need to find a single consensus for an entire point-cloud, aggregating local features to get a global response is desired. Moreover in order to do semantic/part segmentation, just local feature at each point in the point-cloud is not enough as we need to find spatial context of each point with respect to the global structure, e.g., in order to do part segmentation of a earphone in terms of head band and ears, we need information in addition to local features. An example showing the usefulness of global features is shown in Fig. \ref{classmodel:figm4}. Here the two selected points have similar local features  (after quotient out rotations) but they belong to two different classes. Hence the need for global information is justified.

\begin{table}
\centering
\renewcommand{\arraystretch}{3}
\begin{tabular}{ccc}
    \centering
  \multirow{4}[2]{*}[3mm]{\includegraphics[width=4cm]{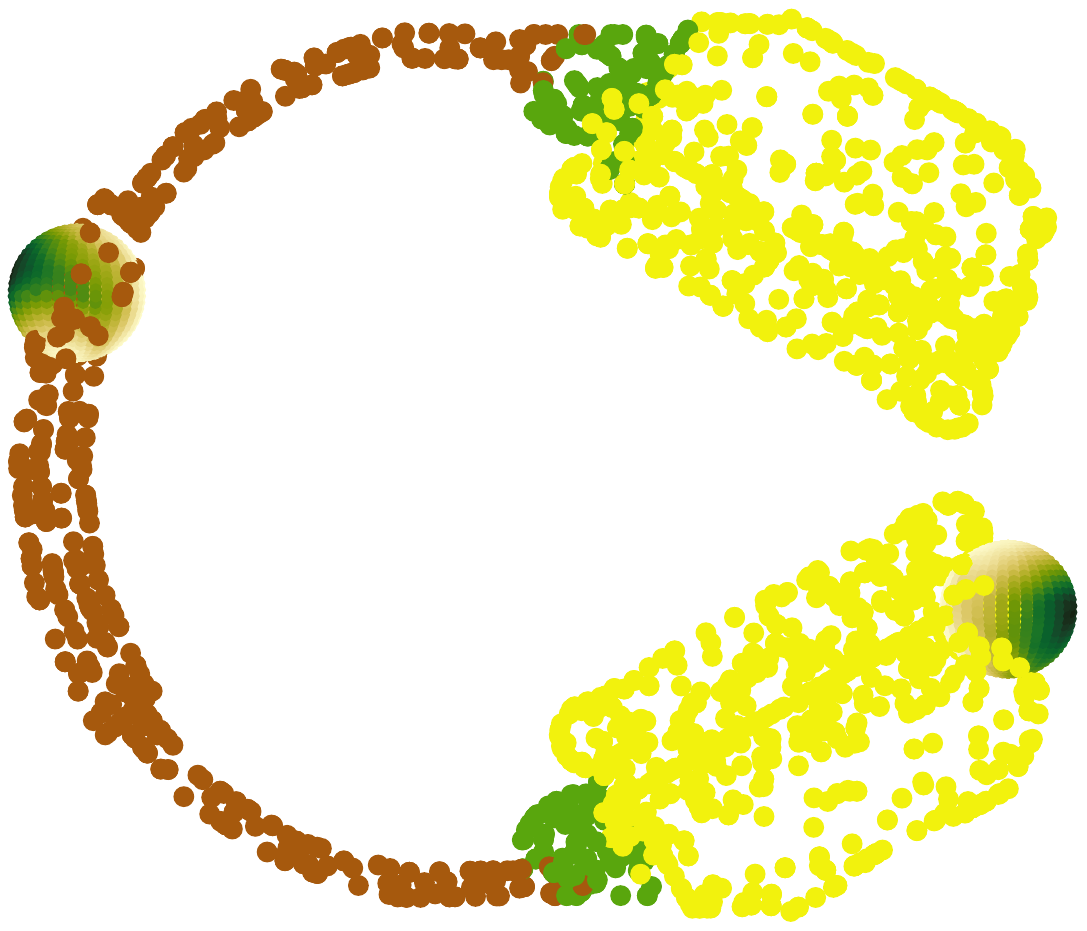}}    & \multirow{2}[2]{*}[2mm]{\includegraphics[width=1.5cm]{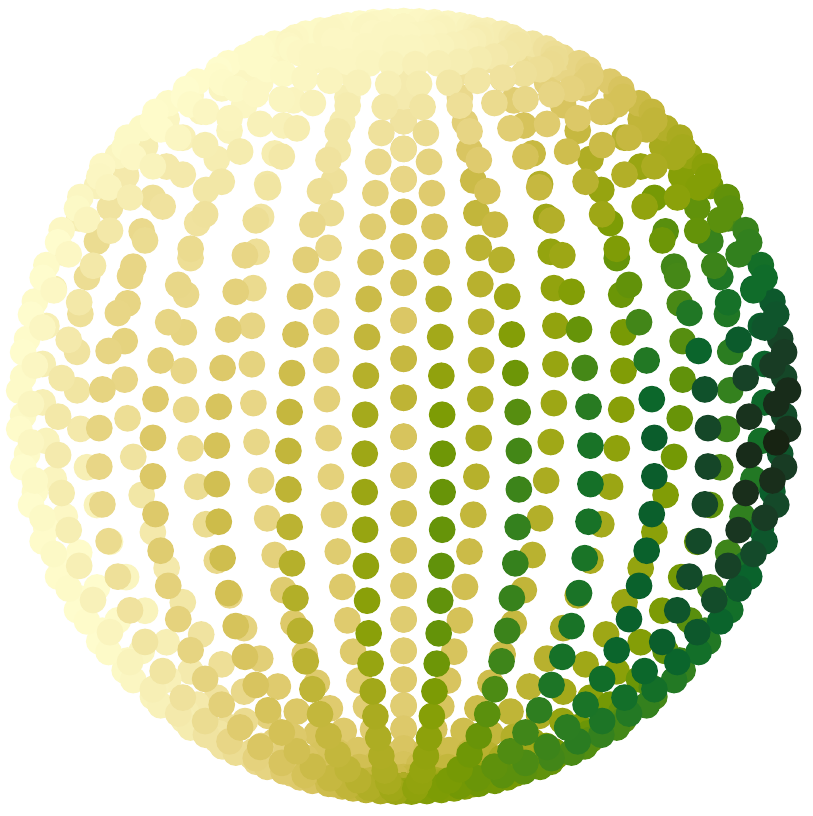}} \bigstrut \\
   & & \\
 
                                                                             & \multirow{1}[1]{*}[2mm]{\includegraphics[width=1.5cm]{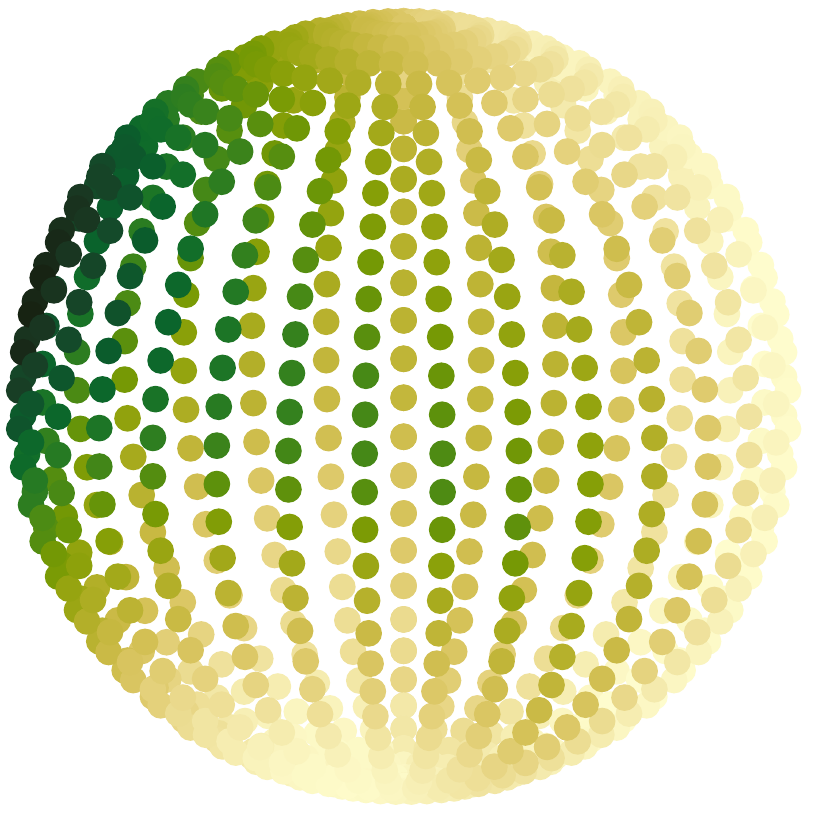}}  \\
\end{tabular}
\vspace{1em}
\captionof{figure}{Local responses across different classes look similar.}
\label{classmodel:figm4}
\vspace{-1em}
\end{table}

Now that we justify the necessity of the global features, we propose our scheme to extract global features. We divide our scheme into two subsections, namely \begin{inparaenum}[\bfseries (1)] \item extraction and combining features for segmentation and \item extraction and combining features for classification. \end{inparaenum}

\subsubsection{Extraction of global features: Segmentation}
Given a non-empty $S = \left\{\mathbf{x}_i\right\}_{i\in I} \subset X$ as the given convex hull for the point-cloud $X$, we will define global {\it affine coordinate} of each $\mathbf{x}_i$ as follows: If 
\begin{align}
\label{classmodel:eq4}
\mathbf{x}_i = \sum_{j|\mathbf{x}_j \in S} c_{ij} \mathbf{x}_j,
\end{align}
then, we will say $(c_{i1}, \cdots, c_{i|S|})^t$ to be the affine coordinates of $\mathbf{x}_i$ where $\sum_j c_{ij}=1$. Let $\Phi^g_S: X \rightarrow \mathbf{A}^{|S|} \subset \mathbf{R}^{|S|}$ be the function which returns affine coordinates for each $\mathbf{x}_i$ given the convex hull $S$. Notice that here $\mathbf{A}^n$ is the affine subspace of $\mathbf{R}^n$ and $|S|$ is the cardinality of set $S$. It is easy to see that the affine coordinate is rotation invariant as stated formally in the next proposition.
\begin{proposition}
\label{classmodel:prop3}
Given $S$, $\Phi^g_S(R\cdot \mathbf{x}_i) = \Phi^g_S(\mathbf{x}_i)$ for all $R\in \textsf{SO}(3)$ and for all $\mathbf{x}_i \in X$. 
\end{proposition}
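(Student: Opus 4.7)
The plan is to exploit the linearity of the rotation action together with the invariance of the affine-combination constraint $\sum_j c_{ij}=1$. I would interpret the statement in the natural way suggested by Eq.~\ref{classmodel:eq4}: when the point-cloud $X$ is rotated, both $\mathbf{x}_i$ and the reference set $S\subset X$ are rotated together, and the claim is that the coefficient vector produced by $\Phi^g_S$ is unchanged.

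First I would start from the defining equation $\mathbf{x}_i=\sum_{j\mid \mathbf{x}_j\in S} c_{ij}\mathbf{x}_j$ with $\sum_j c_{ij}=1$. Applying any $R\in\textsf{SO}(3)$ to both sides and using linearity of $R$, I obtain
\begin{align}
R\cdot\mathbf{x}_i \;=\; R\Bigl(\sum_{j\mid \mathbf{x}_j\in S} c_{ij}\,\mathbf{x}_j\Bigr) \;=\; \sum_{j\mid \mathbf{x}_j\in S} c_{ij}\,(R\cdot\mathbf{x}_j).
\end{align}
The affine constraint $\sum_j c_{ij}=1$ is untouched by this manipulation, so the tuple $(c_{i1},\dots,c_{i|S|})$ realizes $R\cdot\mathbf{x}_i$ as an affine combination of the rotated basis $R\cdot S$. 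Therefore $\Phi^g_{R\cdot S}(R\cdot\mathbf{x}_i)=\Phi^g_S(\mathbf{x}_i)$, which is the desired invariance once $S$ is transported with $X$ as the notation in the proposition intends.

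The single delicate point, which I view as the main obstacle, is well-posedness of the coordinates themselves. In $\mathbf{R}^3$ the affine system has a unique solution only when $|S|\le 4$ and the points are in general position; for a generic convex hull with $|S|>4$ there is a whole affine subspace of valid coefficient vectors. To rescue the statement one must fix a canonical selection rule (for example, the minimum-norm / Moore–Penrose solution of the underdetermined linear system subject to $\sum_j c_{ij}=1$). The argument above then needs one extra step: show that this canonical selection is itself equivariant, i.e.\ commutes with the $R$-action on both the right-hand side point and the basis. For the pseudoinverse this follows because the design matrix transforms as $A\mapsto RA$ while its Gram matrix $A^tA$ is invariant, hence the minimum-norm solution is preserved; consequently the unique $c_{ij}$ vector chosen for $\mathbf{x}_i$ is the same vector chosen for $R\cdot\mathbf{x}_i$ under $R\cdot S$, and the proposition follows by contraposing with the first paragraph's identity.
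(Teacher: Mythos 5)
Your proof is correct and rests on the same core observation as the paper's: the paper's entire argument is the single sentence that the claim ``follows from the linearity of the affine coordinates as defined in Eq.~\ref{classmodel:eq4},'' which is exactly your first paragraph. Where you go beyond the paper is the well-posedness discussion. The paper never acknowledges that for $|S|>4$ the affine coordinates of a point of $\mathbf{R}^3$ are not unique, so $\Phi^g_S$ is not a well-defined map until a selection rule is fixed; your first paragraph alone only shows that the old coefficient vector remains \emph{a} valid solution for the rotated point, not that it is \emph{the} one returned. Your fix is sound, and can even be stated more simply than via the Gram matrix: since $R$ is invertible, the feasible set $\left\{c \,:\, \sum_j c_{ij}(R\mathbf{x}_j)=R\mathbf{x}_i,\ \sum_j c_{ij}=1\right\}$ is literally identical to the unrotated one, so any selection rule depending only on that set (minimum-norm or otherwise) is automatically equivariant. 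The one remaining caveat is interpretive rather than mathematical: as printed, the proposition keeps the same subscript $S$ on both sides, and the statement is false if $S$ is held fixed while only $\mathbf{x}_i$ rotates; your reading, in which $S$ is transported along with $X$ so that the conclusion is $\Phi^g_{R\cdot S}(R\cdot\mathbf{x}_i)=\Phi^g_S(\mathbf{x}_i)$, is clearly what the paper intends.
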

\begin{proof}
The proof follows from the linearity of the affine coordinates as defined in Eq. \ref{classmodel:eq4}.
\end{proof}
Now, that we have local and global rotation invariant features, we combine them to do segmentation as illustrated in Algorithm \ref{classmodel:alg2}.
 \begin{algorithm}
 \KwData{Input $X=\left\{\mathbf{x}_i\right\}$, $S=\left\{\mathbf{x}_i\right\}_{i\in I}\subset X$, $\Phi^l, \Phi^g_S$}
 \KwResult{$\Phi^{lg}_S$}
For each $\mathbf{x}_j \in S$, compute $\Phi^l(\mathbf{x}_j)$ \;
For each $\mathbf{x} \in X$, interpolate $\Phi^l(\mathbf{x})$ from $\left\{\Phi^l(\mathbf{x}_j)\right\}_{\mathbf{x}_j\in I}$ according to Algorithm \ref{classmodel:alg3} \;
For each $\mathbf{x} \in X$, compute $\Phi^g_S(\mathbf{x}$ according to Eq. \ref{classmodel:eq4}\;
For each $\mathbf{x} \in X$, use a self-attention block to compute the probabilities for local and global features (as defined in Algorithm \ref{classmodel:alg4}). Let the probabilities be denoted by $p_l$ and $p_g$ respectively \;
Define $\Phi^{lg}_S$ to be $\mathbf{x} \mapsto \left(p_l(\mathbf{x}) \Phi^l(\mathbf{x}) | p_g(\mathbf{x}) \Phi^g_S(\mathbf{x})\right)^t$, where $|$ denotes the concatenation of local and global features;
 \caption{Extract features from $X$.}
 \label{classmodel:alg2}
\end{algorithm}

In the following algorithm, we will give the interpolation technique to interpolate the local features from the responses on $S$ (a schematic is shown in Fig.~\ref{classmodel:fig3}). Together with this and Algorithm \ref{classmodel:alg2}, we have a rotation invariant combined local and global features which we will use for segmentation using fully connected (FC) layer(s).
 
  \begin{figure}[!ht]
 \centering
      \includegraphics[scale=0.4]{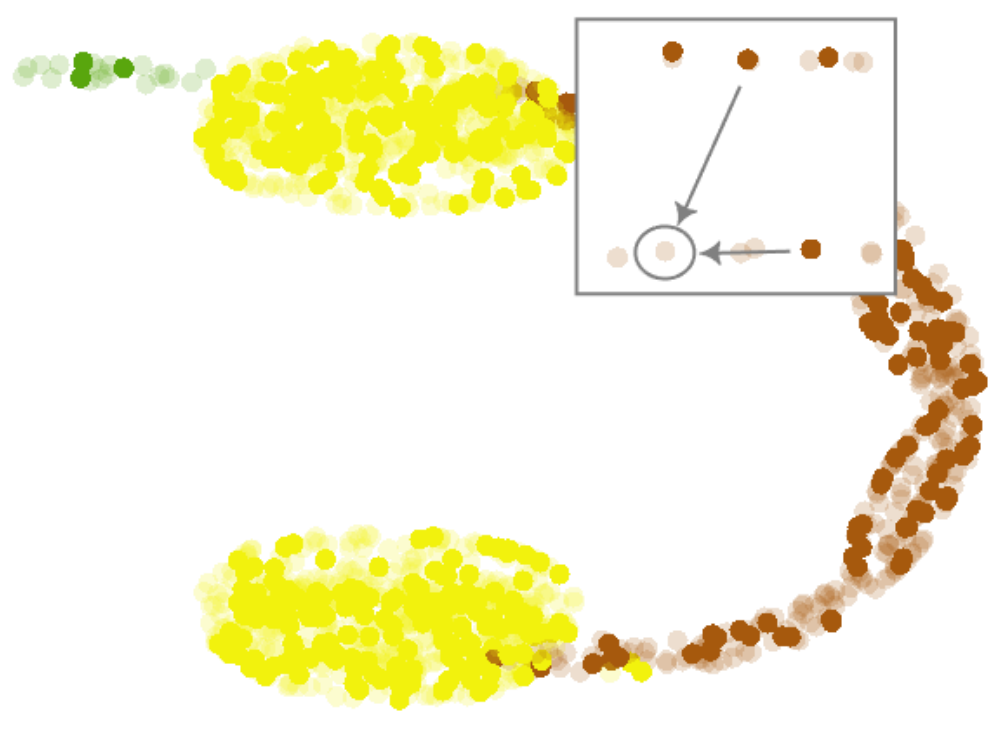}
       \caption{The dark brown points are the nearest points of the light brown point whose features need to be interpolated.}
      \label{classmodel:fig3}
\end{figure}
 
 \begin{algorithm}
 \KwData{Input $X=\left\{\mathbf{x}_i\right\}$, $S=\left\{\mathbf{x}_i\right\}_{i\in I}\subset X$, $\Phi^l(S)$, $k\geq 2$}
 \KwResult{$\Phi^{l}(X)$}
For each $\mathbf{x} \in X$, we find the $k$-nearest neighbors in $S$, denoted by $\mathcal{N}_{\mathbf{x}} \subset S$ \;
For each $\mathbf{y}\in \mathcal{N}_x$, we assign the weight, $w_{\mathbf{xy}} = \sfrac{1}{\|\mathbf{x}-\mathbf{y}\|^2}$ \;
For each $\mathbf{x} \in X$, interpolate $\Phi^l(\mathbf{x}) = \sum_{\mathbf{y}\in S}  w_{\mathbf{xy}} \Phi^l(\mathbf{y})$ \;
\caption{Interpolation of rotation invariant local features.}
 \label{classmodel:alg3}
\end{algorithm}

 \begin{algorithm}
 \KwData{Input $X=\left\{\mathbf{x}_i\right\}$, $\left\{\Phi^l(X)\right\}$}
 \KwResult{$(p_l, p_g)^t$}
For each $\mathbf{x} \in X$, use $\Phi^l$ on the $k$-nearest neighbors $\mathcal{N}_{\mathbf{x}}$ \;
We use 1D correlation with kernel size $1$ to look at the interaction between the neighbors \;
We use multiple FC layers to learn feature followed by a softmax layer to extract $(p_l(\mathbf{x}), p_g(\mathbf{x}))^t$ for all $\mathbf{x} \in X$ \;
\caption{Self-attention block to combine local and global features.}
 \label{classmodel:alg4}
\end{algorithm}

A schematic of our proposed segmentation pipeline is shown in Fig.~\ref{classmodel:fig4}.

\begin{figure*}[!ht]
 \centering
      \includegraphics[scale=0.22]{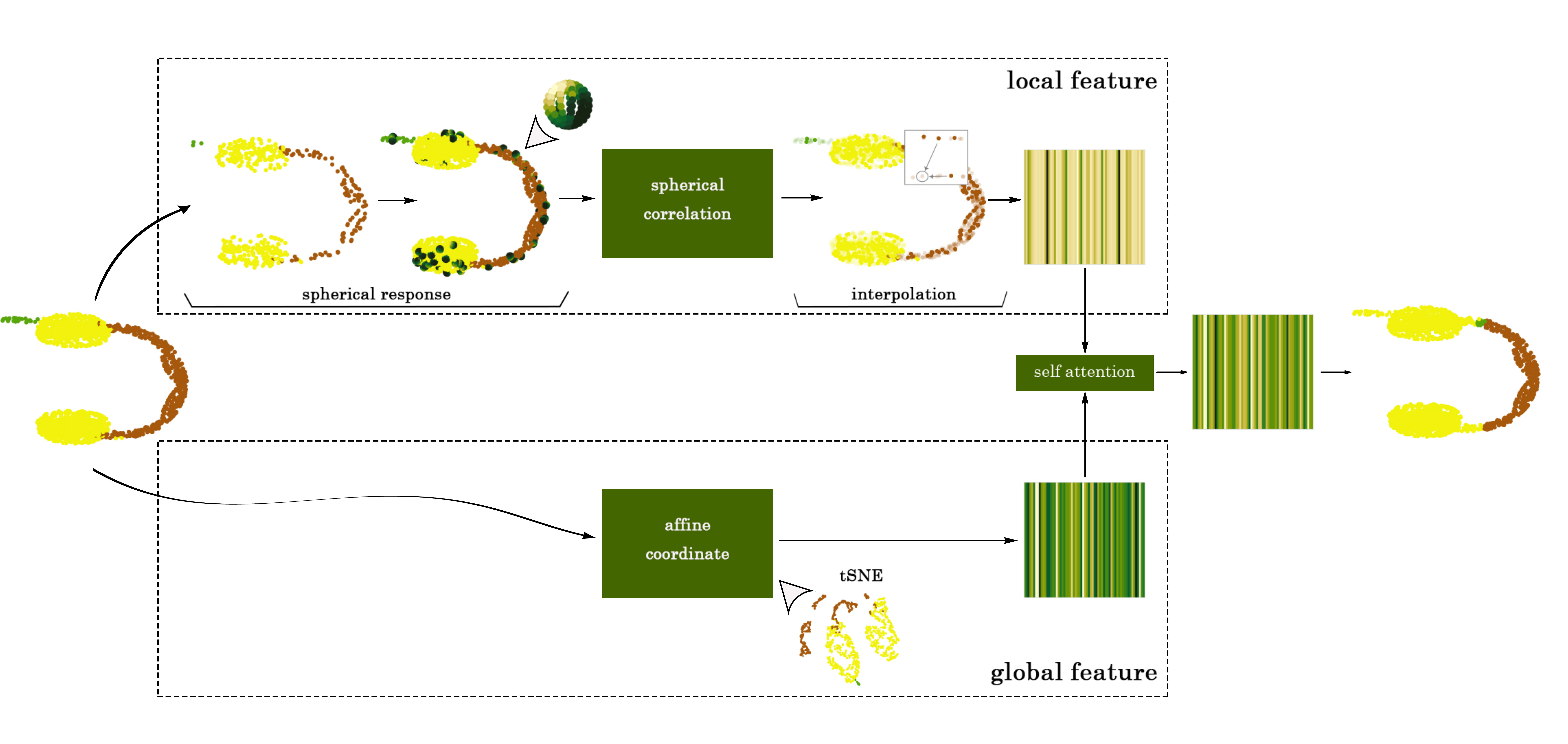}
       \caption{A schematic of the pipeline for segmentation block. As affine coordinates is in high dimension, we use a tSNE 2D embedding for visualization.}
      \label{classmodel:fig4}
\end{figure*}
 \subsubsection{Extraction of global features: Classification}
After extraction of rotation invariant spherical local features, we aggregate the local features using either one of the following two ways over the entire point-cloud $X$, \begin{inparaenum}[\bfseries (a)] \item maxpooling the local features over $S$. \item use $1\times 1$ correlation operator over $S$ to combine the local features.\end{inparaenum} The usage of maxpooling to extract global features will make sure that the extracted features are permutation invariant. A schematic of our proposed classification pipeline is shown in Fig.~\ref{classmodel:fig5}, which shows the rotation invariance property on the rotated and non-rotated examples.

\begin{figure*}[!ht]
 \centering
      \includegraphics[scale=0.18]{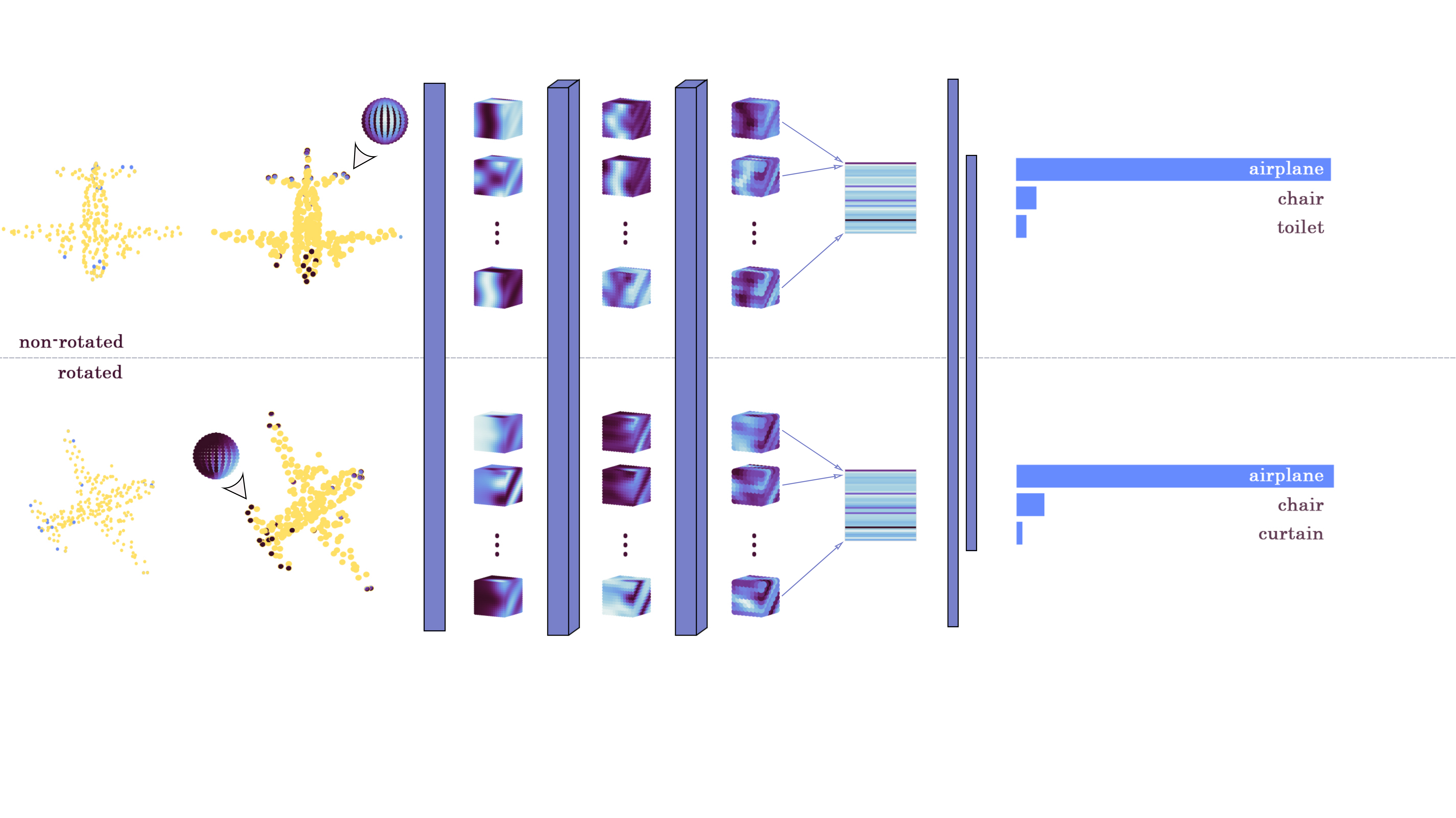}
       \caption{A schematic of the pipeline for classification block. The vertical blocks denote $\mathbf{S}^2$, $\textsf{SO}(3)$, $\textsf{SO}(3)$ and FC layers respectively with filter outputs in-between. The response before the FC layer denotes the output after integration.}
      \label{classmodel:fig5}
\end{figure*}

\subsection{(Implicit) Data augmentation}

In this section, we will talk about data augmentation in our proposed framework. As argued so far that our proposed model is rotational invariant, which makes our model implicitly {\it rotation invariant}, this entails an implicit data augmentation. As pointed out before, the proposed downsampling step amounts of doing explicit data augmentation of {\it random sampling}. In the following section, we will talk about another kind of explicit data augmentation, namely {\it random deformation}. Before explaining deformation augmentation in detail, we like to remind the readers about the benefits of these variants of augmentations: \begin{inparaenum}[\bfseries (a)]  \item implicit rotation augmentation makes it rotation invariant \item random sampling augmentation makes it robust to noise \item random deformation augmentation makes it small deformation invariant. \end{inparaenum}

\subsubsection{Deformation augmentation}
In this section, we propose a scheme to make our model robust to small deformations. Given $X$ as the input point-cloud, we apply some amount of deformation of the coordinates in the following way: use a neural network consists of fully connected layers and $\tanh$ activation functions. After deforming the points, we apply Algorithms \ref{classmodel:alg1}-\ref{classmodel:alg3} to extract the spherical features $\Phi^l(X)$.

\begin{figure*}[!ht]
    \centering
   \includegraphics[scale=0.45]{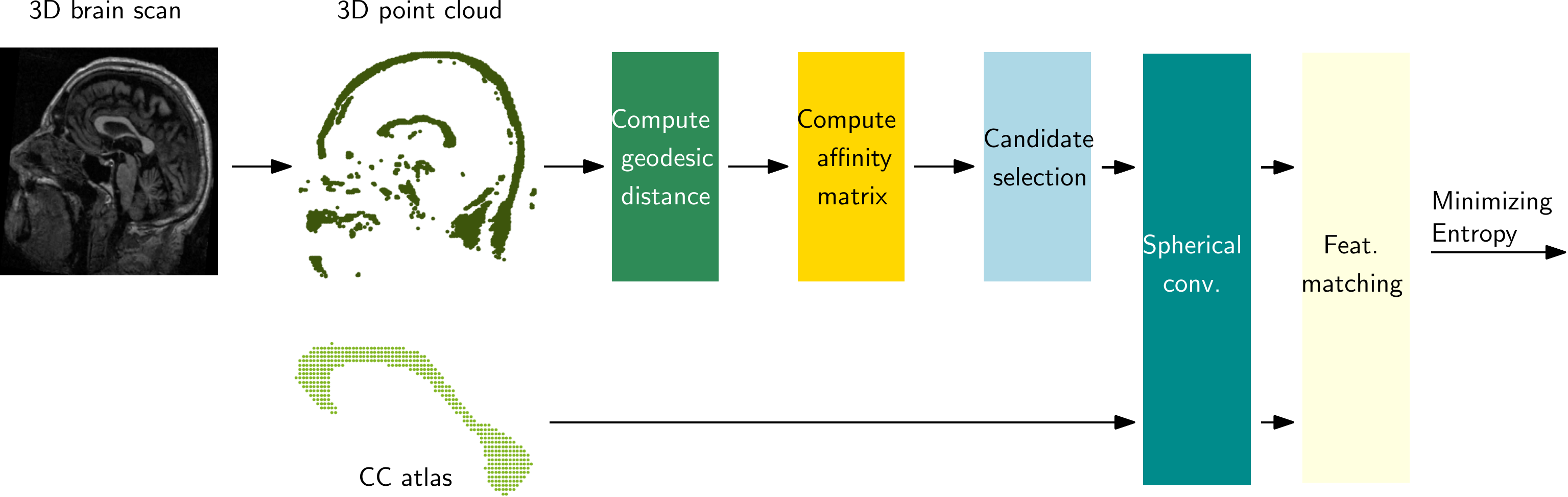}
\caption{A schematic for segmentation from 3D point-cloud extracted from 3D brain scan.}
\label{fig:cc_seg_flow}
\end{figure*}

We then combine these features with the global features. An example showing the deformed point-cloud is given in Fig. \ref{fig:cc_deform}.

\begin{figure}[!ht]
    \centering
   \includegraphics[scale=0.45]{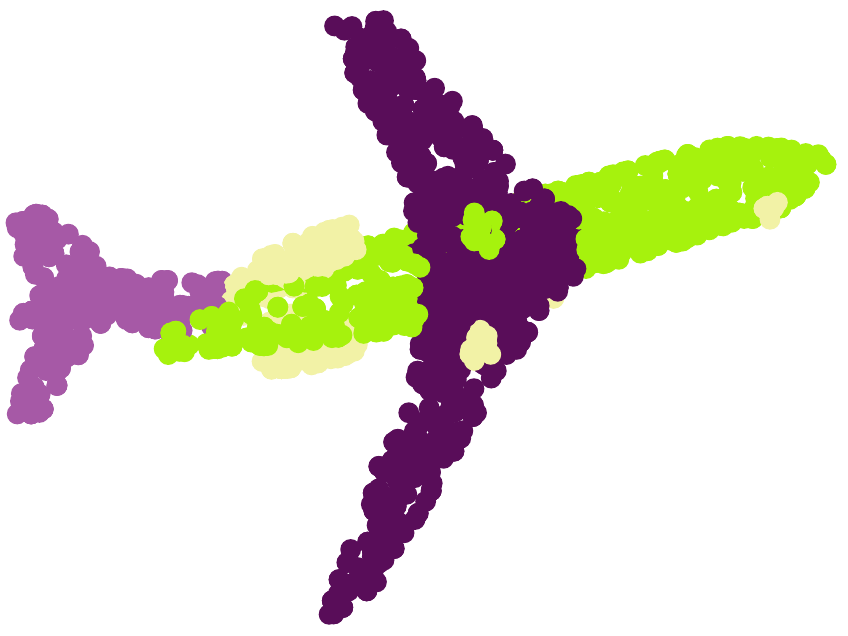}
   \includegraphics[scale=0.45]{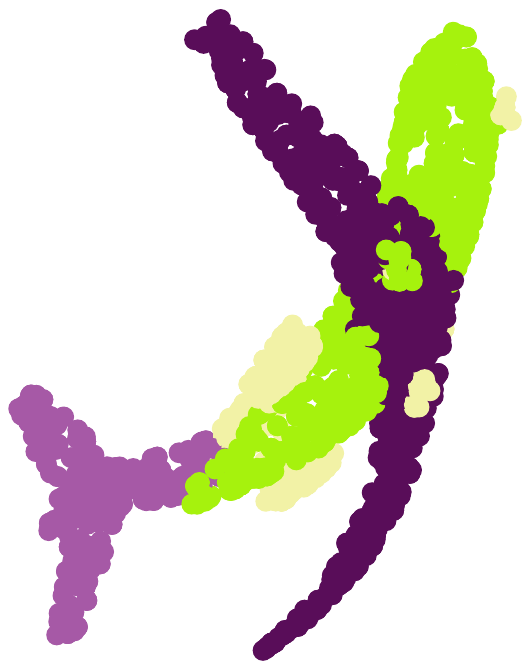}
\caption{Original point-cloud and together with its corresponding deformed coordinates.}
\label{fig:cc_deform}
\end{figure}

Due to the inherent property of rotation invariance, we name our model ``Rotation Invariant omni-directional network for pointsets'' (or `POIRot' as a short name). In the following section, we will show an extensive comparison on POIRot with the state-of-the-art methods on benchmark datasets for both classification and segmentation tasks. 

\section{Experimental Results}\label{sec:results}
In this section, we present experimental validations in a supervised setting for classification and segmentation tasks. In classification task, we use three datasets, including MNIST, Modelnet40 and OASIS dataset. In segmentation task, we do part segmentation on Shapenet dataset. We also present an experiment for unsupervised object detection task from 3D brain shape. 

\subsection{Part Segmentation}
One of the challenging tasks in 3D point-cloud data processing is to do part segmentation. This task entails to segment each point-cloud in separate categorical labels. We evaluate the performance of our proposed model on Shapenet part segmentation dataset \footnote{Due to noise present in the labels, we relabel some of the labels of the dataset.} \cite{chang2015shapenet}. This dataset consists of  $16,881$ shapes from $16$ types of objects annotated with $50$ parts. We have performed similar experimental setup as proposed in \cite{pointnet}. Our proposed segmentation scheme outperforms the state-of-the-art methods by $>3\%$ in terms of {\it mean intersection over union} (mIoU) evaluation metric. The comparative results in terms of mIoU for different categories of objects are given in Table \ref{tab:partseg}. We can see that the proposed method performs consistently well for all objects, while makes significant performance improvement for some objects including airplane and table. Some representative part segmentation results are shown in Fig. \ref{fig:seg_plots}. We show some non-obvious mistakes in annotations for ground truth and prediction by orange and blue circle respectively.  One can also see some obvious mistakes in ground truth annotations, e.g., in last row first and fifth columns. 

In terms of model complexity, we can see from Fig. \ref{fig:number_params} that our proposed method, POIRot takes $\approx 0.1-1\%$ compared with \cite{pointnet}, as this method is a representative of the most commonly used state-of-the-art algorithms. This clearly indicates the usefulness of a leaner model which not only makes it suitable for low-memory devices but also makes the optimization simpler and hence it achieves better optimum. This clearly indicates the usefulness of the proposed model. 

\begin{table*}[!ht]
\begin{center}

\resizebox{0.95\textwidth}{!}{

\begin{tabular}{>{\columncolor[gray]{0.95}}l|c|ccccccccccccccccc} 
\hline
\rowcolor{LightCyan}

                                       & \color{black}{Avg.}          & \color{black}{airplane}          & \color{black}{bag}           & \color{black}{cap}           & \color{black}{car}      & \color{black}{chair}          & \color{black}{earphone} & \color{black}{guitar} & \color{black}{knife} & \color{black}{lamp} & \color{black}{laptop} & \color{black}{motorbike} & \color{black}{mug} & \color{black}{pistol} & \color{black}{rocket} & \color{black}{skateboard} & \color{black}{table}         \\ 
\hline
\rowcolor{LightCyan2} \# shapes &              & 2690& 76& 55& 898& 3758& 69& 787& 392& 1547& 451& 202& 184& 283& 66& 152& 5271\\ 
\hline
3DCNN \cite{pointnet}                         & 79.4& 75.1& 72.8& 73.3& 70.0& 87.2& 63.5& 88.4& 79.6& 74.4& 93.9& 58.7& 91.8& 76.4& 51.2& 65.3& 77.1 \\
PointNet\cite{pointnet}                                & 83.7& 83.4& 78.7& 82.5& 74.9& 89.6& 73.0& 91.5& 85.9& 80.8& 95.3& 65.2& 93.0& 81.2& 57.9& 72.8& 80.6   \\
PointNet++ \cite{pointnet++}                            & 85.0& 82.4& 79.0& 87.7& 77.3& 90.8& 71.8& 91.0& 85.9& 83.7& 95.3& 71.6& 94.1& 81.3& 58.7& 76.4& 82.6      \\
FCPN   \cite{rethage2018fully}                                & 81.3& 84.0& 82.8& 86.4& \textbf{88.3}& 83.3& 73.6& \textbf{93.4}& 87.4& 77.4& \textbf{97.7}& \textbf{81.4}& 95.8& 87.7& 68.4& 83.6& 73.4   \\
DGCNN  \cite{DGCNN}             & 81.3& 84.0& 82.8& 86.4& 78.0& \textbf{90.9}& 76.8 & 91.1& 87.4& 83.0& 95.7& 66.2& 94.7& 80.3& 58.7& 74.2& 80.1 \\ 
PointCNN \cite{li2018pointcnn} & 84.9 & 82.7 & 82.8 & 82.5 & 80.0   & 90.1 & 75.8 & 91.3 & \textbf{87.8} & 82.6 & 95.7 & 69.8 & 93.6 & 81.1 & 61.5 & 80.1 & 81.9\\
SplatNet \cite{su2018splatnet}               & 84.6 & 81.9 & \textbf{83.9} & \textbf{88.6} & 79.5 & 90.1 & 73.5 & 91.3 & 84.7 & \textbf{84.5} & 96.3 & 69.7 & 95.0   & 81.7 & 59.2 & 70.4 & 81.3      \\
\hline
POIRot                   & \textbf{88.3} & \textbf{85.0} & 82.7 & \textbf{88.6}   & 79.5 & 89.6 & \textbf{79.58} & 90.96 & 84.5 & 81.5 &  96.3 & 80.8   & \textbf{96.1} & \textbf{88.8}   & \textbf{69.1} & \textbf{87.1} &\textbf{92.3} \\
\hline
\end{tabular}
} 
\end{center}

\caption{Part segmentation results in terms of mIoU(\%) on ShapeNet PartSeg dataset.}
\label{tab:partseg}
\vspace{1em}
    \centering
   \includegraphics[scale=0.28]{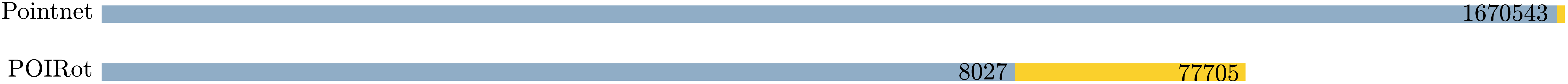}
\captionof{figure}{Number of parameters of POIRot compared to Pointnet on ShapeNet dataset.}
\label{fig:number_params}
\end{table*}

\subsection{Ablation Studies}
In this section, we perform ablation studies in the following two ways \begin{inparaenum}[\bfseries (a)] \item by removing global features \item by including deformation to show its usefulness. \end{inparaenum}From Table \ref{tab:ab_coor}, we can see that global features may or may not help. We conjecture that global feature may be useful for relatively complicated shapes. In Table \ref{tab:ab_deform}, we show the result with deformation, we can see that though the deformation data augmentation makes the model robust to deformations, it affects the performance, which is due to the trade-off between performance and achieving invariance.
\begin{table}[!ht]
\begin{center}

\resizebox{0.20\textwidth}{!}{

\begin{tabular}{>{\columncolor[gray]{0.95}}l|c|cc} 
\hline
\rowcolor{LightCyan}

                                       & \color{black}{w}          & \color{black}{w/o}       \\ 
\hline
cap  & \textbf{88.7} & 84.2 \\
earphone & \textbf{79.6} & 77.1 \\
mug & \textbf{96.1} & \textbf{96.1} \\
rocket & 67.0 & \textbf{69.1} \\
\hline
\end{tabular}
} 
\end{center}

\caption{Ablation of global features on ShapeNet PartSeg dataset in terms of mIoU(\%).}
\label{tab:ab_coor}
\end{table}

\begin{table}[!ht]
\begin{center}

\resizebox{0.20\textwidth}{!}{

\begin{tabular}{>{\columncolor[gray]{0.95}}l|c|cc} 
\hline
\rowcolor{LightCyan}

                                       & \color{black}{w/o}          & \color{black}{w/}       \\ 
\hline
earphone & \textbf{79.6} & 72.5 \\
rocket & \textbf{69.1} & 65.7 \\
\hline
\end{tabular}
} 
\end{center}

\caption{With and without deformation on ShapeNet PartSeg dataset in terms of mIoU(\%).}
\label{tab:ab_deform}
\end{table}

\subsection{Classification of point-clouds}

In this section, we give some experiments on point-clouds for classification task. We have performed classification on three datasets, namely, \begin{inparaenum}  \item classifying digits in MNIST data \item classifying objects in ModelNet40 data \item classifying demented vs. non-demented subjects based on shape of corpus callosum.\end{inparaenum}

{\bf Classifying MNIST digits and Modelnet40 shapes:} We use MNIST digits \cite{deng2012mnist} and Modelnet40 shapes \cite{wu20153d} for classification. We follow the pre-processing step as given in \cite{li2018pointcnn}. We compare performance of POIRot with PointCNN \cite{li2018pointcnn} in terms of model complexity and classification accuracy. We perform two sets of experiments \begin{inparaenum}  \item training and testing both on non-rotated datasets \item training and testing on non-rotated and rotated datasets respectively. \end{inparaenum} For all the experiments, we use $256$ sampled points as the point-cloud. We generate the rotated data by randomly drawing rotation matrices from uniform distribution. Using only $<4\%$ parameters of PointCNN, we can achieve significantly better performance for rotated testing data as given in Table \ref{tab:synth_res}. We like to point out that, even using explicit rotated data augmentation, PointCNN can achieve $48.7\%$ on MNIST and $18.4\%$ classification accuracy on Modelnet40. This clearly indicates the implicit data augmentation is more powerful than the explicit data augmentation to achieve invariance. In Fig. \ref{fig:mnist_vis}, we show representative filter responses for digits `2' and `3' for the selected points highlighted as orange. 

\begin{table}[!ht]
\begin{center}

\resizebox{0.45\textwidth}{!}{

\begin{tabular}{>{\columncolor[gray]{0.95}}l|c|ccc} 
\hline
\rowcolor{LightCyan}

                                       & \color{black}{\# params}          & \color{black}{NR/NR (\%)} & \color{black}{NR/R (\%)}        \\ 
\hline
MNIST  & \textbf{9700} (251770) & 91.1 (\textbf{94.9})& \textbf{86.3} (24.0) \\
Modelnet40  & \textbf{6847} (599340) & 82.8 (\textbf{83.2})& \textbf{70.7} (7.3) \\
\hline
\end{tabular}
} 
\end{center}

\caption{Classification results in terms of accuracy for MNIST and Modelnet40 dataset. The results are shown for POIRot (PointCNN), `NR/X' denotes non-rotated training and `X' testing, where `X' may be rotated or non-rotated. POIRot approximately retains the accuracy for rotated testing with $\approx 4\%$ of the number of parameters as PointCNN model. For both these models, we use $256$ sampled points from point-cloud.}
\label{tab:synth_res}
\end{table}

\ \\
{\bf Classifying demented vs. non-demented subjects based on corpus callosum shapes:} In this section, we use OASIS data \cite{fotenos2005normative} to address the
classification of demented vs. non-demented subjects using
our proposed framework. This dataset contains at least two MR brain
scans of $150$ subjects, aged between $60$ to $96$ years old. For each
patient, scans are separated by at least one year. The dataset
contains patients of both sexes. In order to avoid gender effects, we take MR scans of male patients alone from three visits, which
results in the dataset containing $69$ MR scans of $11$ subjects with
dementia and $12$ subjects without dementia. This gives $33$ scans for subjects with dementia and $36$ scans for subjects without dementia. We first compute an atlas
(using the method in \cite{avants2009advanced}) from the $36 (=12
\times 3)$ MR scans of patients without dementia. After rigidly registering each MR scans to the atlas, we segment out the corpus callosum region from each scan. We represent the shape of the corpus callosum as a 3D point-cloud.  

{\bf Computing the centroid of a point-cloud:} Given the point-cloud $X = \left\{\mathbf{x}_i\right\}_{i=1}^N\subset \mathbf{R}^3$, we compute the centroid of the point-cloud to be the nearest point to the mean of $X$. Formally, let us denote the centroid of $X$ to be $\mathbf{m}$. Then, $\mathbf{m}$ is defined as
\begin{align*}
\mathbf{m} = \Pi_{X}\left(\frac{1}{n} \sum_{i=1}^n \mathbf{x}_i\right), 
\end{align*}
where, $\Pi_{X}(\mathbf{x})$ is the projection of $\mathbf{x}$ in the set $X$. 

{\bf Extracting the ``attention'' from a point-cloud:} Given the point-cloud $X$, we extract the region of interest, i.e., ``attention'' to be a subset $Y \subset X$ as follows: \begin{inparaenum}[\bfseries (a)] \item Compute the directional part of the vector from $\mathbf{m}$ to each point, $\mathbf{x}_i$. Let the vector be denoted by $\mathbf{v}_i$. \item Pass the vector $\mathbf{v}_i$ through a FC layer to get the confidence, $c_i \in [0, \infty)$, for selecting $\mathbf{x}_i$. \item Define a random variable following multinomial distribution with $\mathbf{c}$ as the parameter. \item Draw samples from this random variable to generate $Y$. \end{inparaenum}We call this subset $Y$ to be our region of interest. We follow the steps as described in Section \ref{sec:classmodel}. 



We achieve $\mathbf{90.72\pm 0.79}\%$ classification accuracy with the sensitivity and specificity to be $87.88\%$ and $94.44\%$ respectively. If we remove the ``attention'' module, we can achieve $72.46\%$ classification accuracy. This clearly indicates the usefulness of the ``attention'' module used in this work. We show the overlayed attention region and the selected convex hull points in Fig. \ref{fig:cc_hull}. We can see that the attention block focuses on the thinning of corpus callosum shape in order to classify demented vs. non-demented subjects. In Fig. \ref{fig:cc_rot}, we show that for rotated and non-rotated CC shapes the integrated responses are similar which proves the rotational invariance property. 

\begin{figure}[!ht]
 \centering
\vspace*{-1em}
\begin{tikzpicture}
\node[inner sep=0pt] (fig1) at (0,0)
    {\includegraphics[width=0.12\textwidth, height=2.3cm]{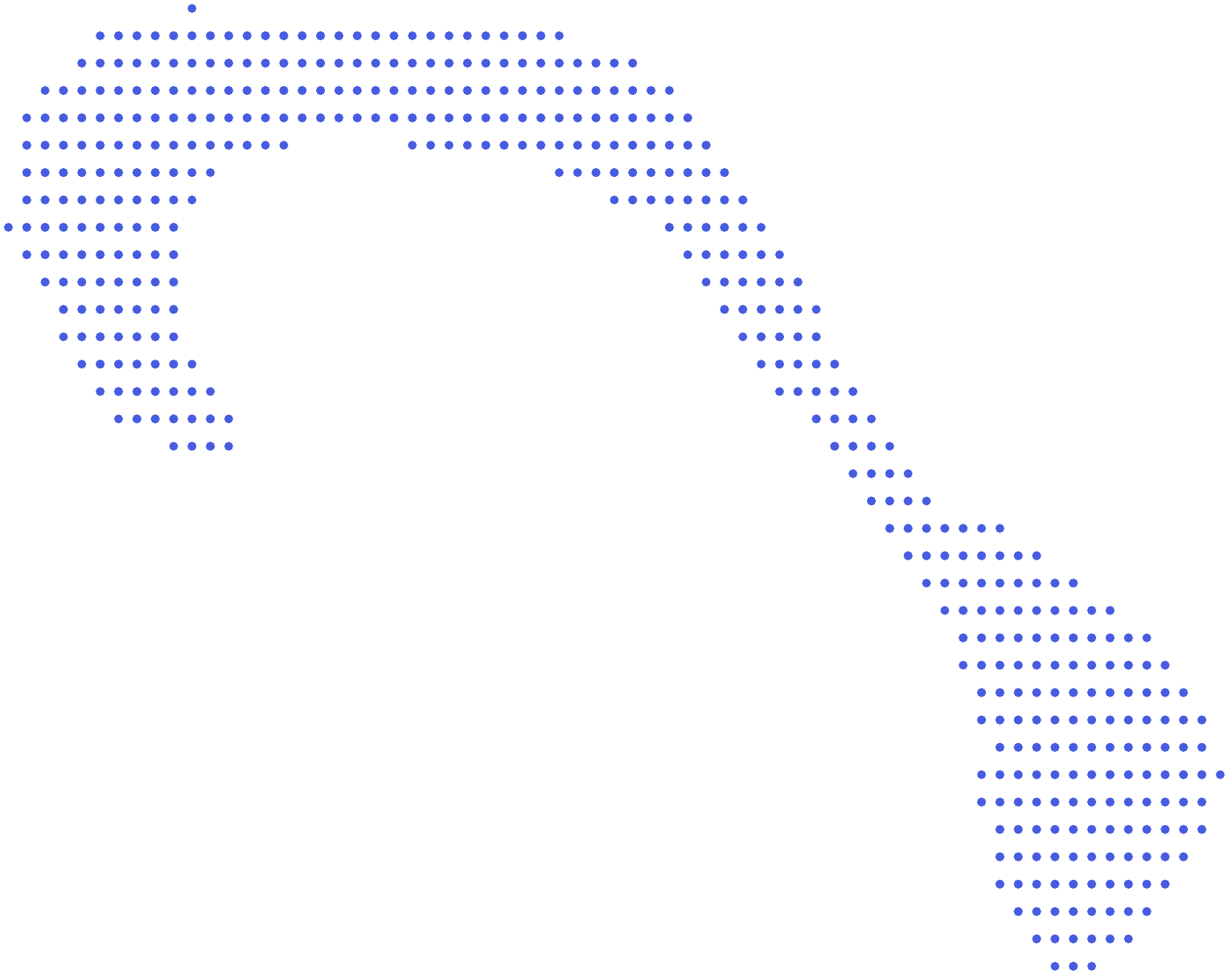}};
\node[inner sep=0pt] (fig2) at (4,0)
    {\includegraphics[width=0.12\textwidth, height=2.3cm]{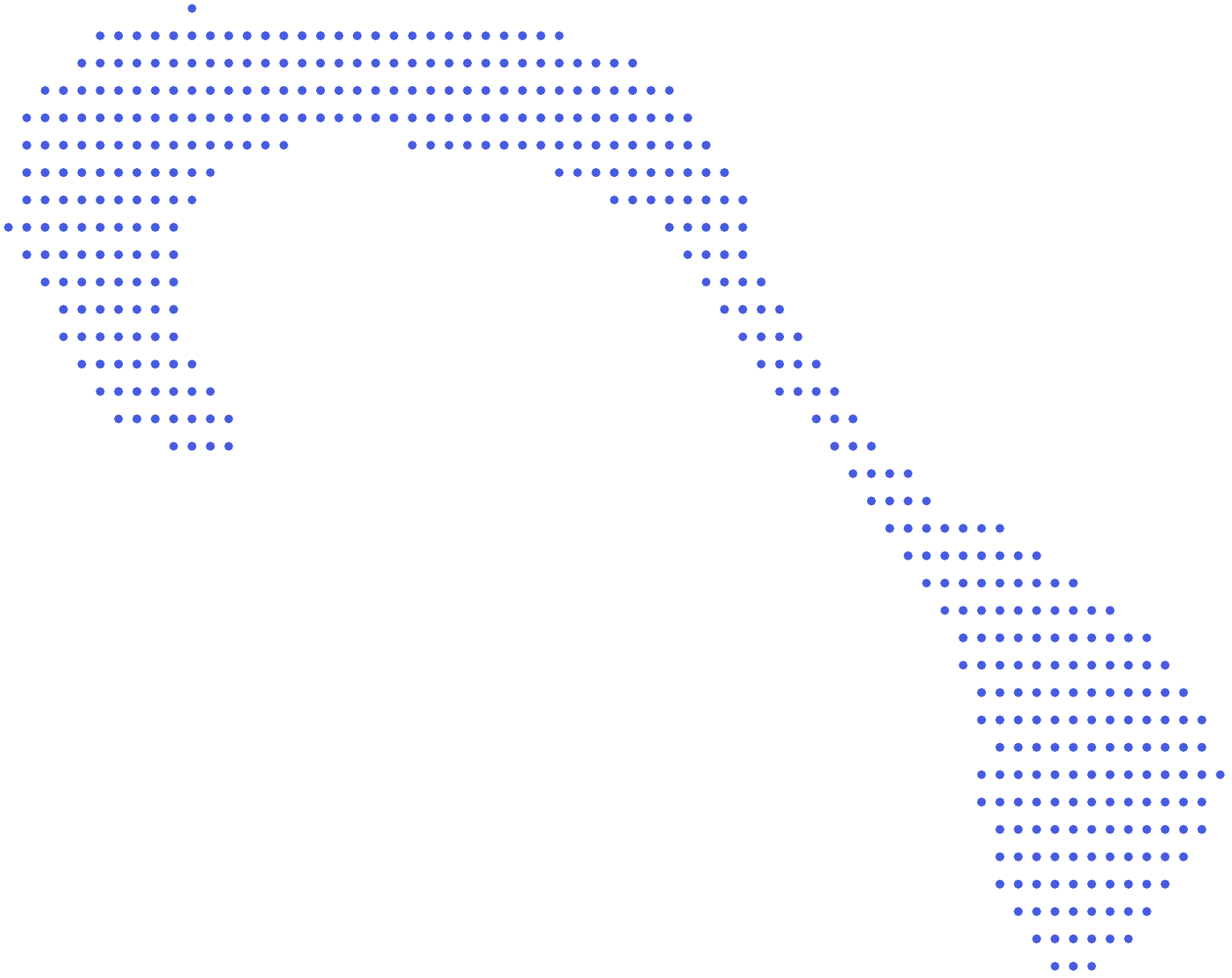}};
\end{tikzpicture}
\\
\begin{tikzpicture}
\node[inner sep=0pt] (fig1) at (0,0)
    {\includegraphics[width=0.19\textwidth, height=2.9cm]{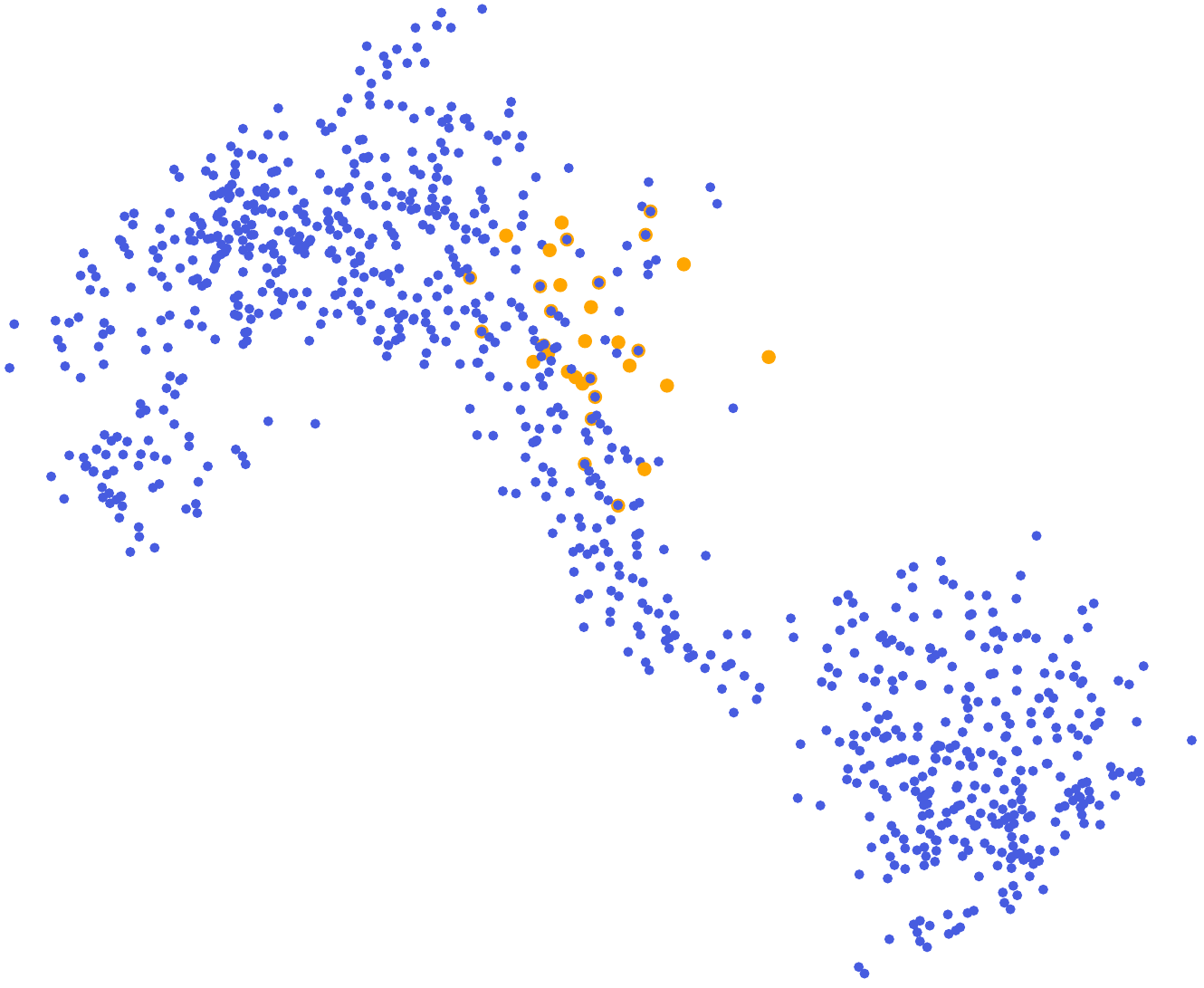}};
\node[inner sep=0pt] (fig2) at (4,0)
    {\includegraphics[width=0.17\textwidth, height=2.7cm]{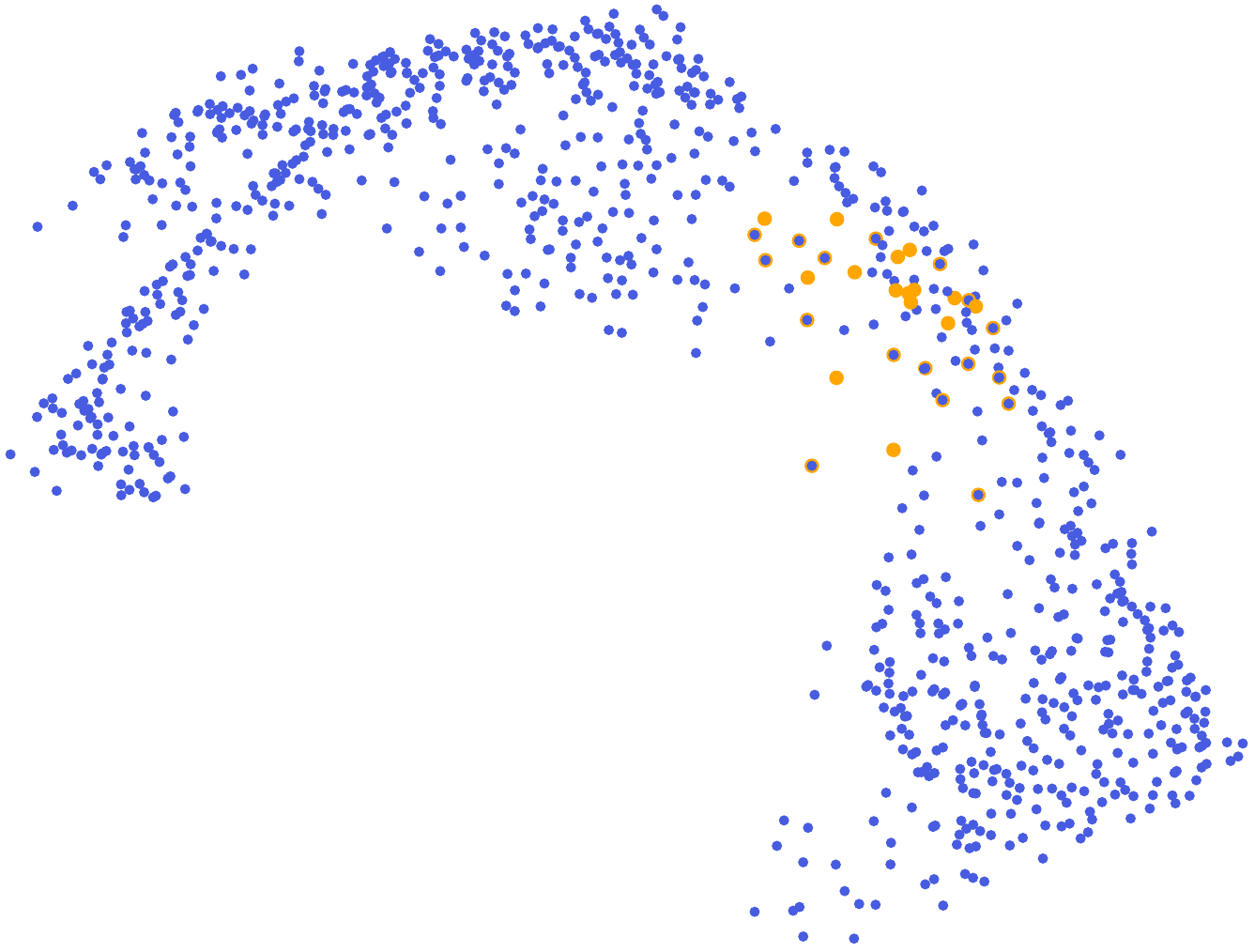}};
\end{tikzpicture}
\\
\begin{tikzpicture}
\node[inner sep=0pt] (fig1) at (0,0)
    {\includegraphics[width=0.17\textwidth, height=2.7cm]{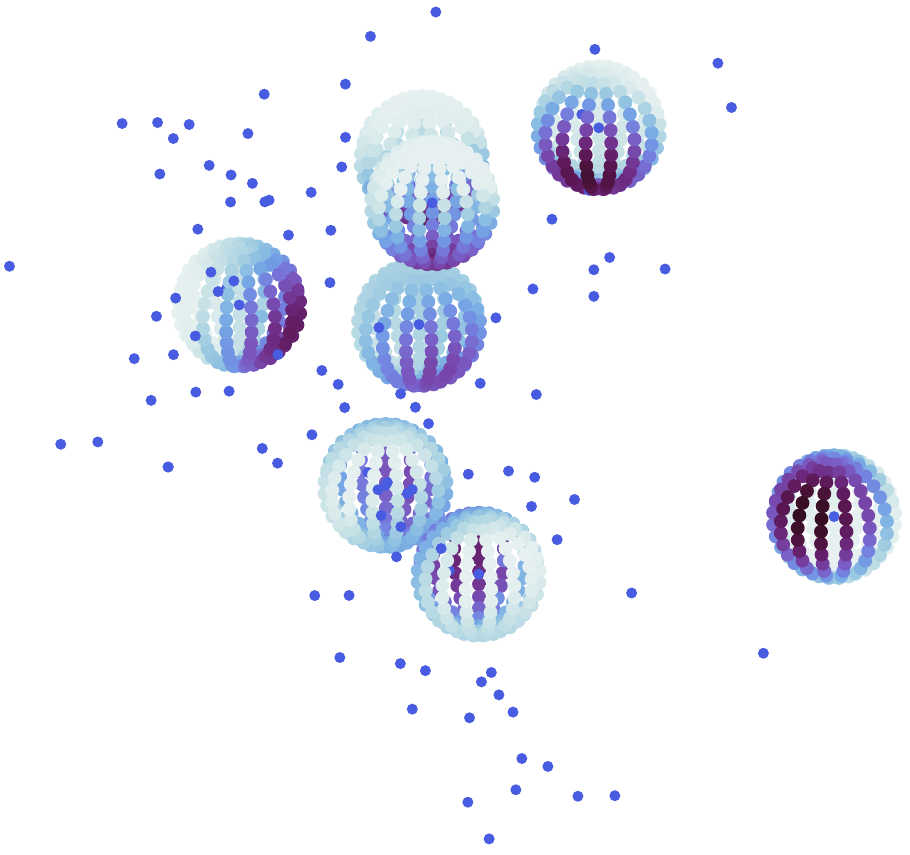}};
\node[inner sep=0pt] (fig2) at (4,0)
    {\includegraphics[width=0.12\textwidth, height=2.3cm]{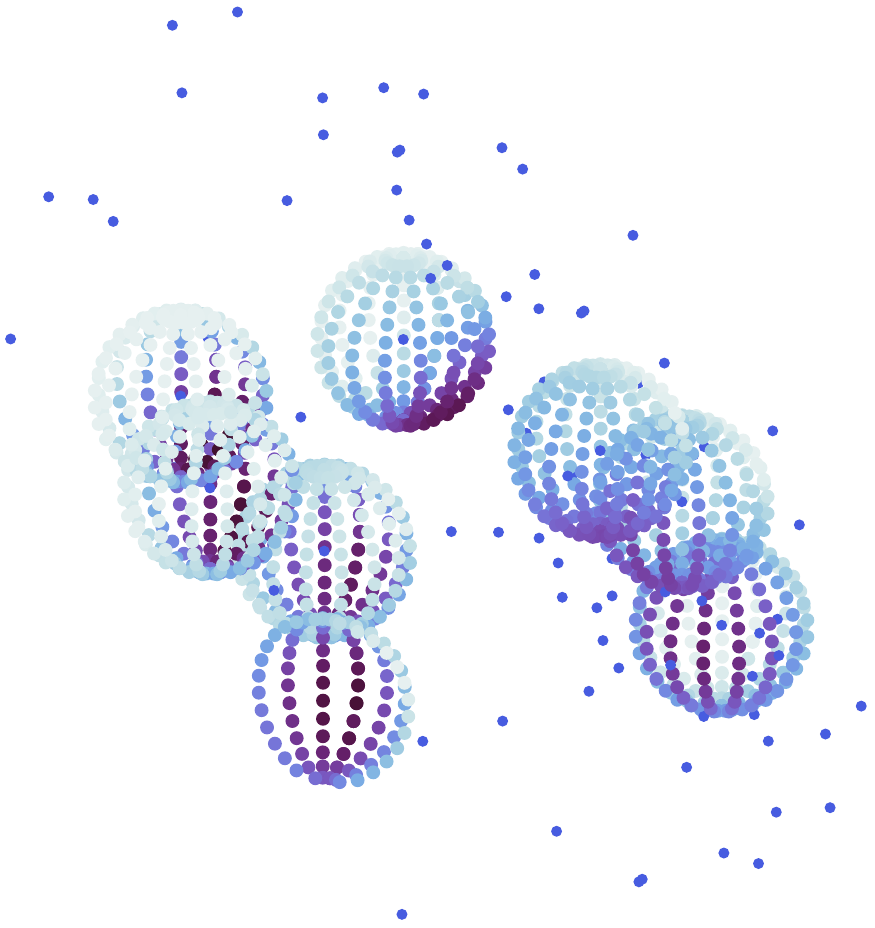}};
\end{tikzpicture}
\vspace*{-1em}
\caption{{\it (Top):} Sample point-cloud {\it (Middle:)} with attention region marked with ``orange'' {\it (Bottom:)} putting sphere around the convex hull points. (The first and second columns represent samples with non-dementia and dementia respectively.)}
      \label{fig:cc_hull}
\end{figure}

\begin{figure}[!ht]
 \centering
\begin{tikzpicture}
\node[inner sep=0pt] (fig1) at (0,0)
    {\includegraphics[width=0.12\textwidth, height=2.3cm]{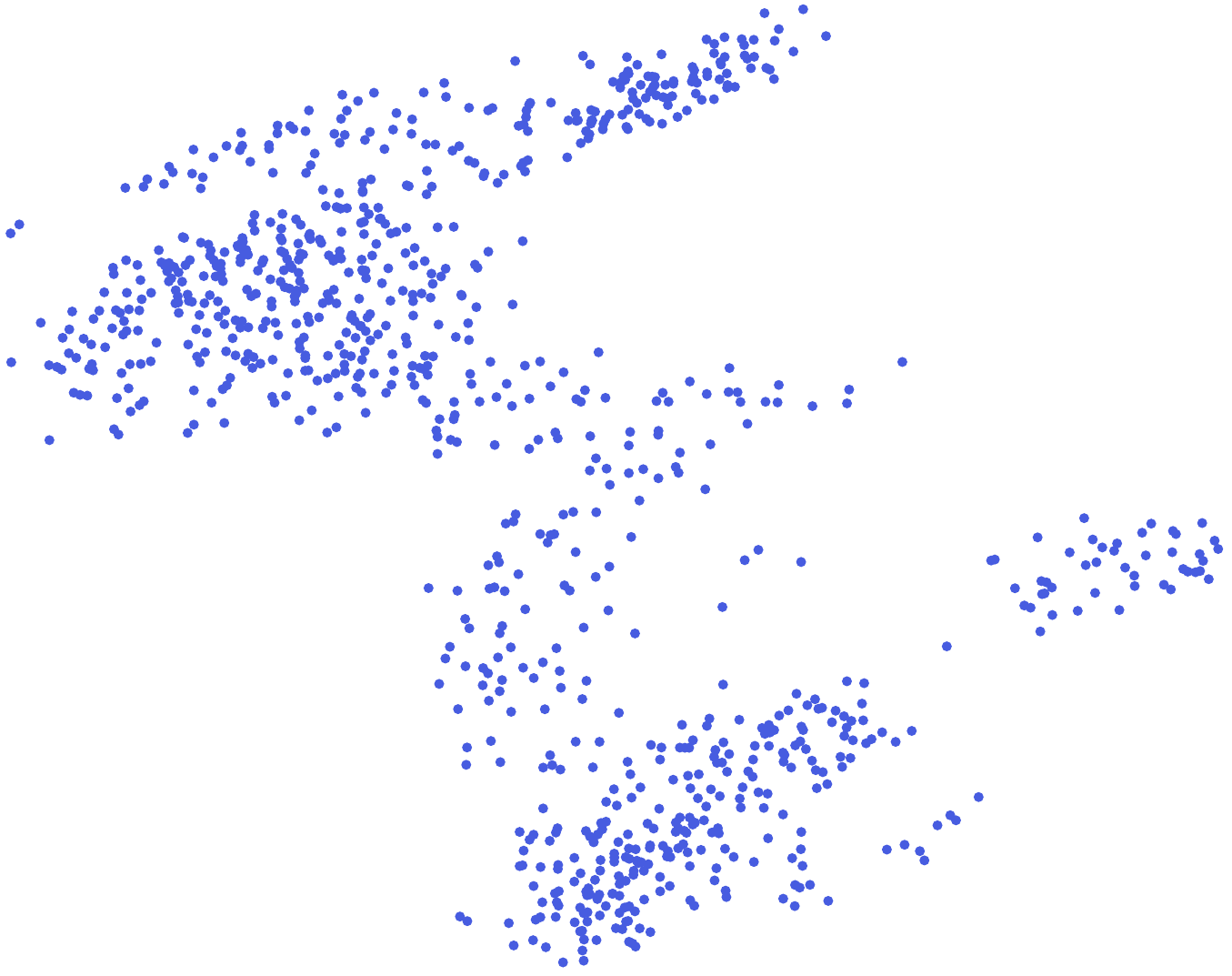}};
\node[inner sep=0pt] (fig2) at (6,0)
    {\includegraphics[width=0.12\textwidth, height=2.3cm]{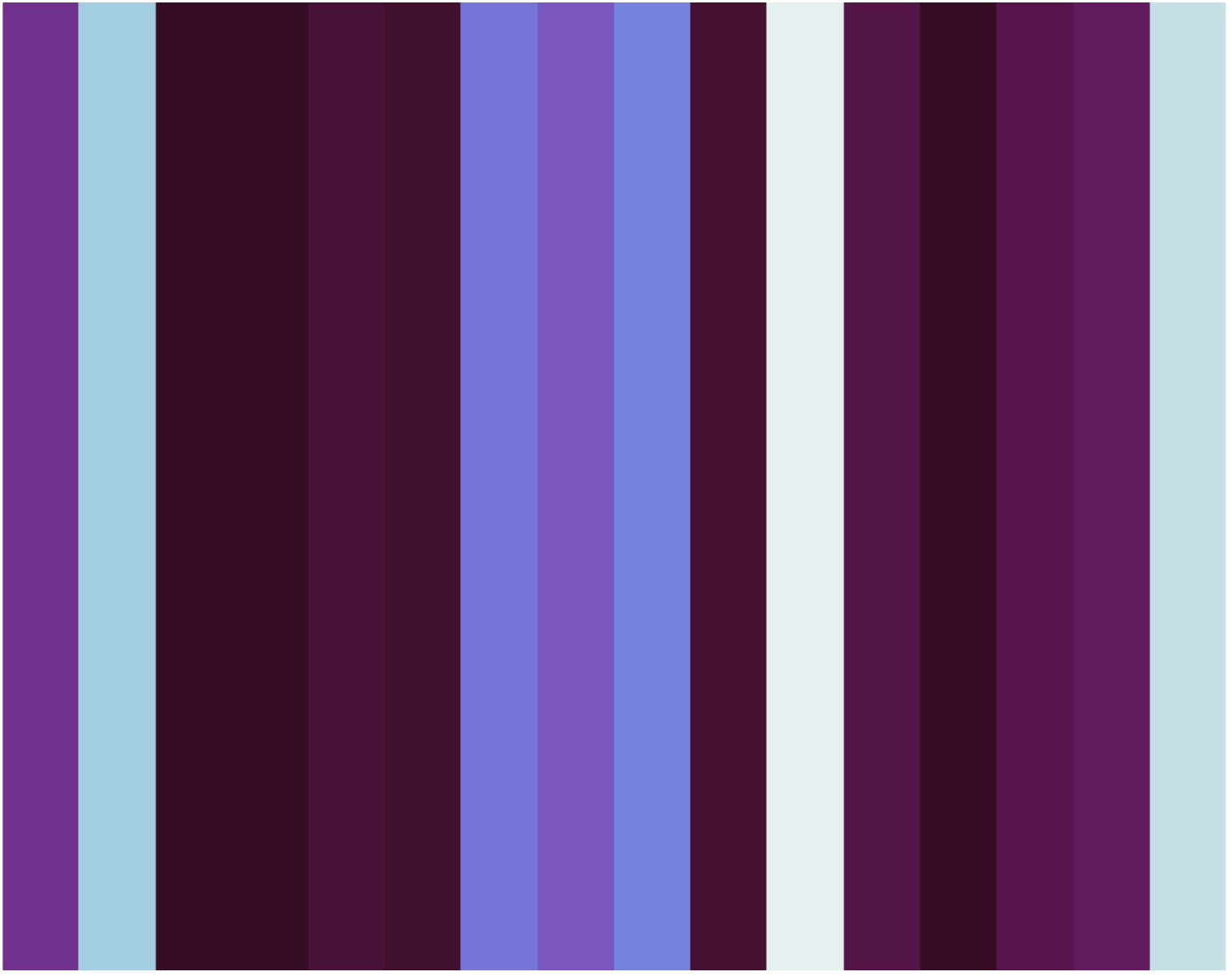}};
    \draw[->, line width=0.2mm] (2.3,0) -- (3.9, 0);
\end{tikzpicture}
\\
\begin{tikzpicture}
\node[inner sep=0pt] (fig1) at (0,0)
    {\includegraphics[width=0.12\textwidth, height=2.3cm]{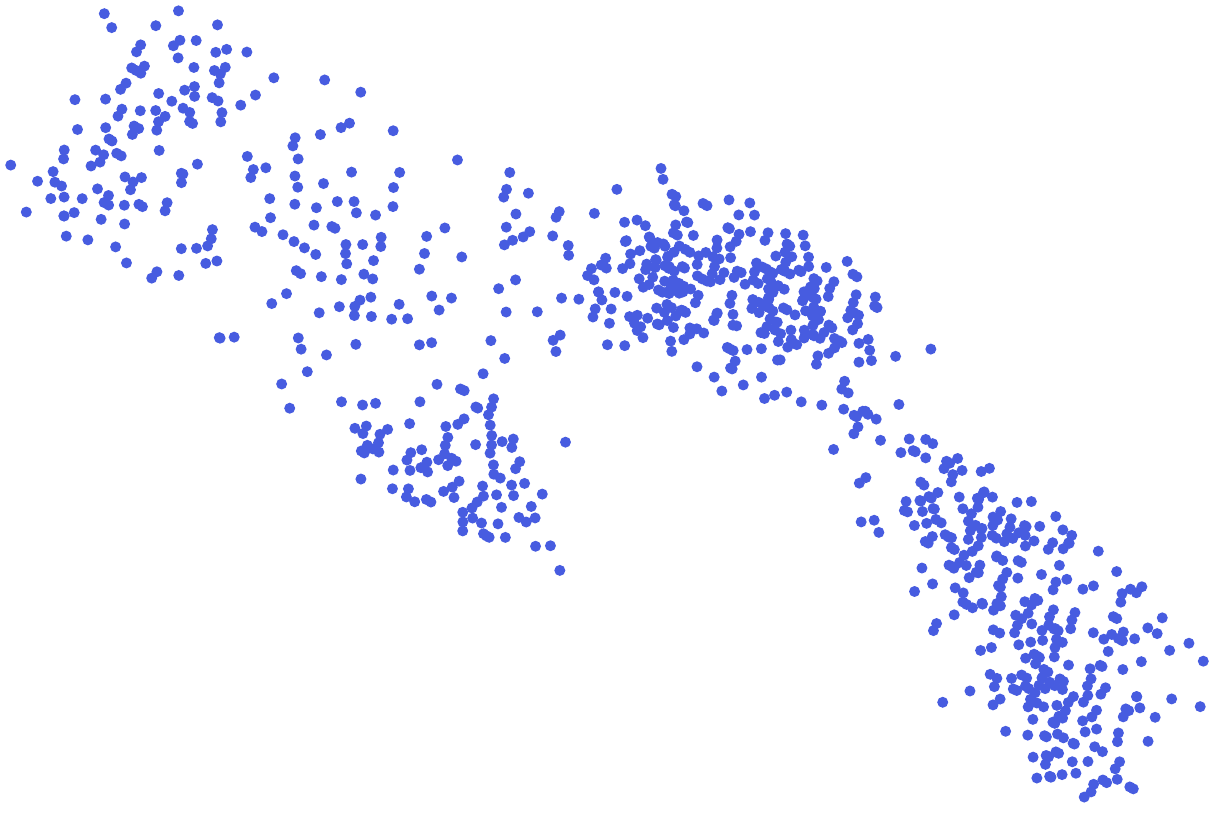}};
\node[inner sep=0pt] (fig2) at (6,0)
    {\includegraphics[width=0.12\textwidth, height=2.3cm]{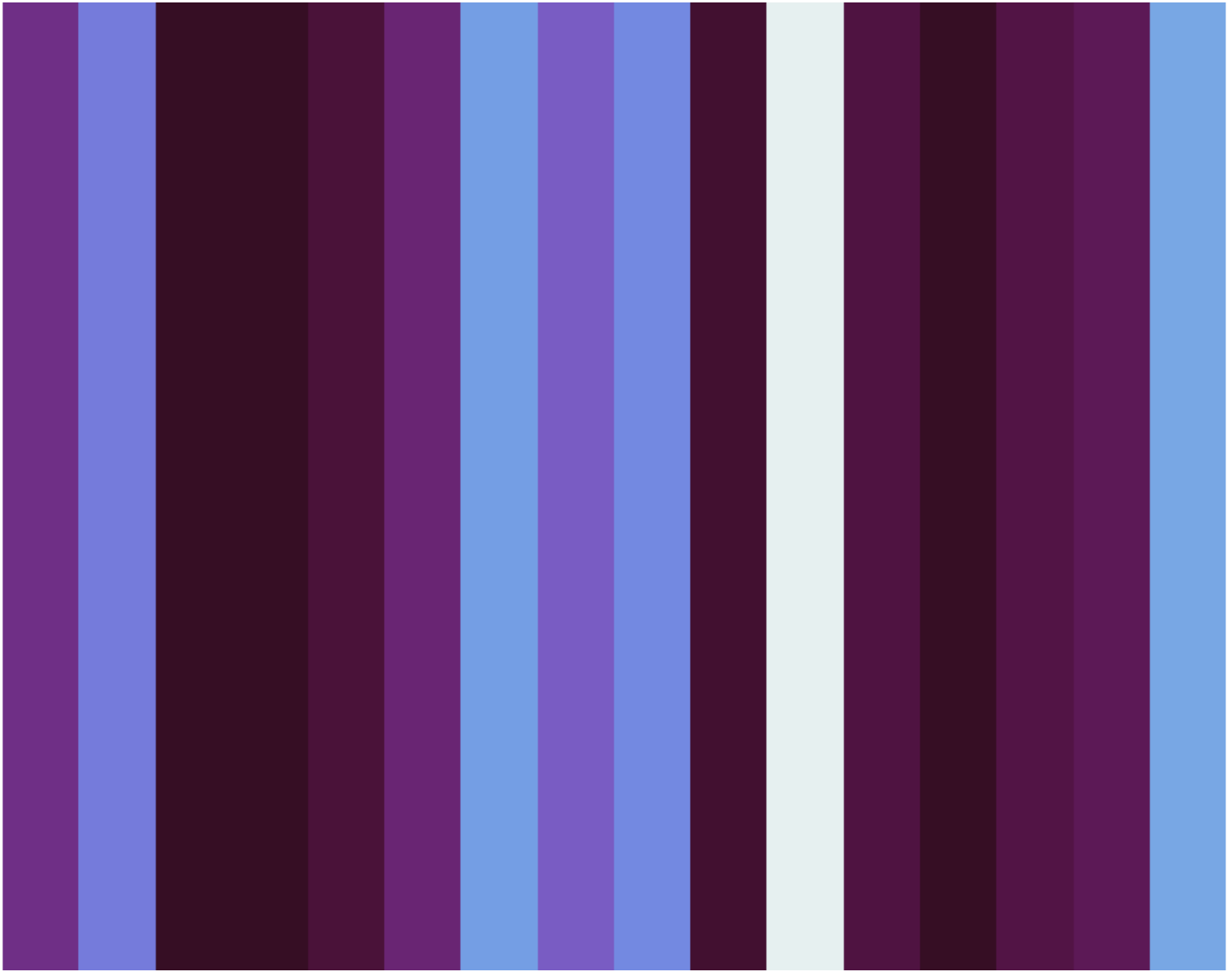}};
    \draw[->, line width=0.2mm] (2.3,0) -- (3.9, 0);
\end{tikzpicture}
\caption{{\it (Top):} non-rotated {\it (Bottom:)} rotated point-cloud with their respective invariant feature responses.}
      \label{fig:cc_rot}
\end{figure}

\begin{figure*}[!ht]
    \centering
   \includegraphics[scale=0.35]{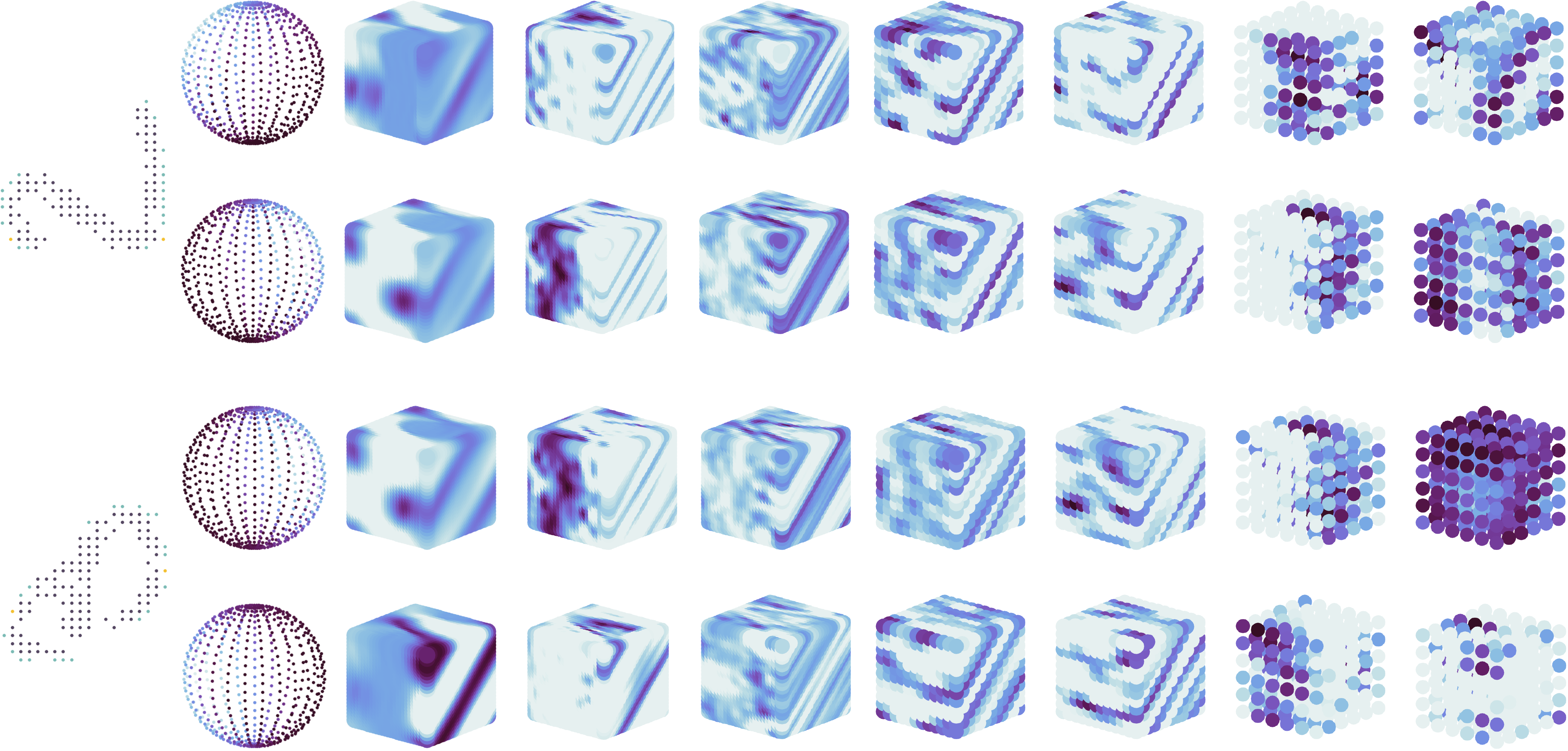}
\caption{Filter responses for mnist digits $2$ and $3$.}
\label{fig:mnist_vis}
\end{figure*}

\subsection{Object detection}
In this section, we want to investigate whether the proposed method can learn a 3D geometric shape of an object of interest. We demonstrated it in an object detection framework where the task is to detect corpus callosum region from a 3D brain scan. One of the major hurdle to deal with medical images is to do image registration which incurs some error that can carry forward to the following processing steps. This motivates us to solve the detection problem in an unsupervised way where we only present an atlas of corpus callosum shape. The only way to detect the corpus callosum shape correctly during testing is by learning the 3D geometric shape of the corpus callosum region. Hence, in our object detection task, our inputs are a 3D brain scan and an atlas for CC region. Our proposed framework consists of five steps as given below.

{\bf Extracting point-cloud:} Given a 3D brain scan and an atlas for CC region, we extract the corresponding 3D point-clouds for the brain scan and the CC atlas, denoted by $X = \left\{\mathbf{x}_i\right\}_{i=1}^N\subset \mathbf{R}^3$ and $M = \left\{\mathbf{m}_i\right\}_{i=1}^m \subset \mathbf{R}^3$ respectively. 

{\bf Constructing affinity matrix using geodesic distance:} In order to capture the shape of the corpus callosum, one needs to use geodesic distance instead of standard $\ell_2$ distance. For each point $\mathbf{x} \in X$, we look at the nearest neighbors and calculate their $\ell_2$ distance. In this way, for each $\mathbf{x}$, we can construct a locally flat neighborhood, $\mathcal{N}_{\epsilon}(\mathbf{x})$, which essentially gives us an adjacency matrix, $E$. Now we construct a graph $G= (X, E)$ where each point is treated as node of the graph and there exists an edge between vertex $\mathbf{x}_i$ and $\mathbf{x}_j$ if $\mathbf{x}_i \in \mathcal{N}_{\epsilon}(\mathbf{x}_j)$ or $\mathbf{x}_j \in \mathcal{N}_{\epsilon}(\mathbf{x}_i)$. Given the graph $G$, we use Floyd-Warshall algorithm to find all pair shortest paths and use the weighted length of the path as the geodesic distance. Analogous idea to get geodesic distance has been used in \cite{vemurimultiatlas}. Using the geodesic distance, $d_g: X \times X \rightarrow \mathbf{R}$, we construct the affinity matrix $A=\left\{a_{ij}\right\}$ with $a_{ij} = d_g(\mathbf{x}_i, \mathbf{x}_j)$. We choose $\epsilon = 5$ in terms of pixels for our purpose. 

{\bf Candidate selection:} After construction of the affinity matrix $A$, we choose a potential candidate for the matching CC shape as follows. Let the candidate pool be denoted by $\mathcal{S} = \left\{S_j\right\}$ where $S_j$ is a {\it potential match}. For notational simplicity, we denote   each {\it potential match} $S$ by $\mathbf{x}$ and $m$. Thus for a given $m$, we call $\mathbf{x}_j$ to be a {\it potential match} if $S_j$ contains $m-1$ nearest neighbors of $\mathbf{x}_j$. At each $\mathbf{x}_j \in S$, we construct a sphere around and collect responses on the sphere from the $m-1$ nearest neighbors (in an analogous way as in Section \ref{sec:classmodel}). Let the spherical response be denoted by $\left\{f_j: \mathbf{S}^2 \rightarrow \mathbf{R}\right\}_{\mathbf{x}_j \in S}$. We capture responses from the atlas as well and let it be denoted by $m_f: \mathbf{S}^2\rightarrow \mathbf{R}$.

{\bf Spherical feature extractor and feature matching:} We use spherical correlations as described in Section \ref{sec:classmodel} from both $\left\{f_j\right\}$ and $m_f$ to get the rotation invariant features, let it be denoted by $\left\{\mathfrak{F}_j \in \mathbf{R}^f \right\}$ and $\mathfrak{M} \in \mathbf{R}^f$, where $f$ is the dimension of the spherical feature.  Now, for each $\mathfrak{F}_j$, we compute the similarity with $\mathfrak{M}$ to be $\mathfrak{F}_j^t\mathfrak{M}$. Thus, for each {\it potential match}, we get a similarity score, let it be denoted by $s_j$.

{\bf Minimizing the entropy:} After getting the similarity scores $\left\{s_j\right\}$, we compute the probability of $S_j$ to be the chosen segmented region (denoted by $p_j$) as:
\begin{align*}
P(S_j = M) = \frac{exp(s_j)}{\sum_{i=1}^{|S|} exp(s_i)}
\end{align*}
Finally we minimize the entropy given by $-\sum_{j=1}^{|S|} p_j \log(p_j)$. We select the candidate as the one with maximum probability, i.e., the chosen candidate $j^* = \displaystyle\argmax_j p_j$. A schematic of our proposed unsupervised algorithm is given in Fig. \ref{fig:cc_seg_flow}. The detection results are shown in Fig. \ref{fig:seg_medical} which indicates that the proposed method performs well to detect CC shapes.

\begin{figure}[!ht]
    \centering
    \includegraphics[width=0.15\textwidth]{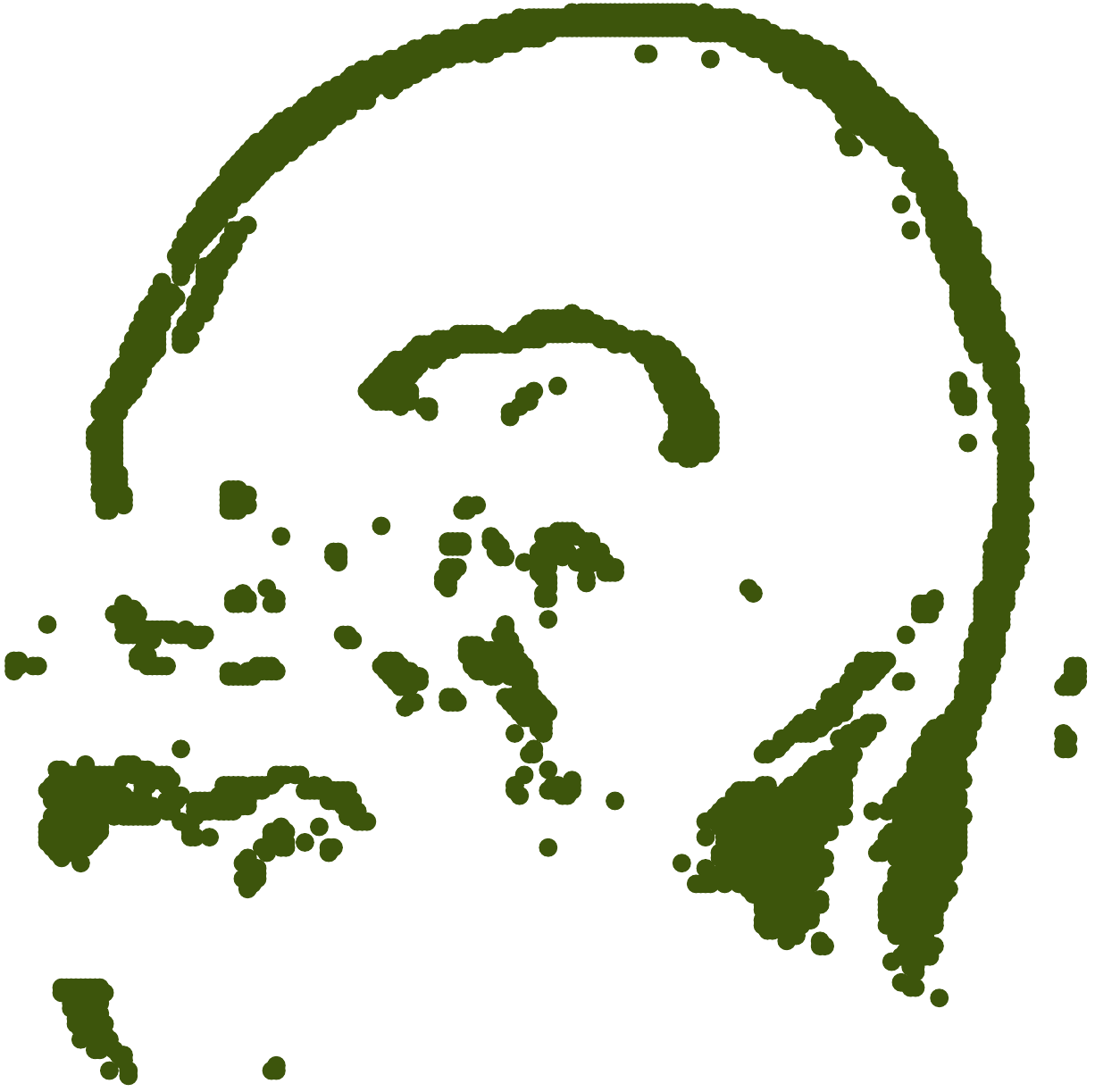}
    \includegraphics[width=0.15\textwidth]{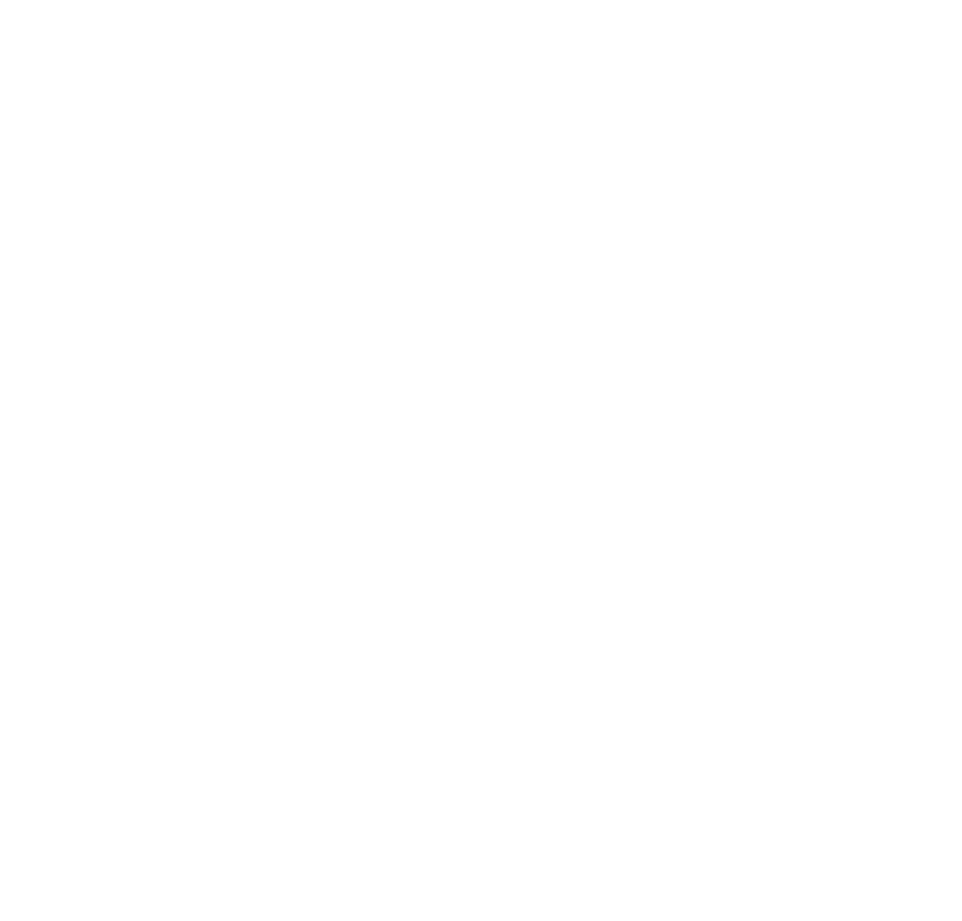}
    \includegraphics[width=0.15\textwidth]{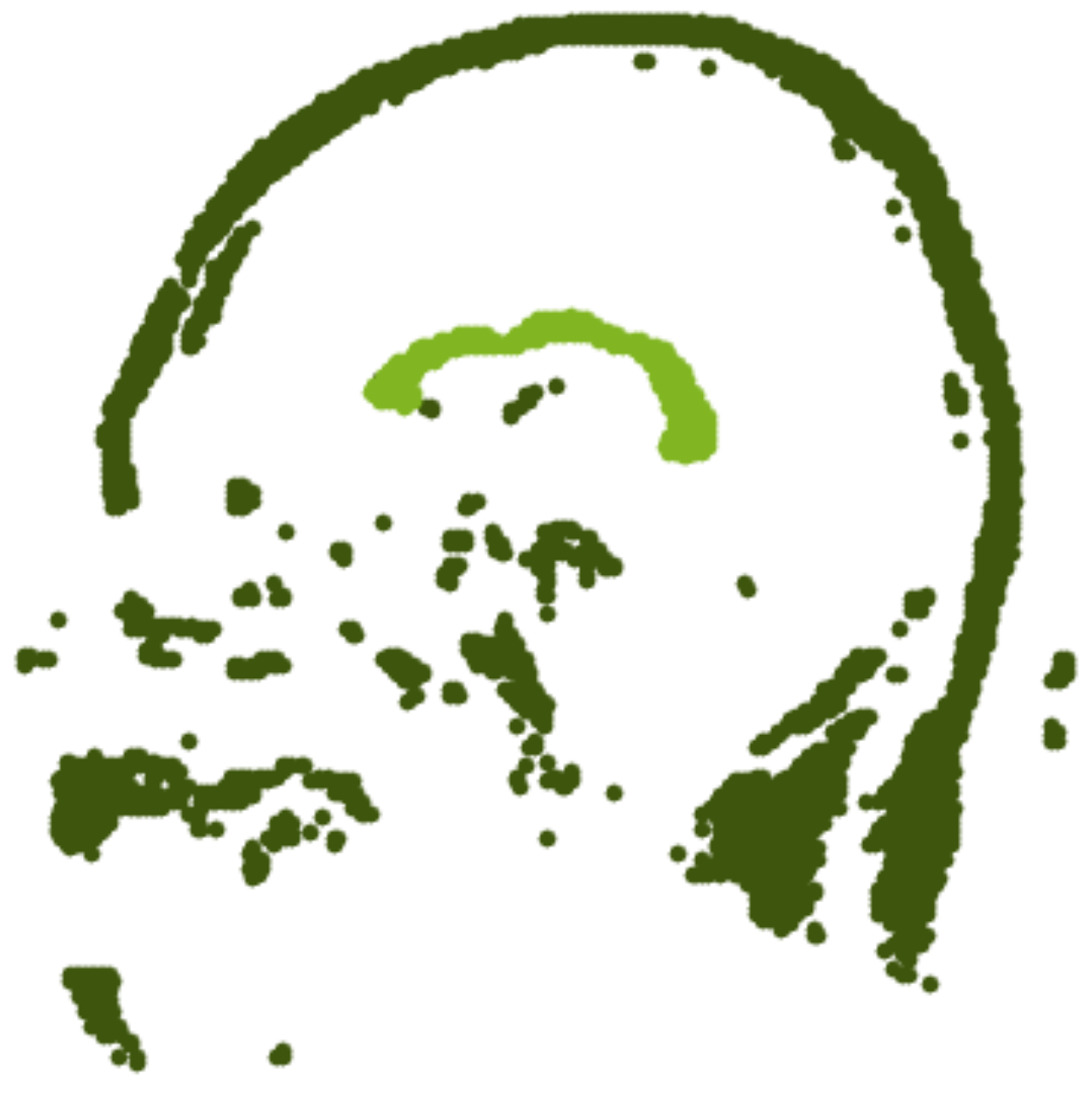}
    \\
    ~~~~~~
    \includegraphics[width=0.15\textwidth]{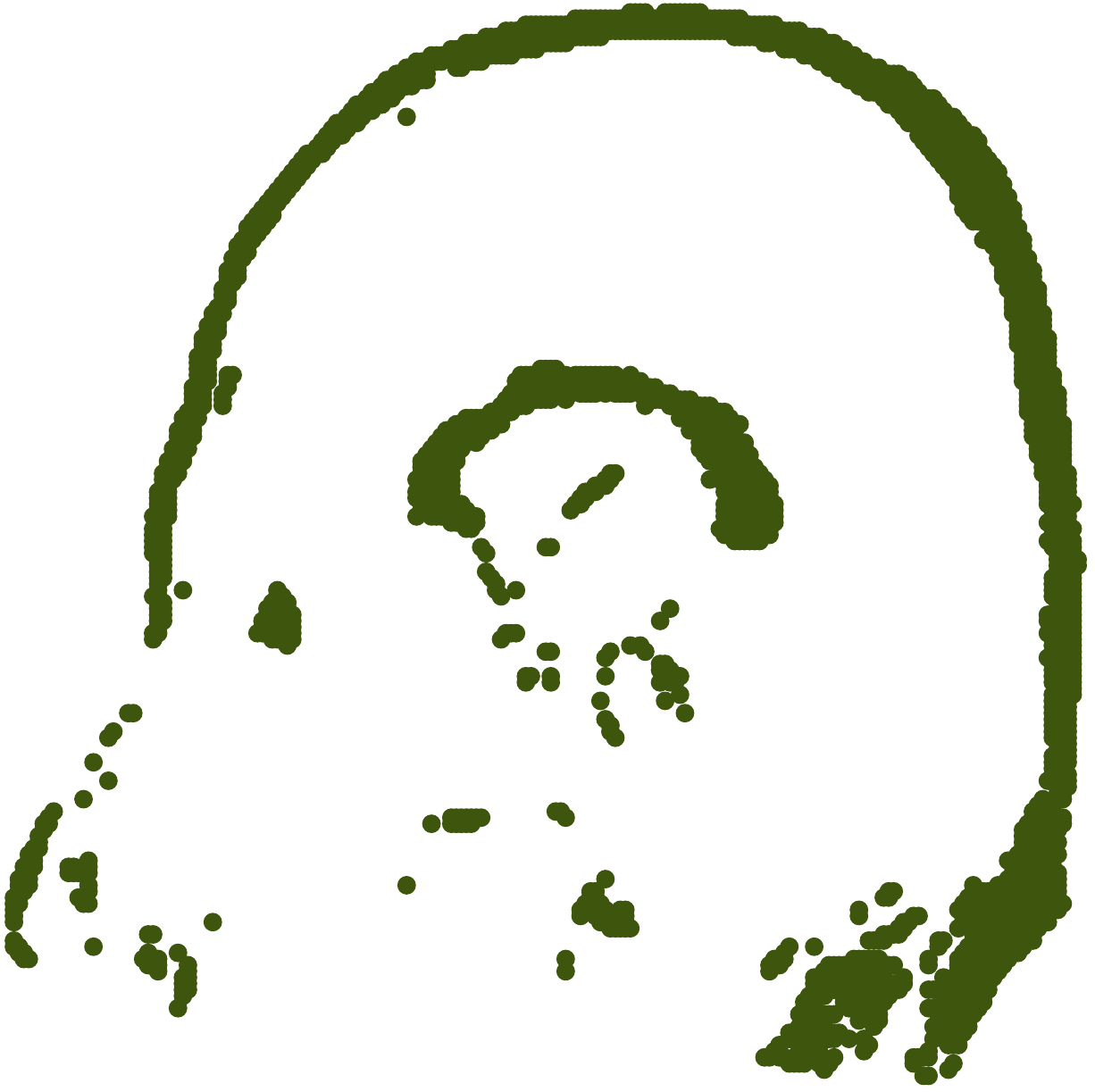}
    \includegraphics[width=0.15\textwidth]{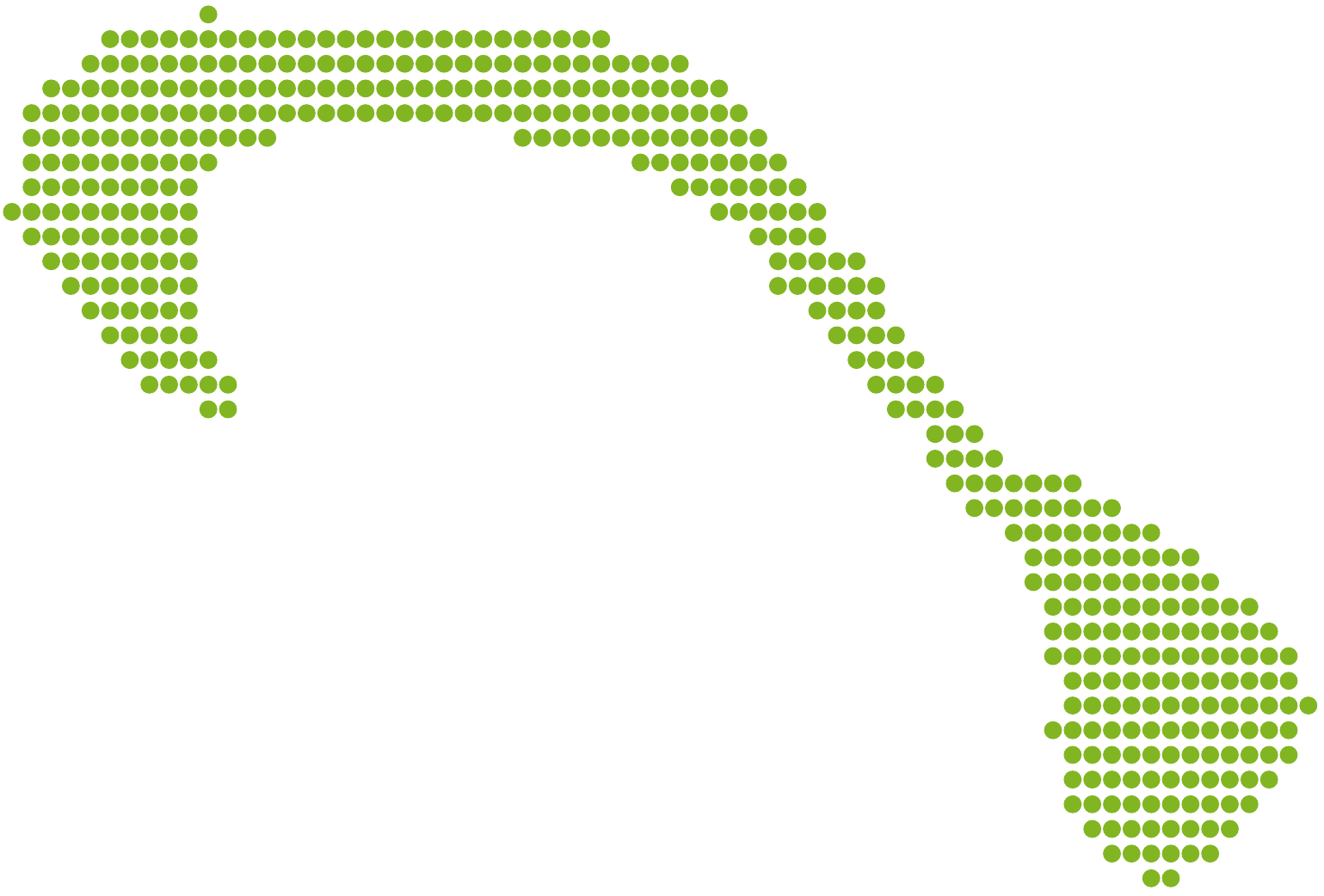}
    \includegraphics[width=0.15\textwidth]{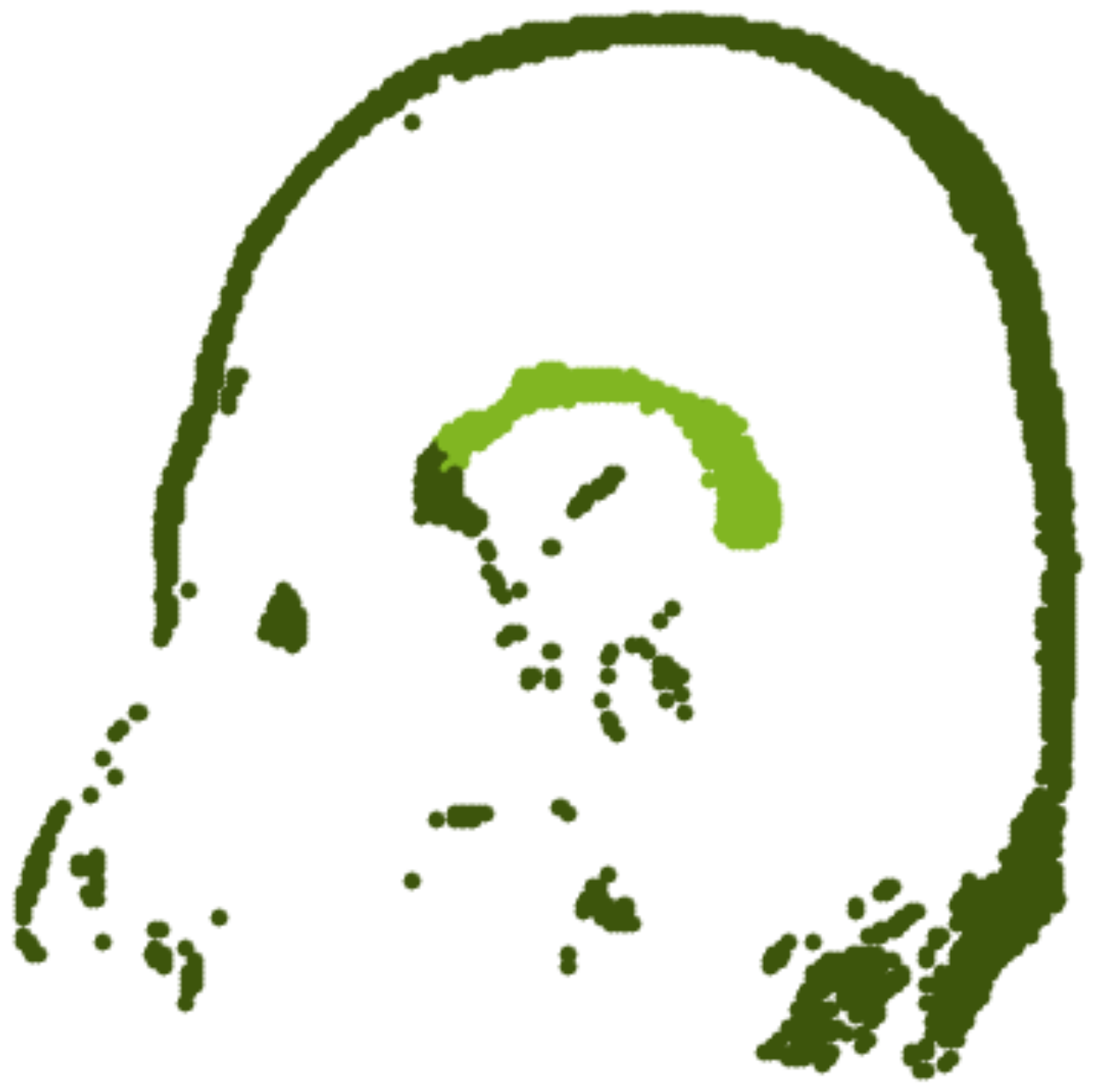}
    \\
     ~~~~~~
    \includegraphics[width=0.15\textwidth]{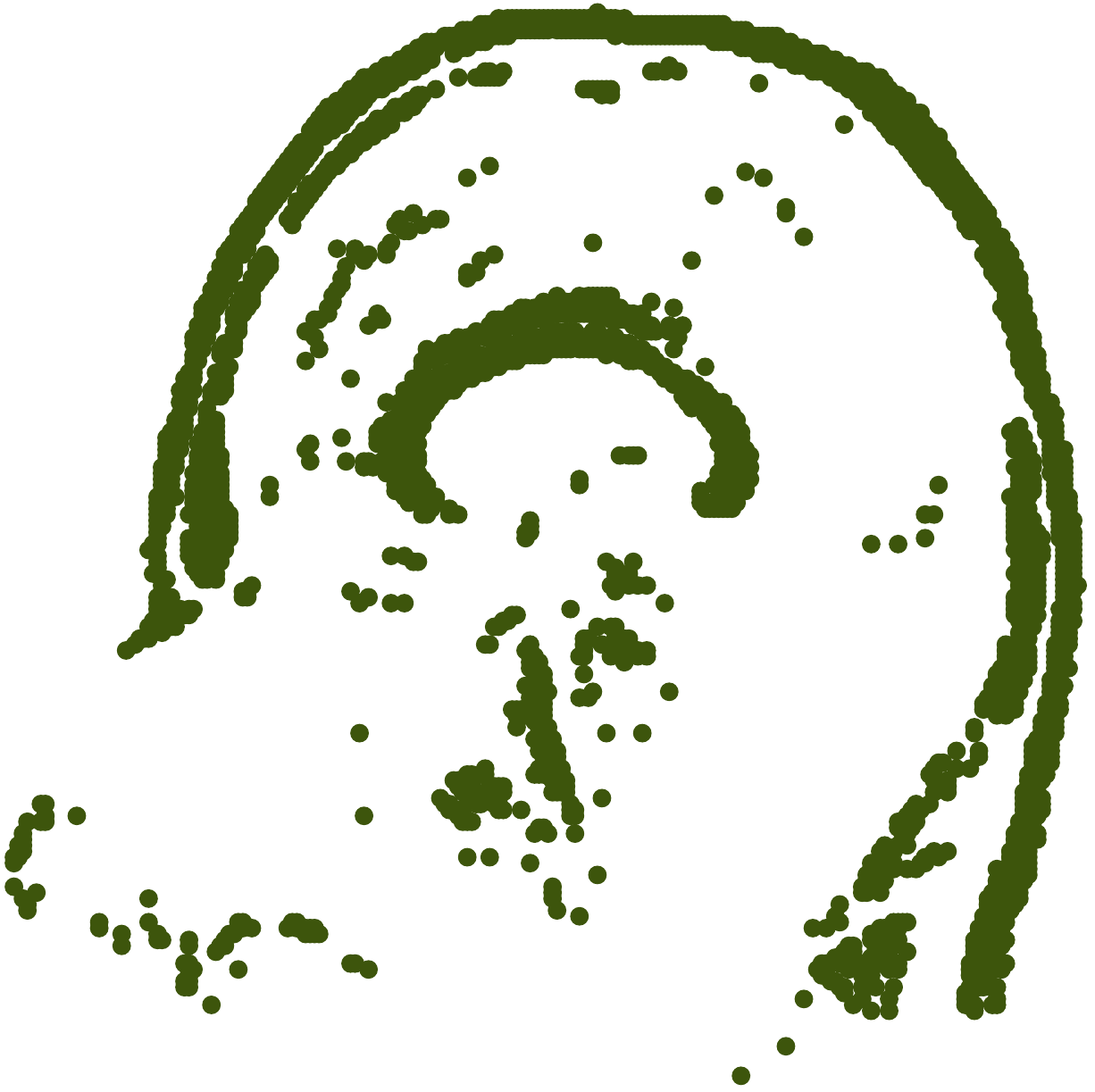}
    \includegraphics[width=0.15\textwidth]{blank}
    \includegraphics[width=0.15\textwidth]{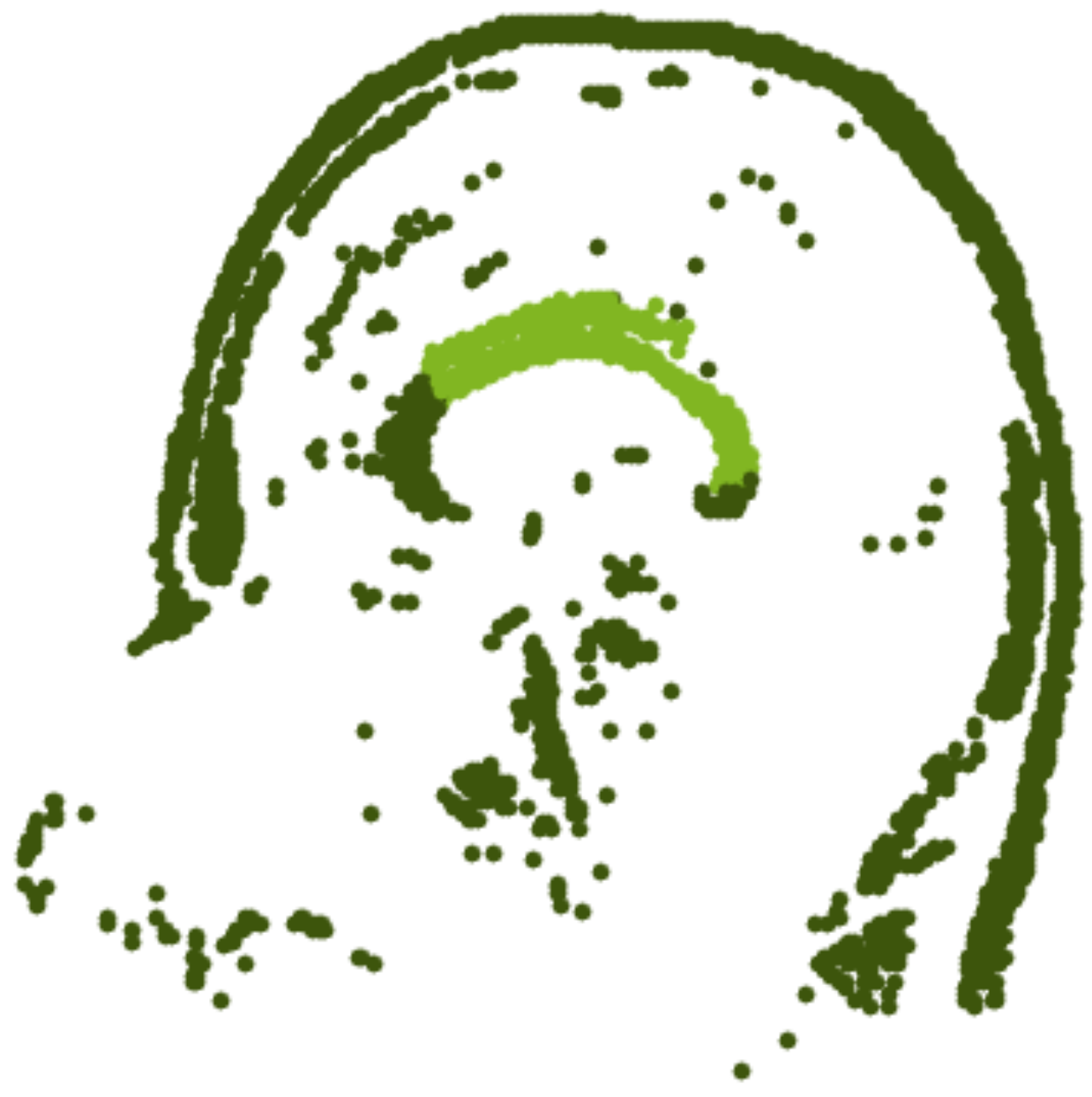}
\caption{point-cloud from 3D brain scans ({\it Left}) with the corresponding segmented corpus callosum  region ({\it Right}). The middle subplot shows an atlas of corpus callosum shape.}
\label{fig:seg_medical}
\end{figure}

\begin{figure*}[!ht]
    \centering
    \subfigure[airplane]{\includegraphics[scale=0.16]{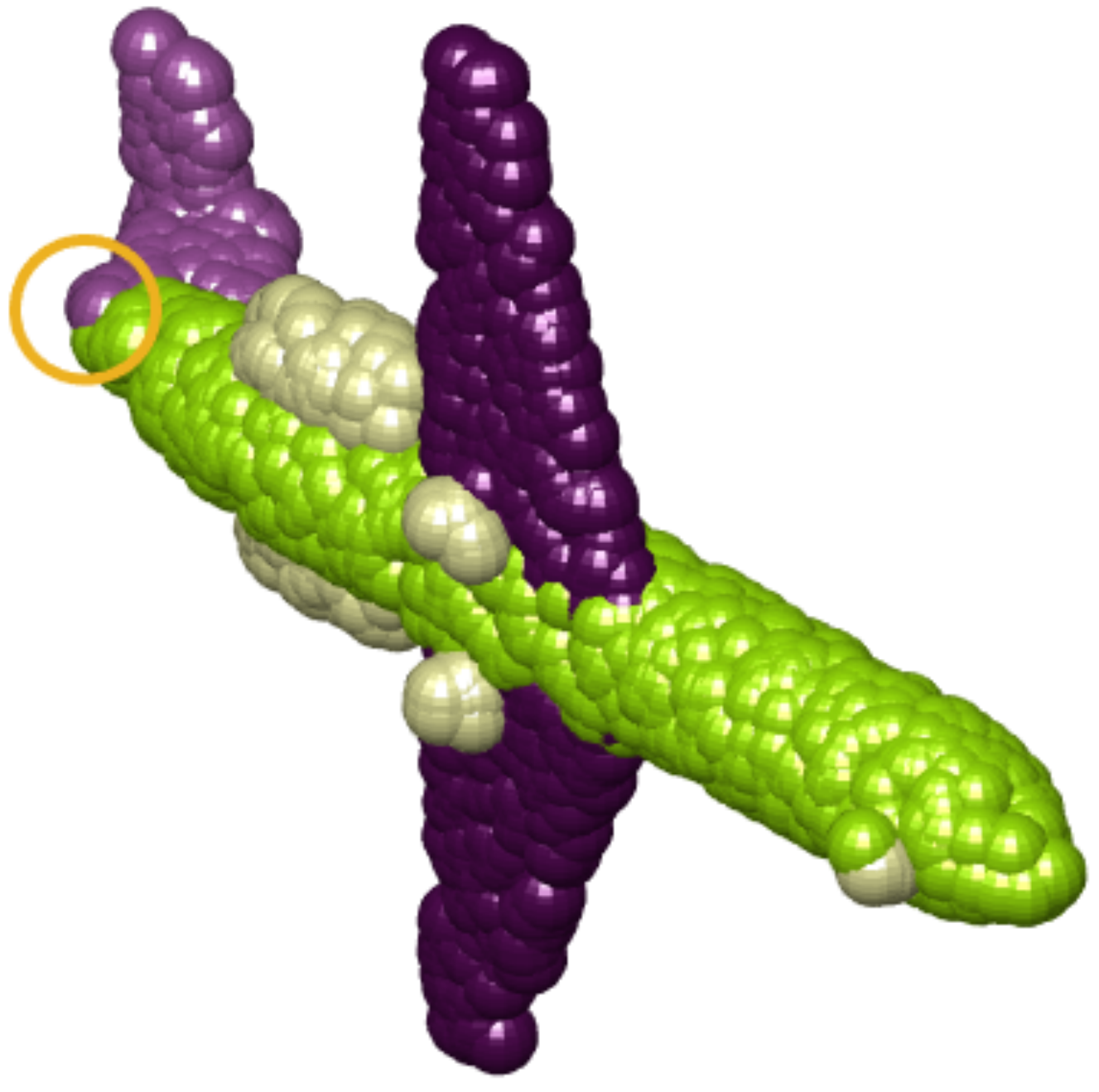}
    \includegraphics[scale=0.16]{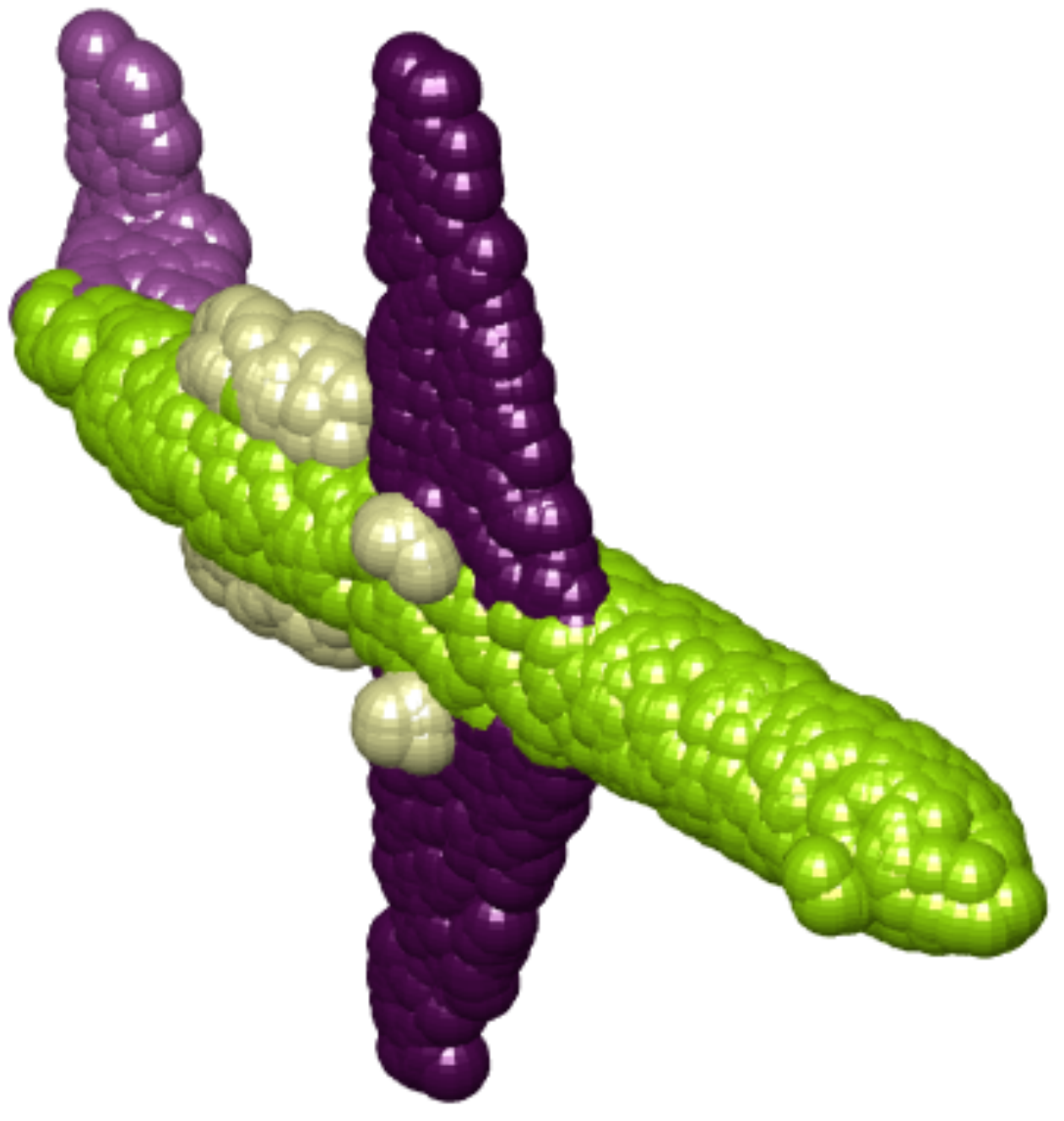}}
    \subfigure[airplane]{\includegraphics[scale=0.18]{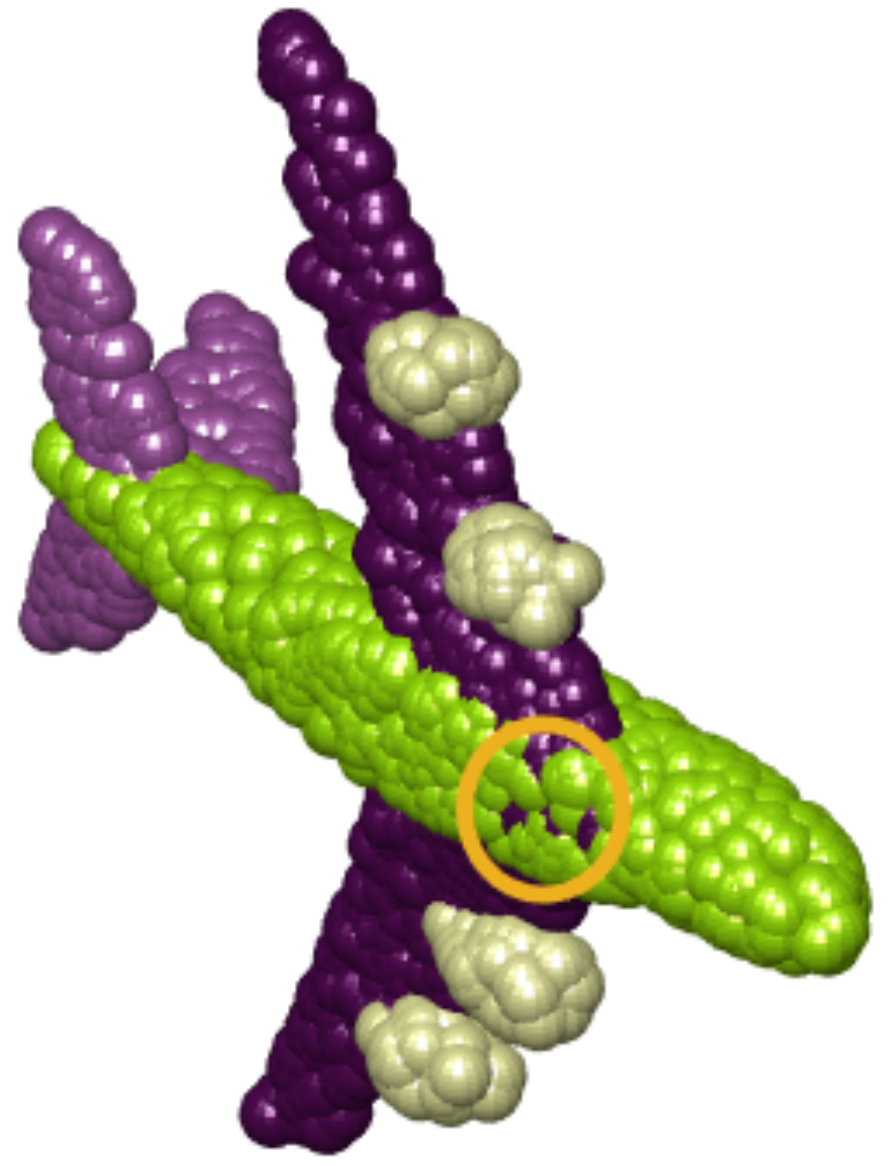}
    \includegraphics[scale=0.18]{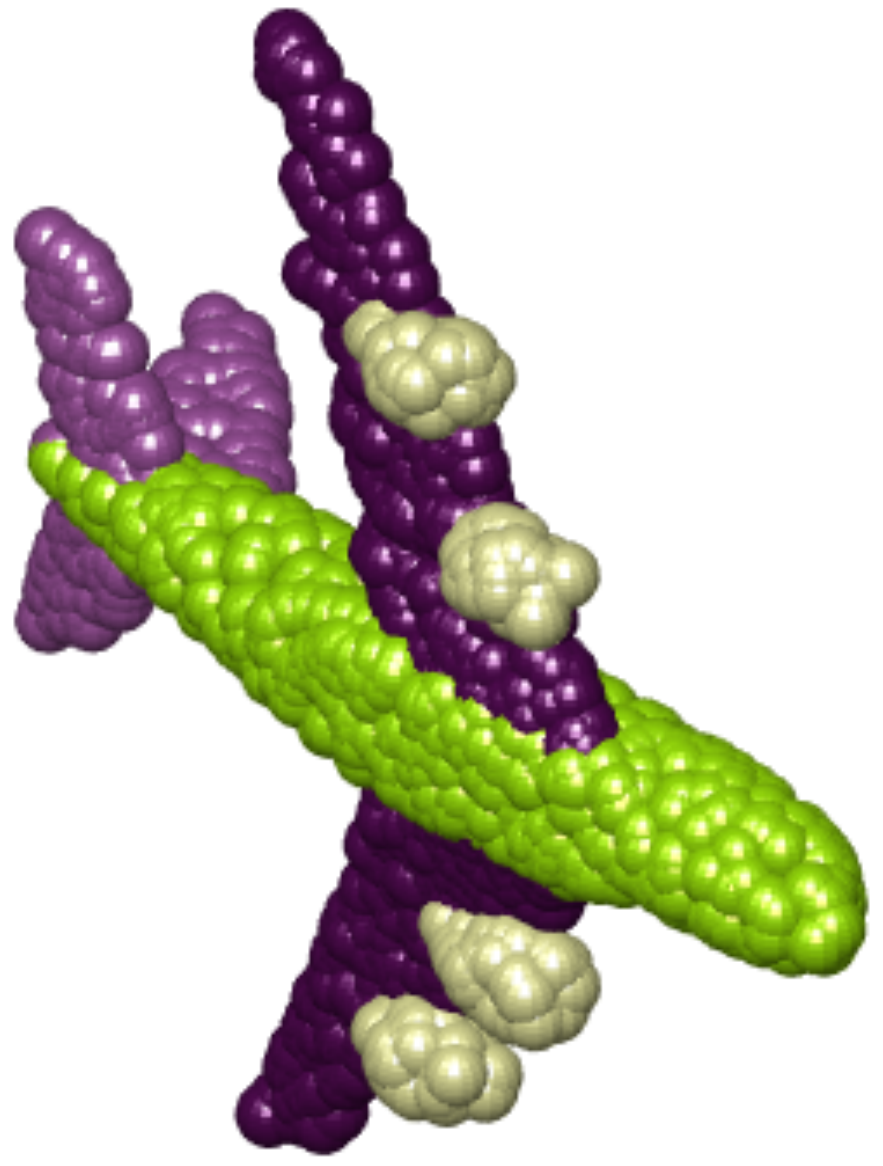}}
        \subfigure[airplane]{\includegraphics[scale=0.20]{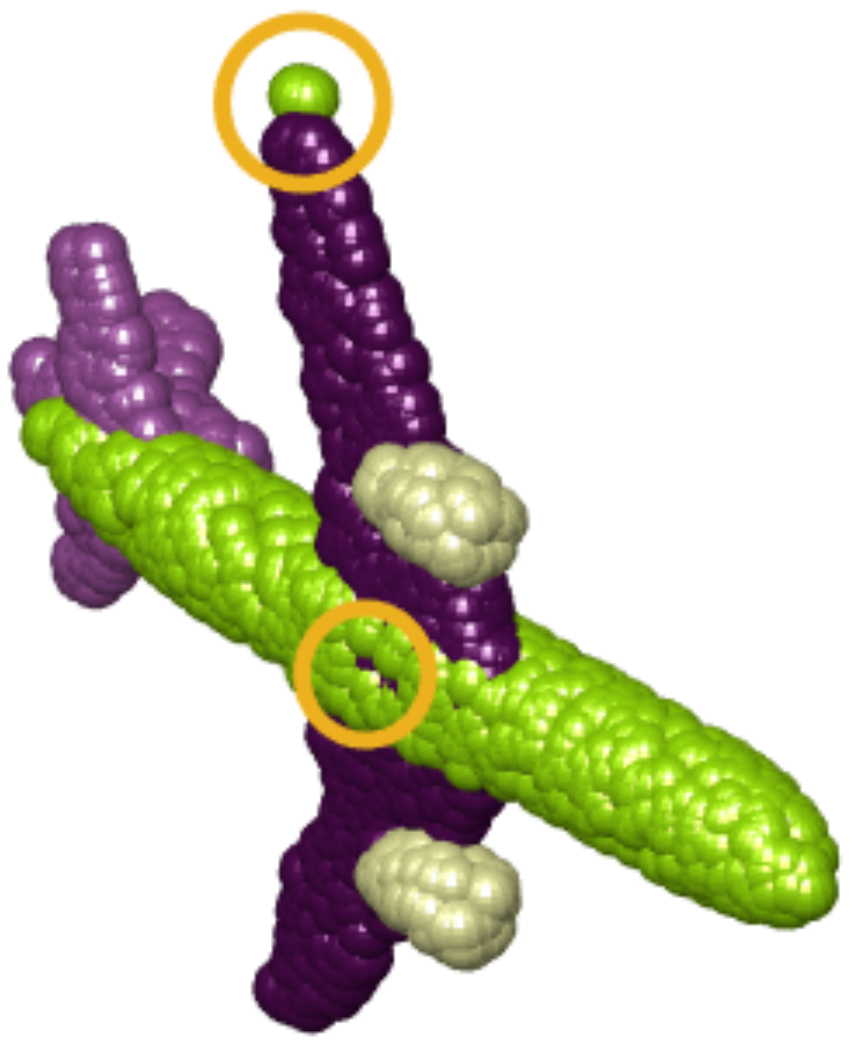}
   \includegraphics[scale=0.20]{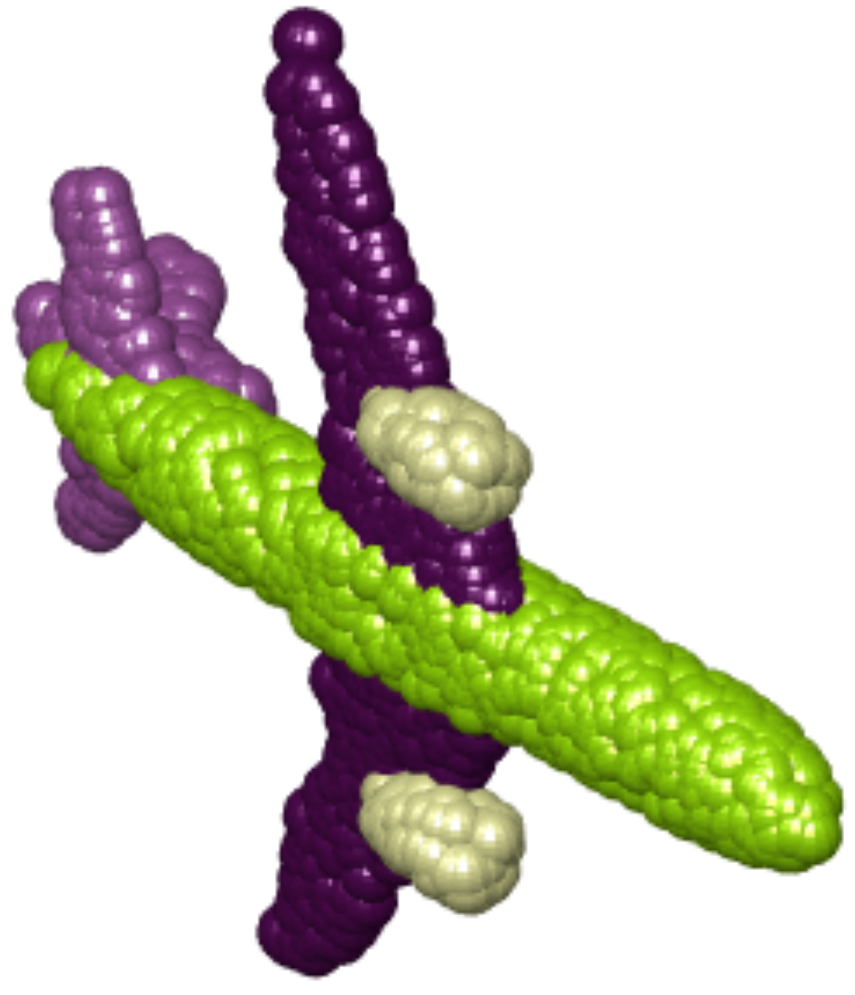}}
    ~~~~~~
    \subfigure[airplane]{\includegraphics[scale=0.15]{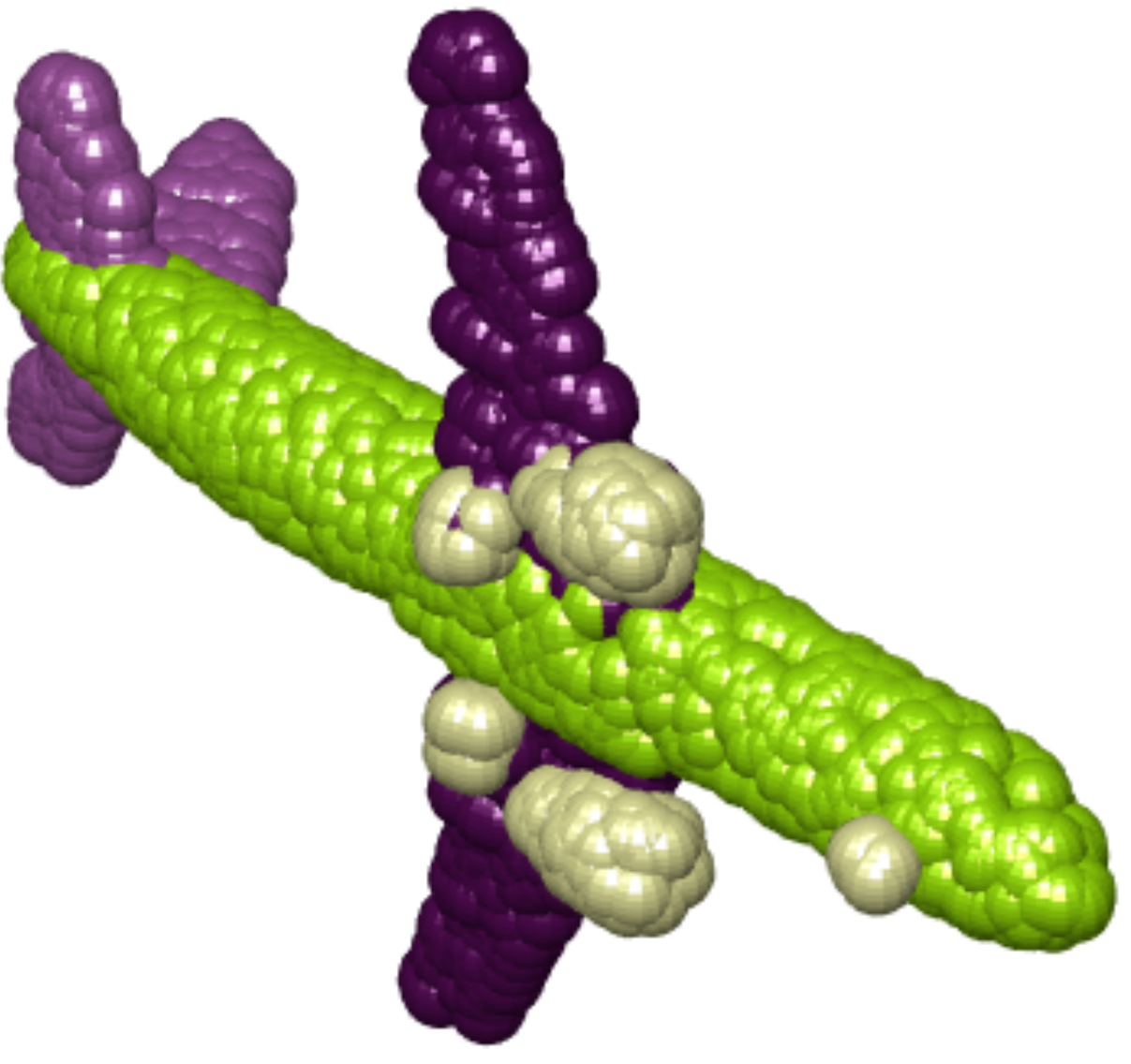}
    \includegraphics[scale=0.15]{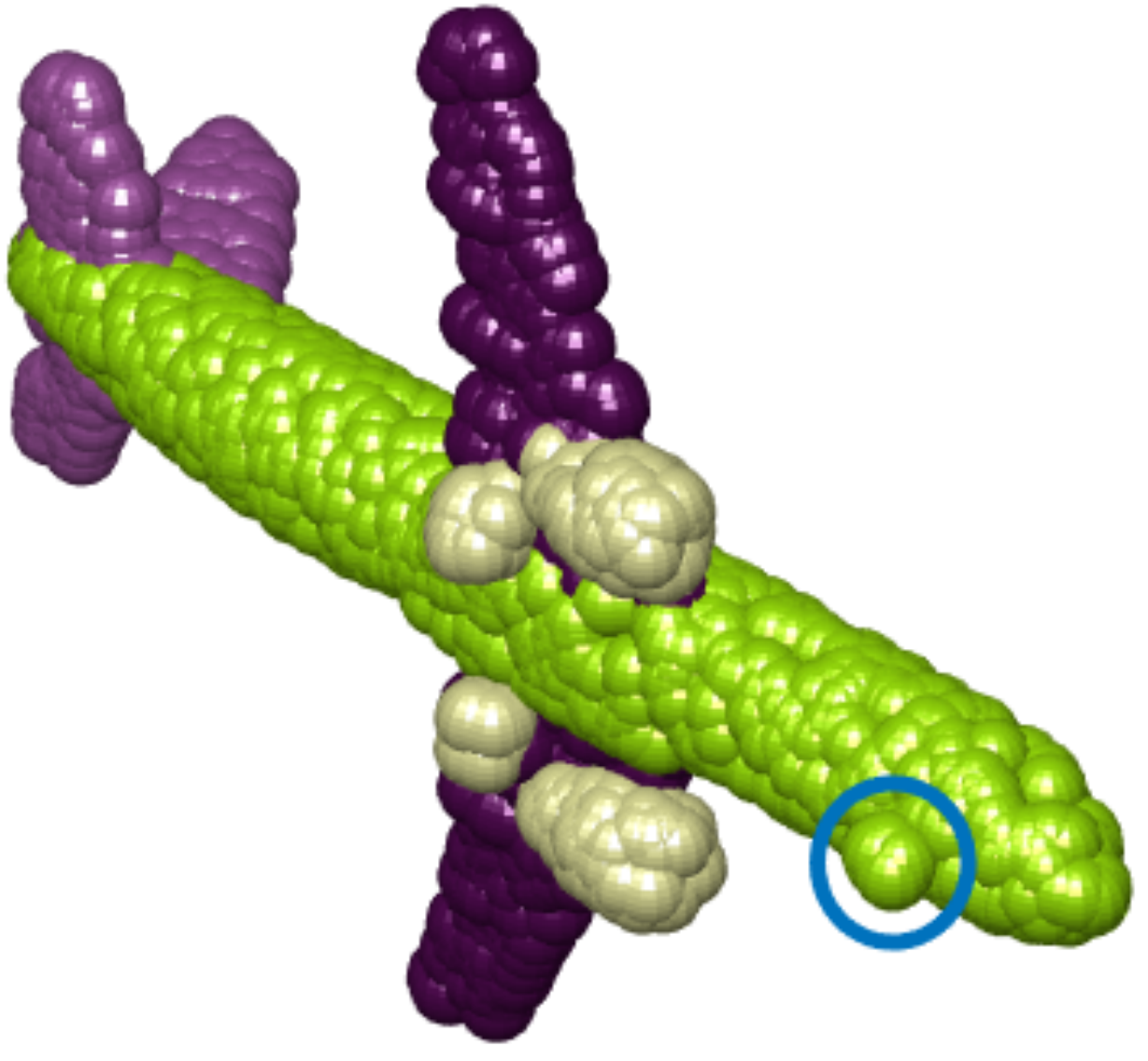}}
        \subfigure[bag]{\includegraphics[scale=0.13]{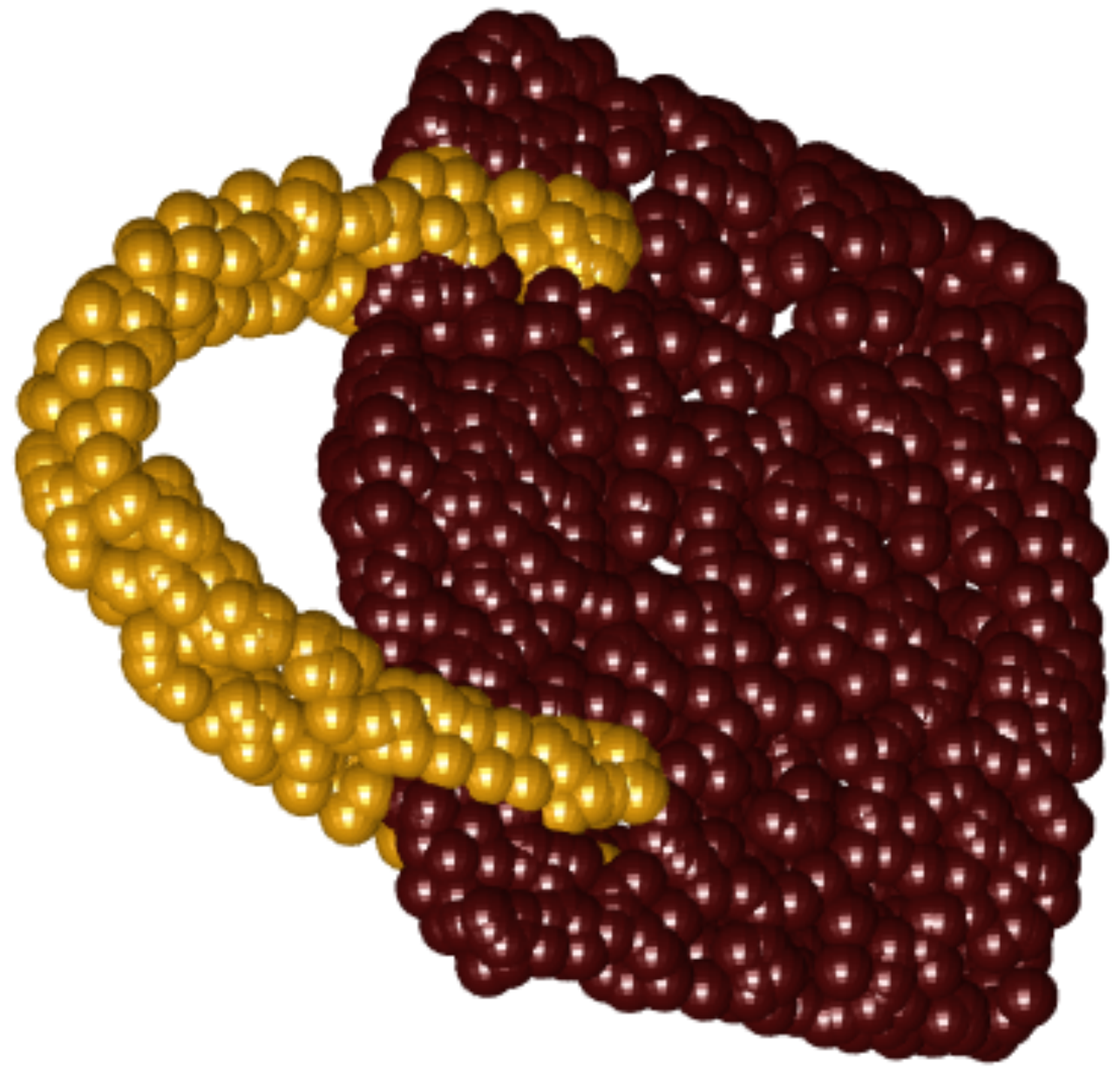}
    \includegraphics[scale=0.13]{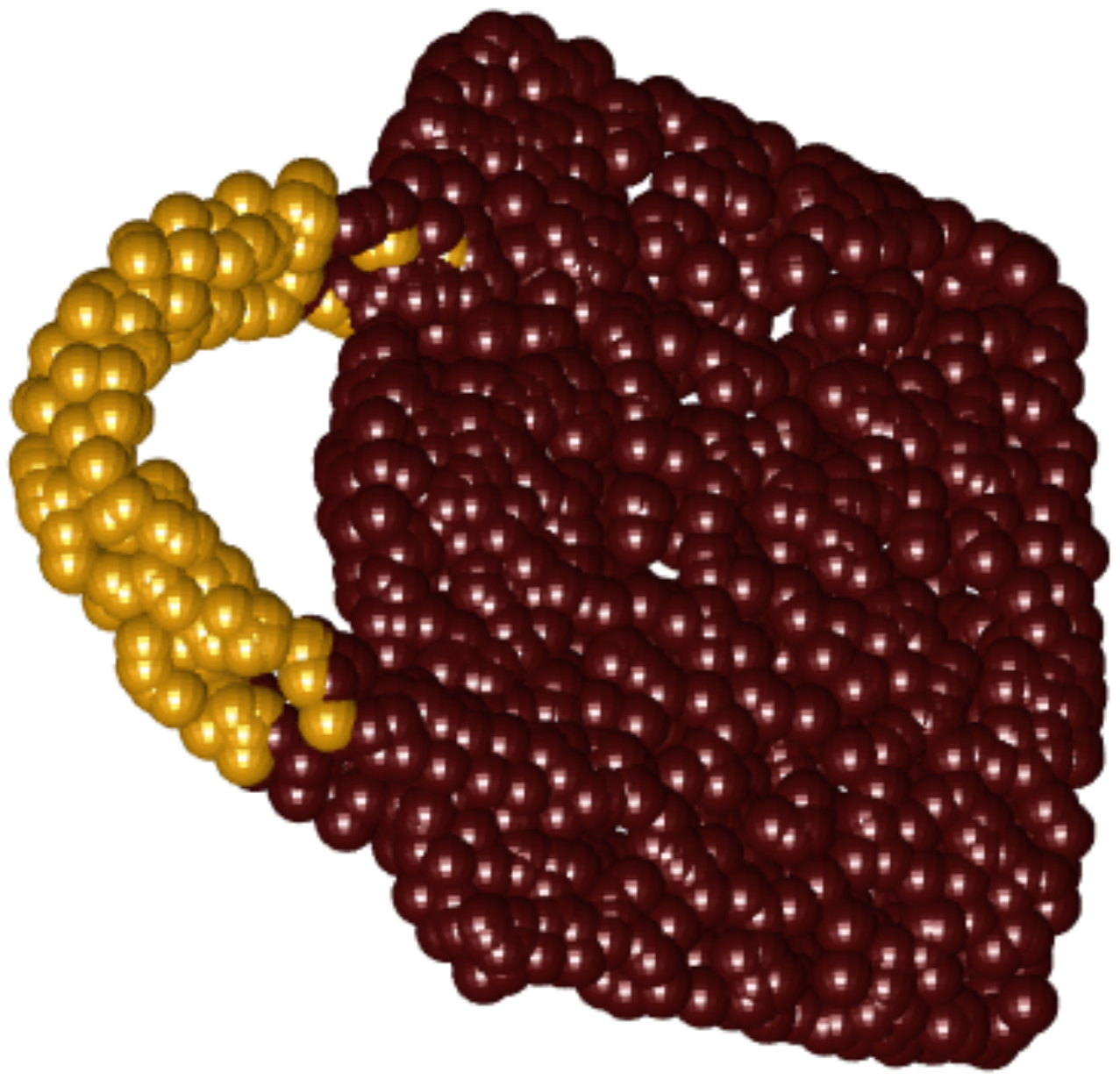}}
    \subfigure[bag]{\includegraphics[scale=0.10]{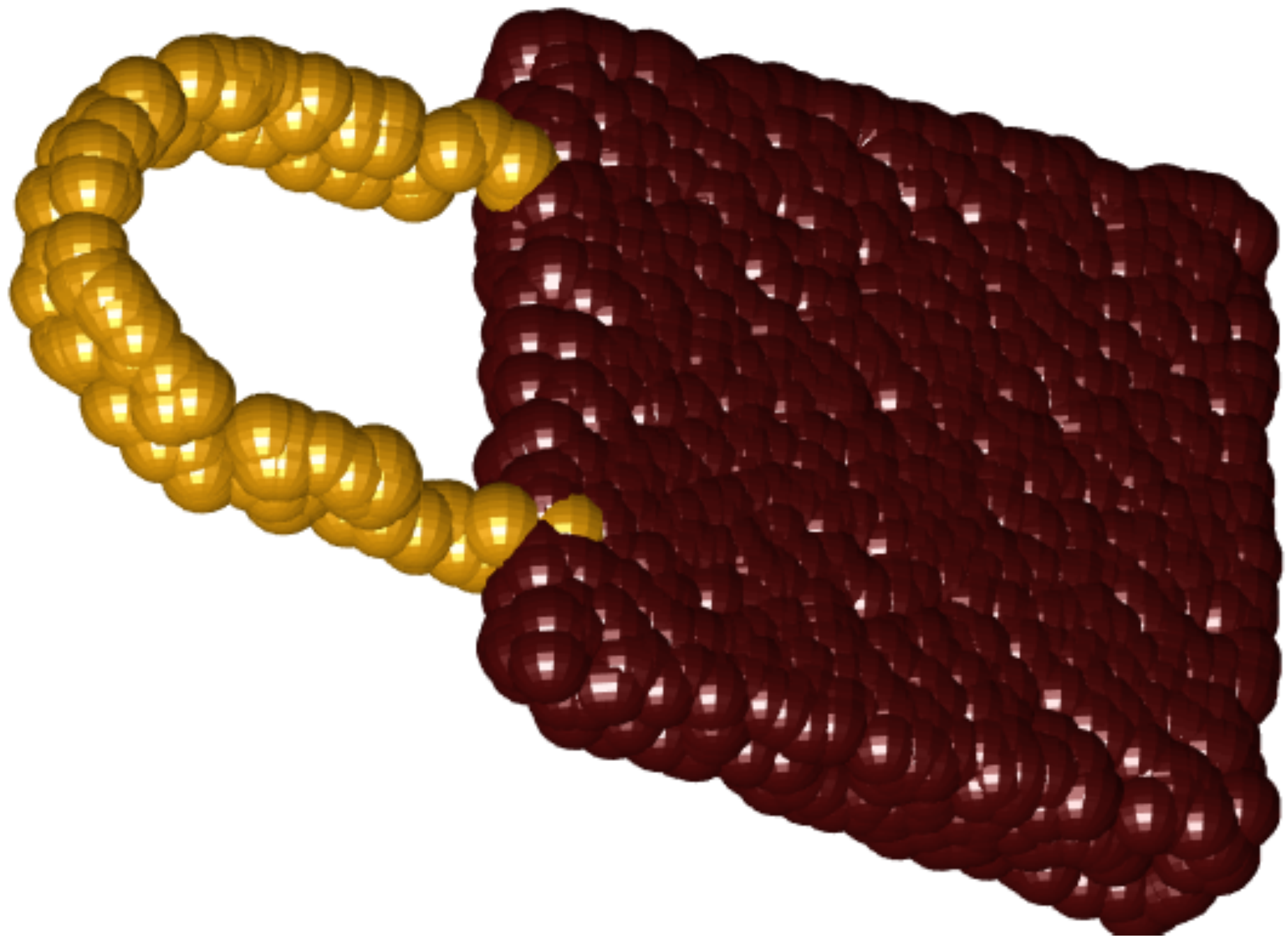}
    \includegraphics[scale=0.10]{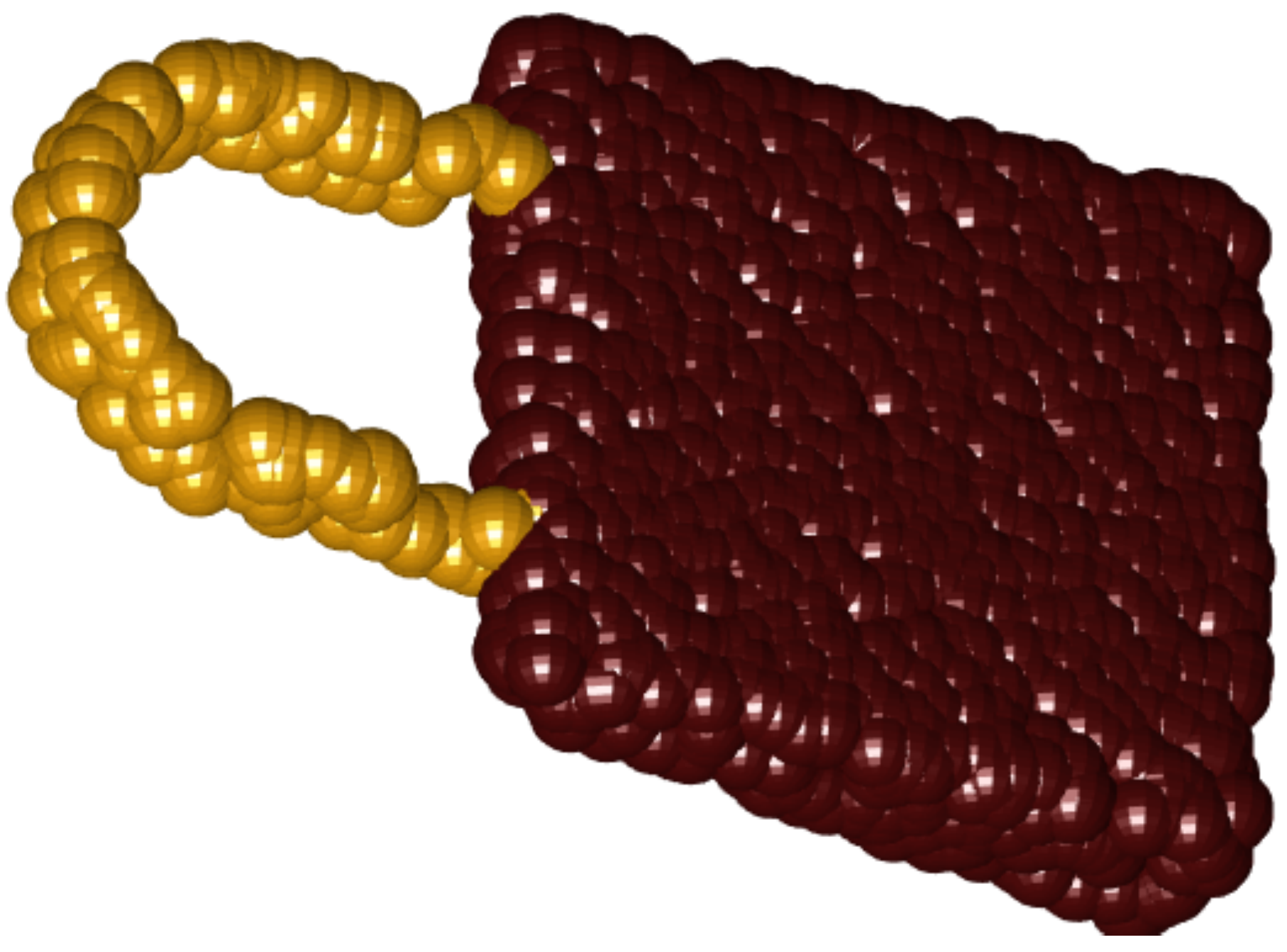}}
     ~~~~~~
    \subfigure[cap]{\includegraphics[scale=0.15]{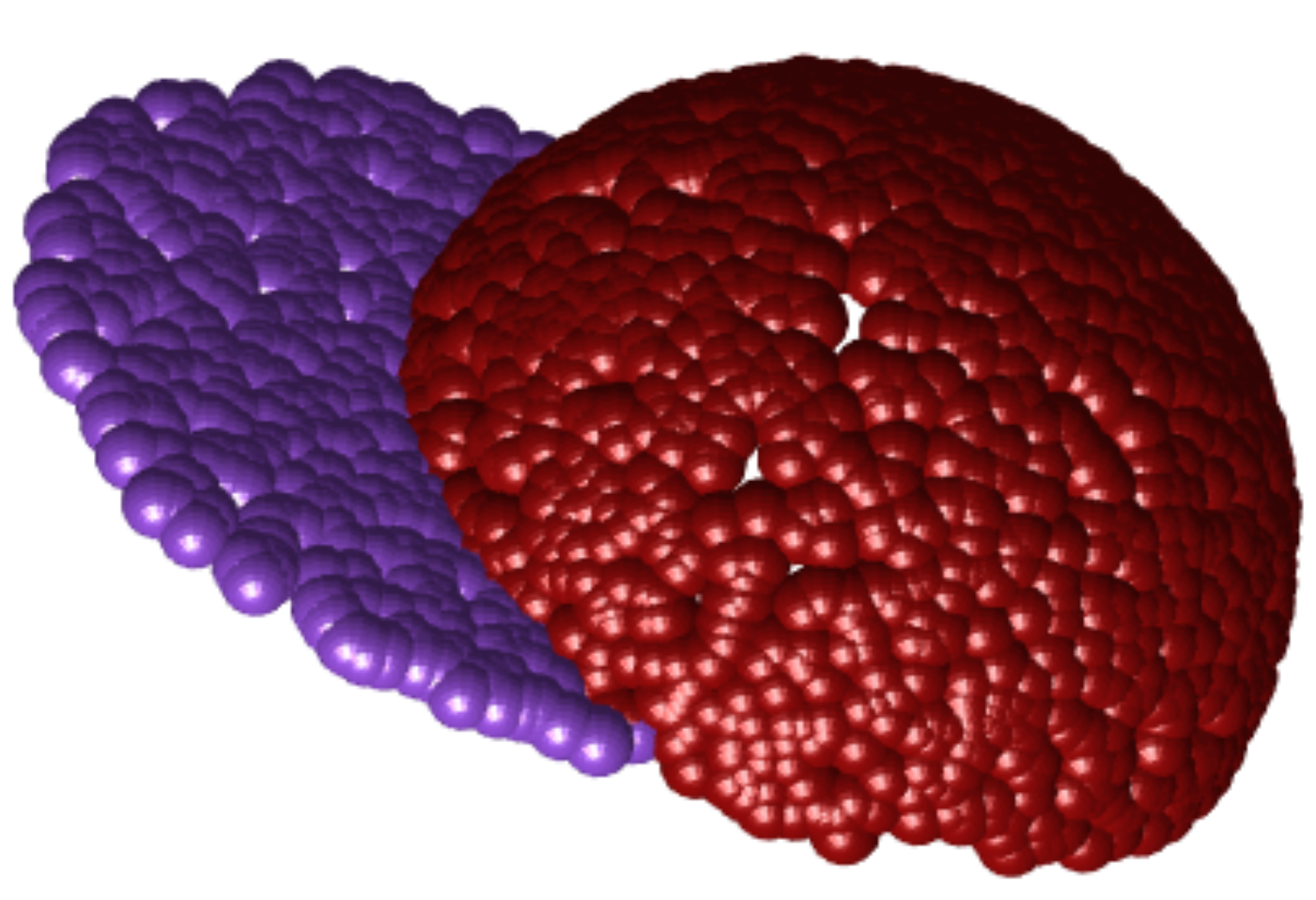}
   \includegraphics[scale=0.15]{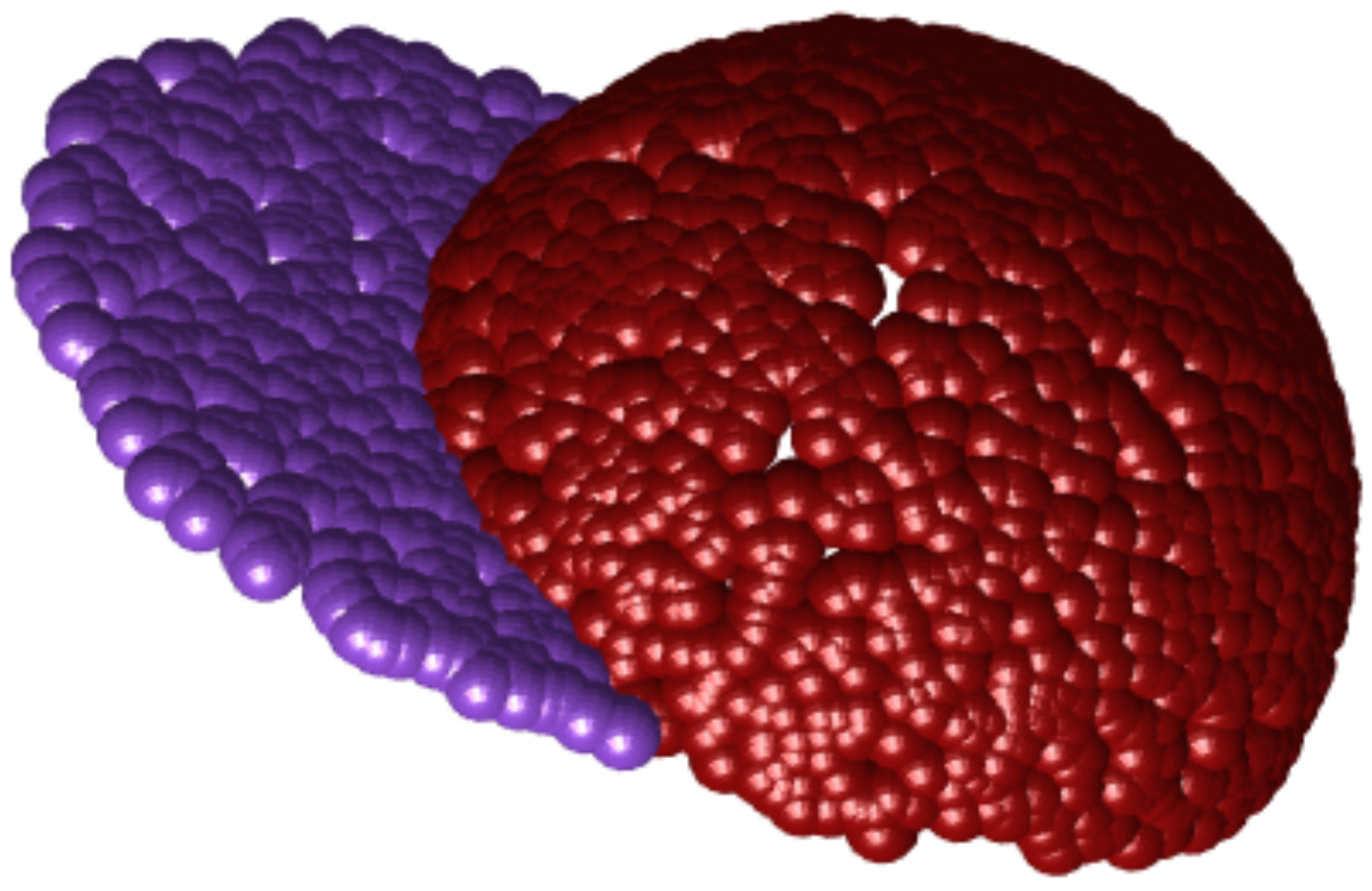}}
        \subfigure[cap]{\includegraphics[scale=0.13]{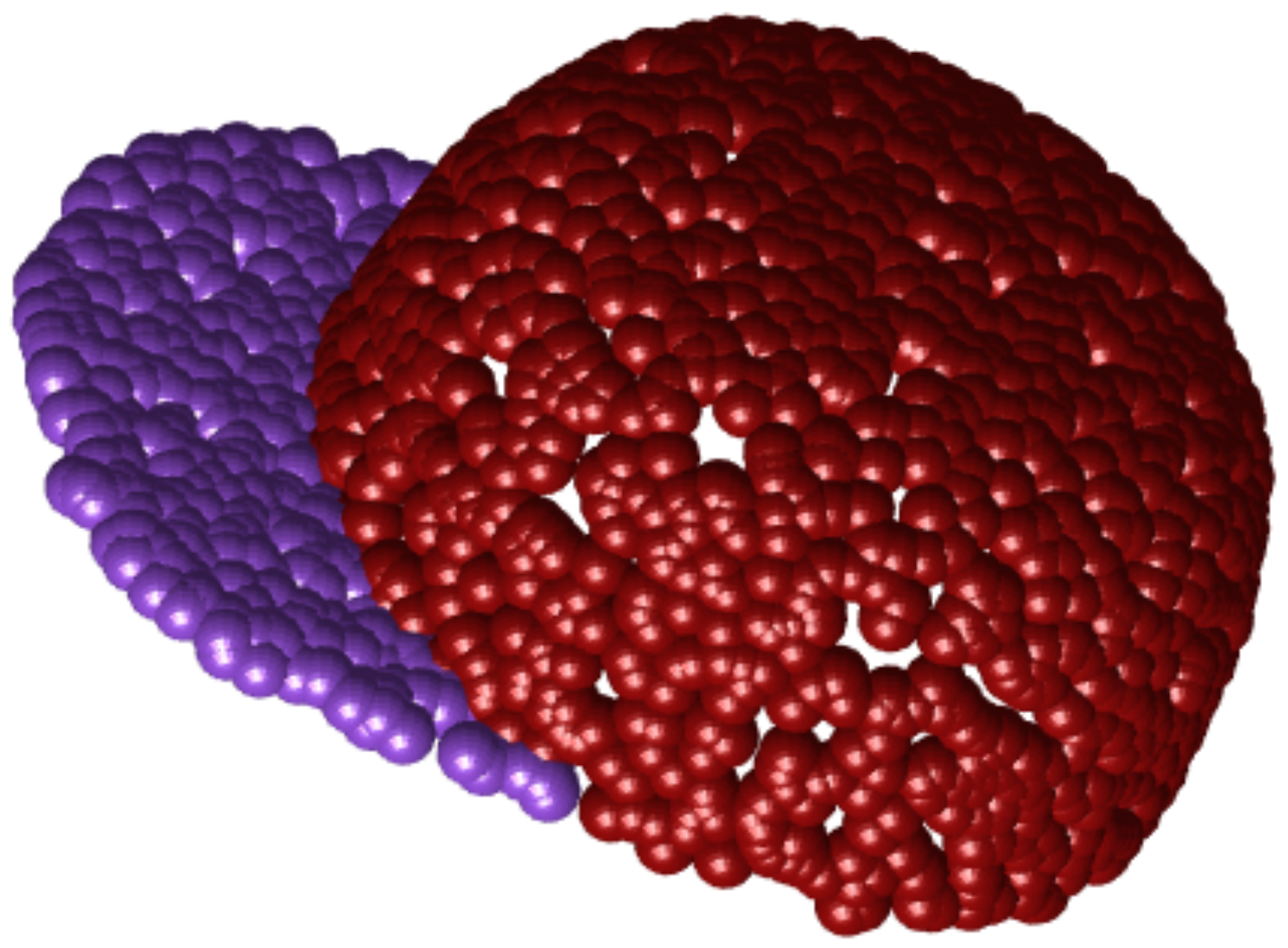}
   \includegraphics[scale=0.13]{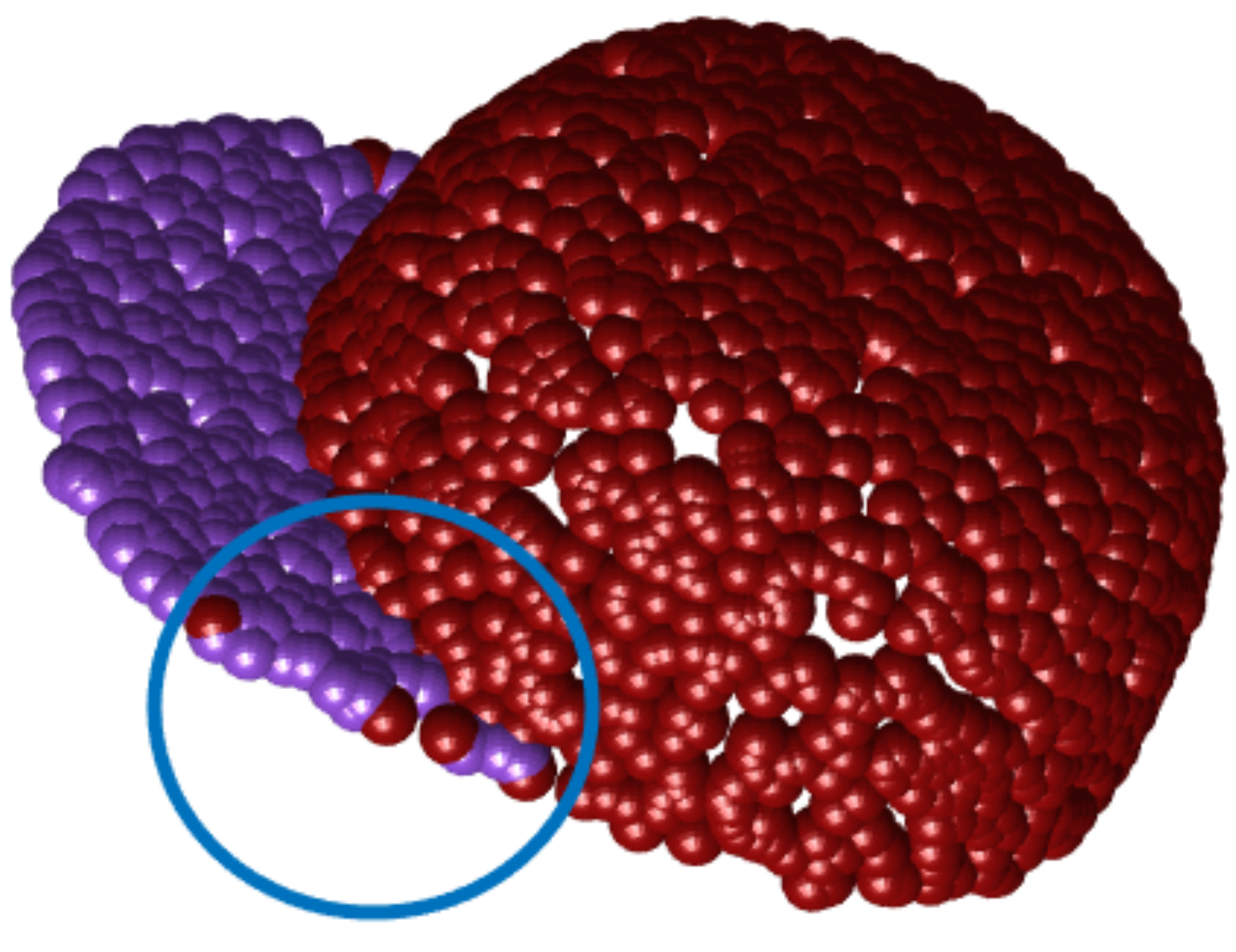}}
    \subfigure[earphone]{\includegraphics[scale=0.15]{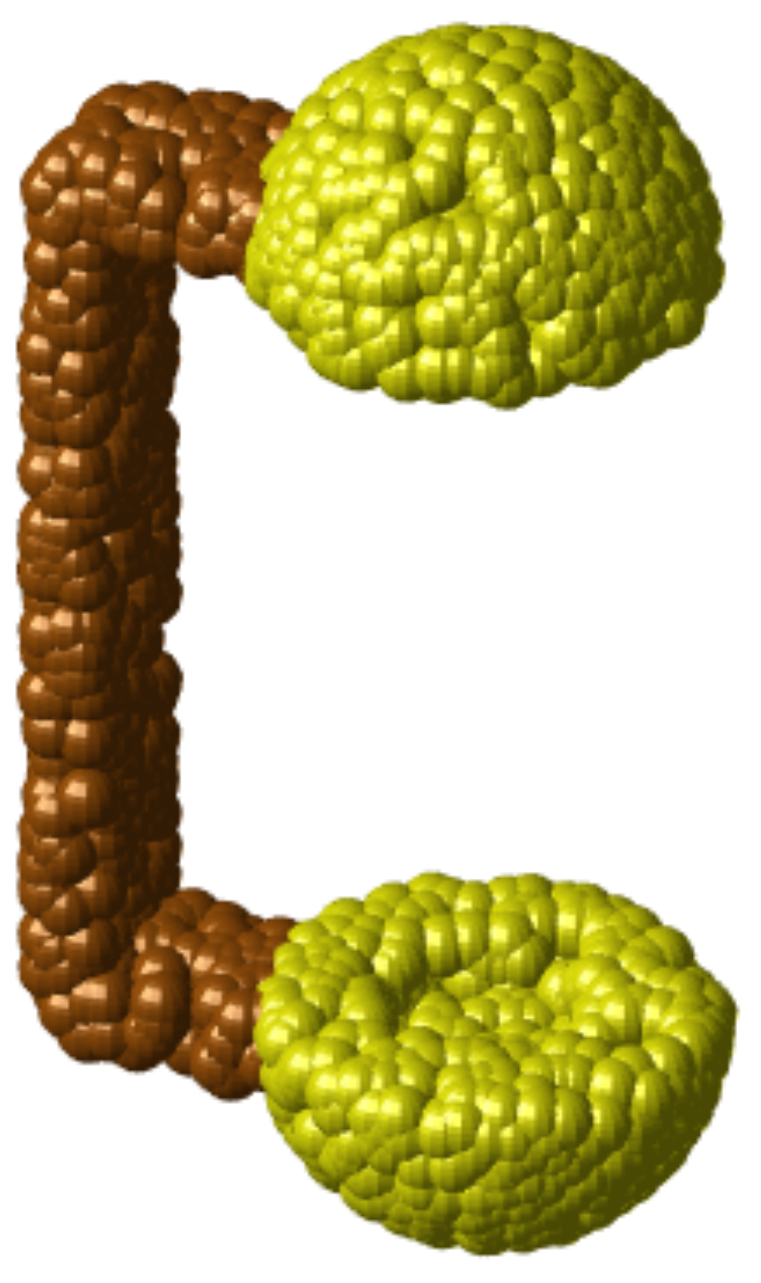}
    \includegraphics[scale=0.15]{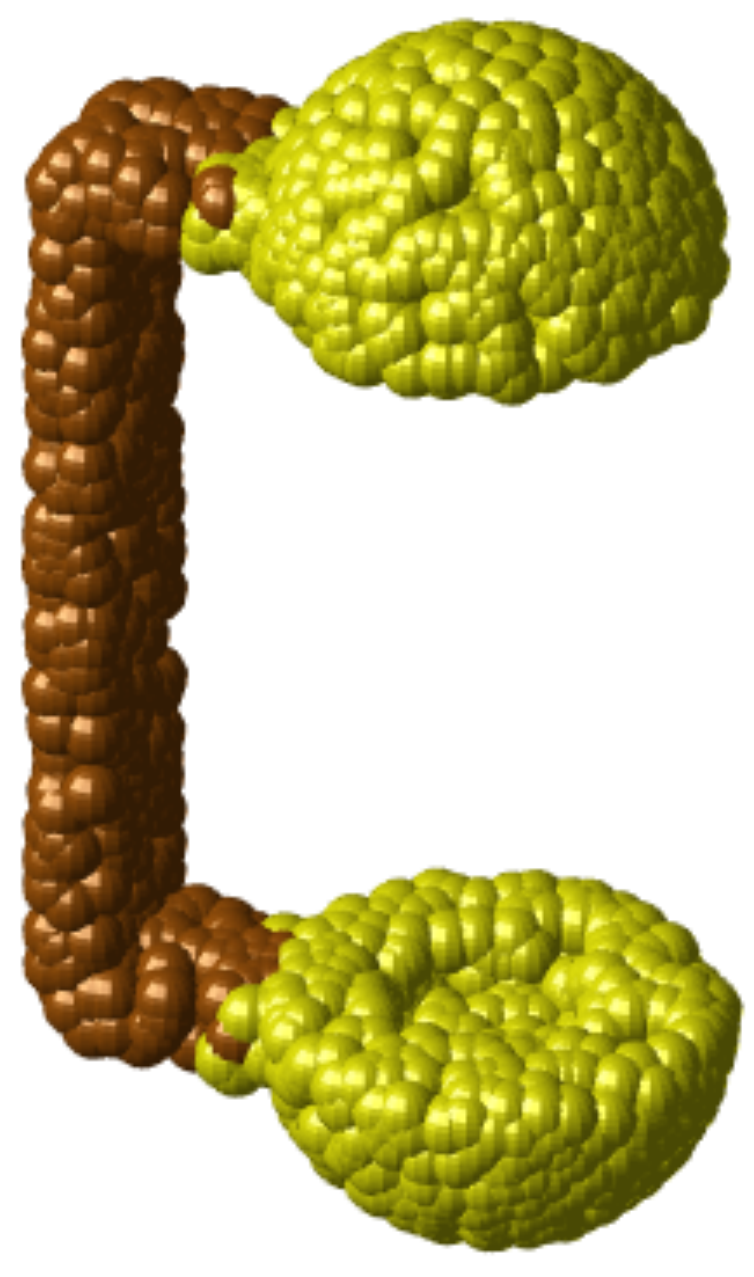}}
     ~~~~~~
    \subfigure[earphone]{\includegraphics[scale=0.13]{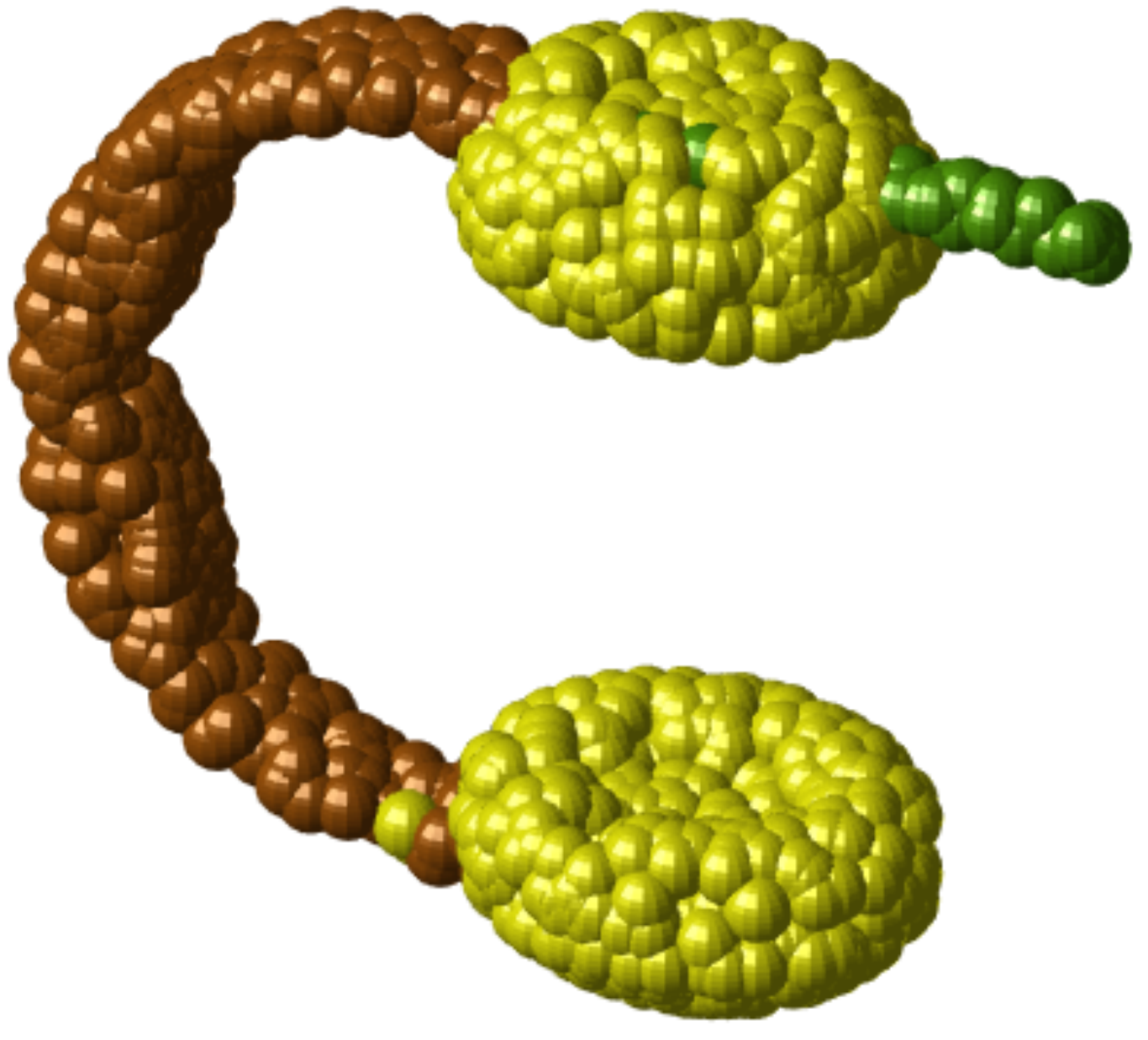}
    \includegraphics[scale=0.13]{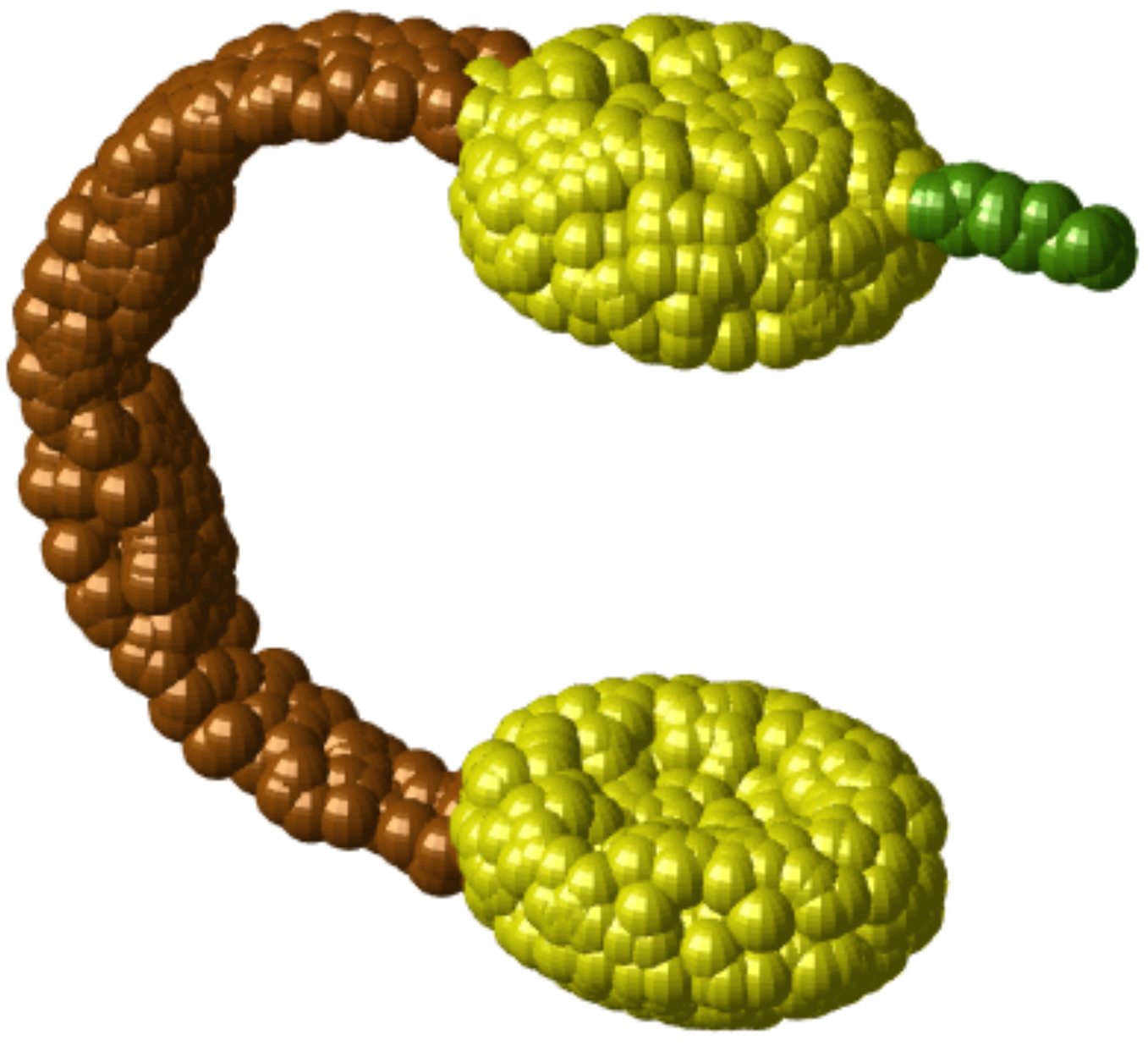}}
        \subfigure[earphone]{\includegraphics[scale=0.13]{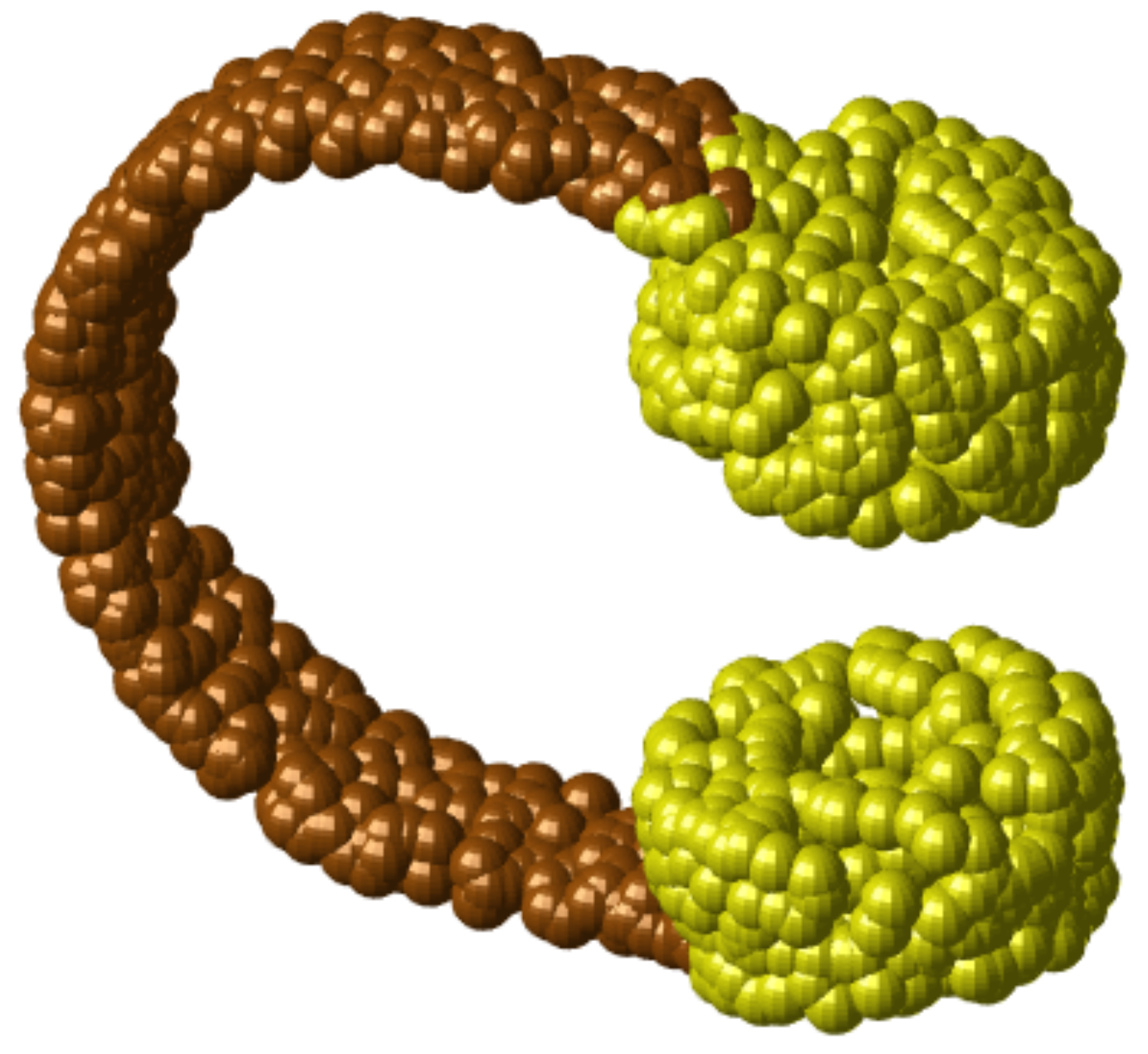}
    \includegraphics[scale=0.13]{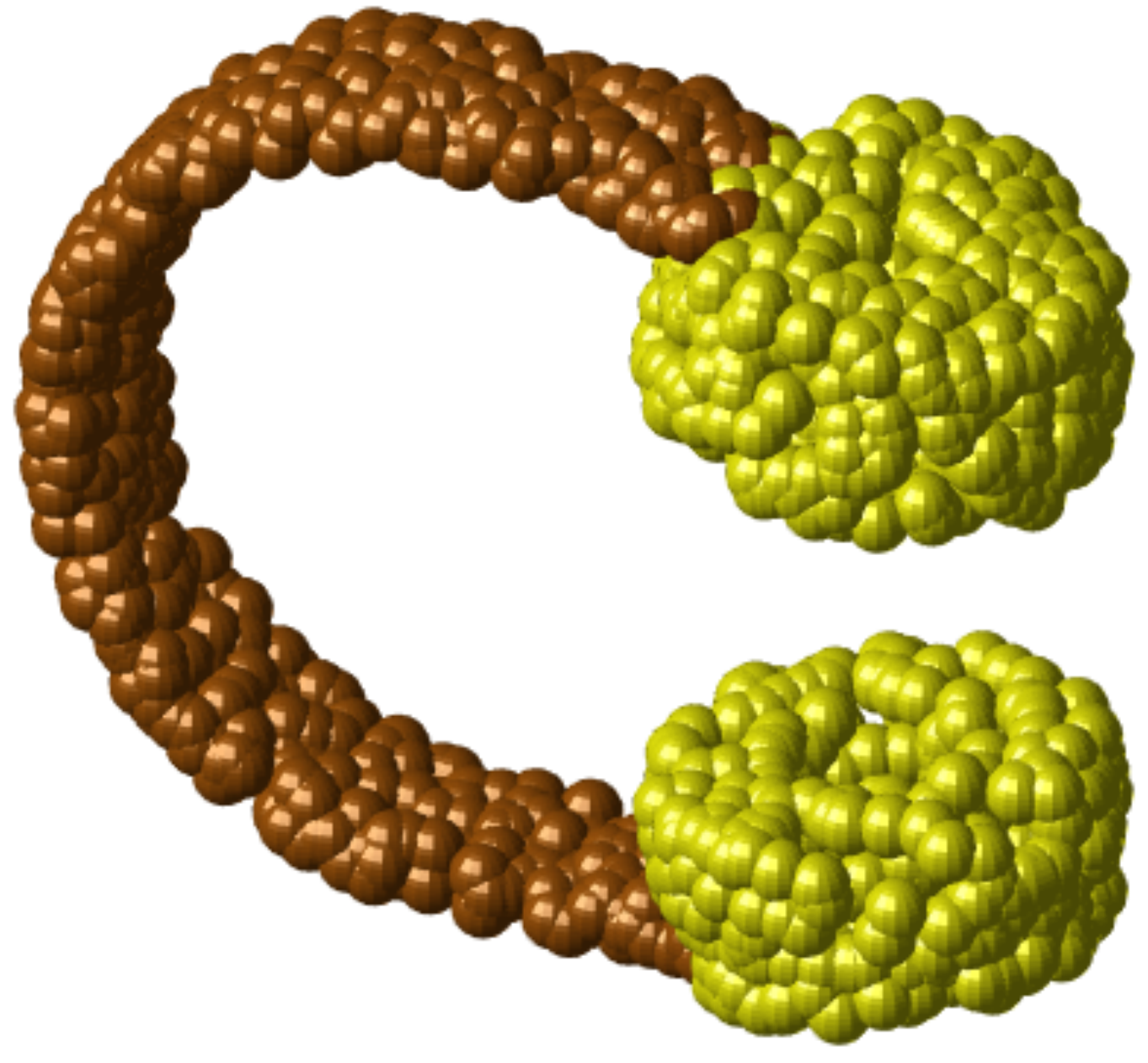}}
    \subfigure[earphone]{\includegraphics[scale=0.13]{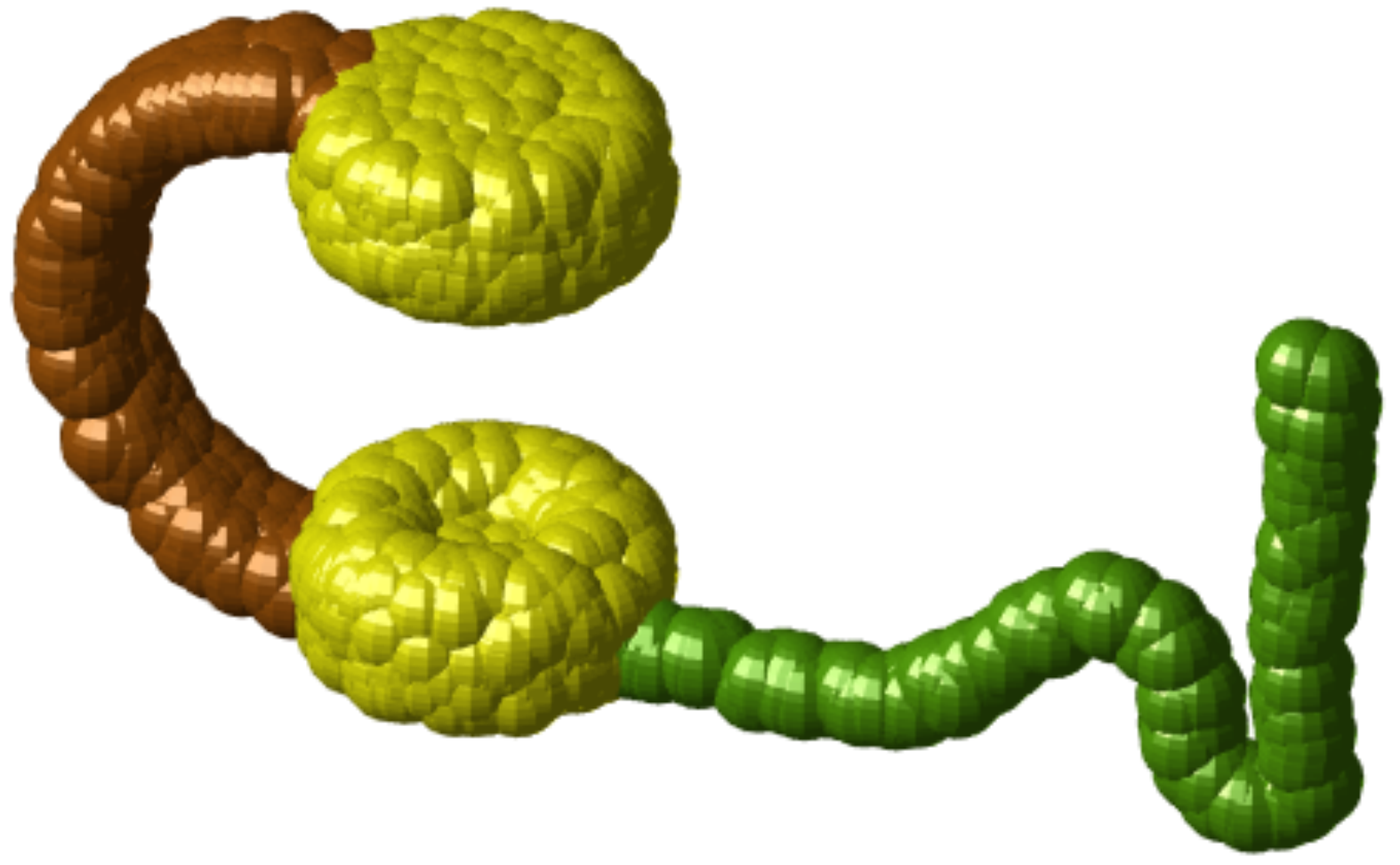}
   \includegraphics[scale=0.13]{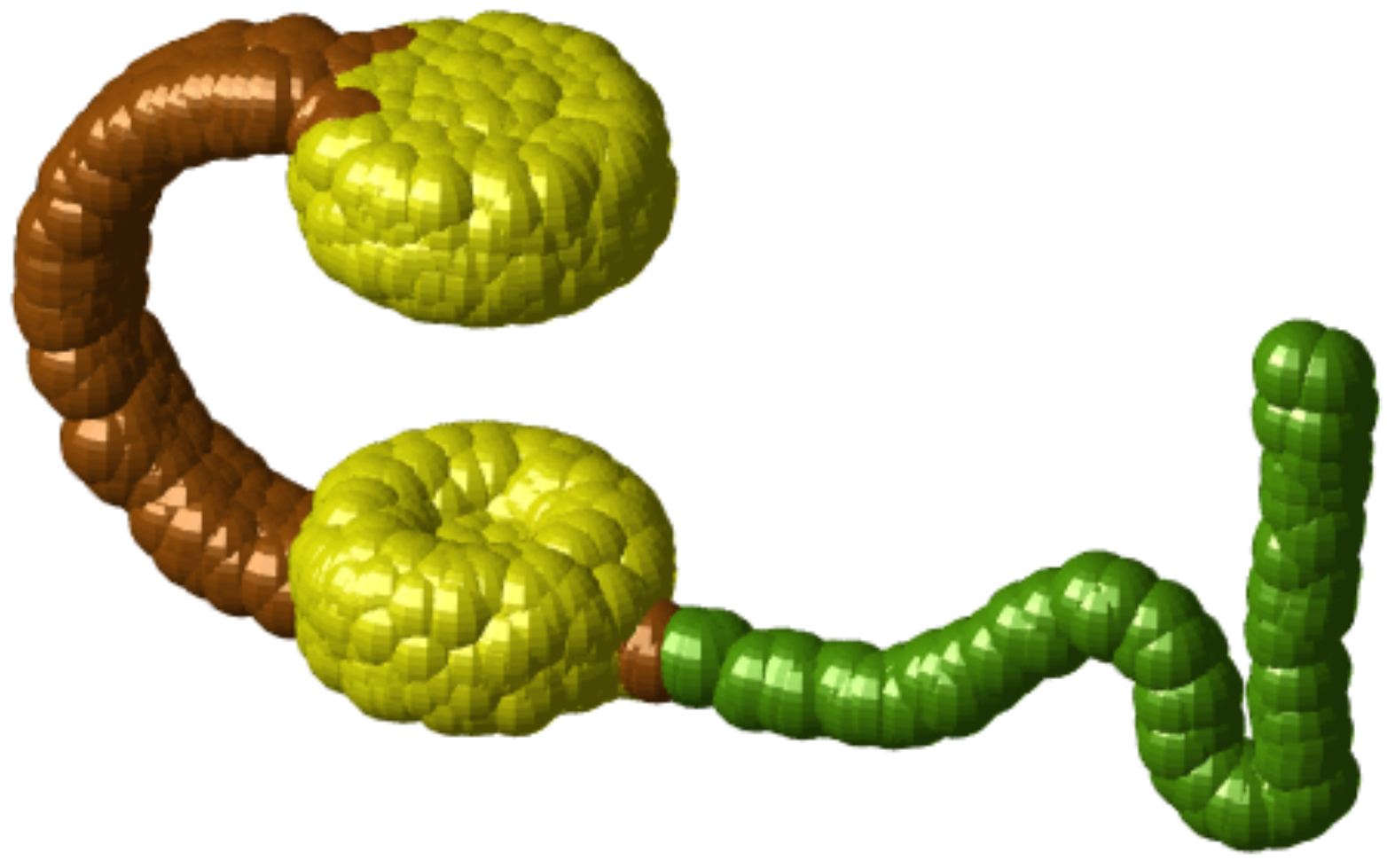}}
    ~~~~~~
    \subfigure[rocket]{\includegraphics[scale=0.13]{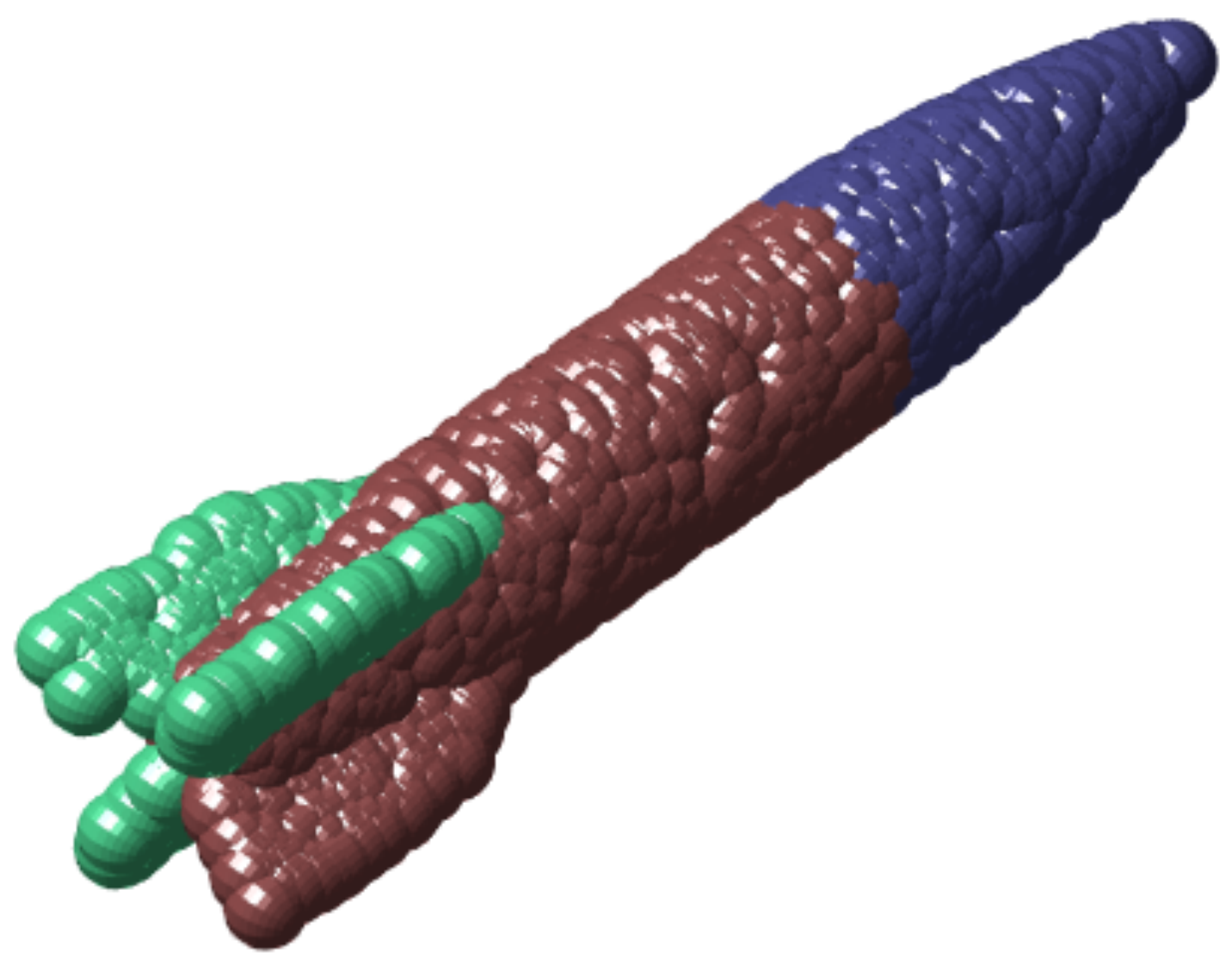}
    \includegraphics[scale=0.13]{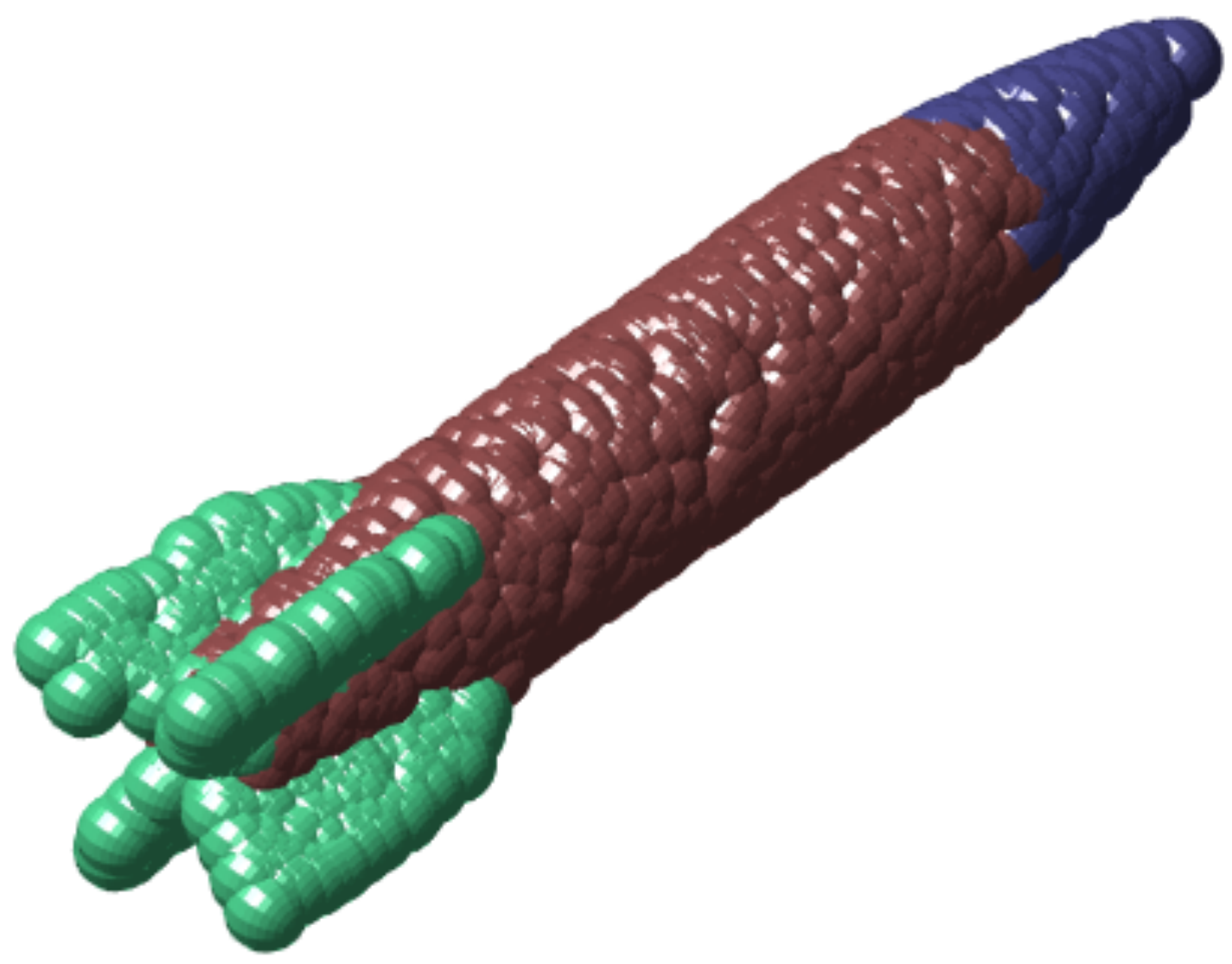}}
    \subfigure[rocket]{\includegraphics[scale=0.13]{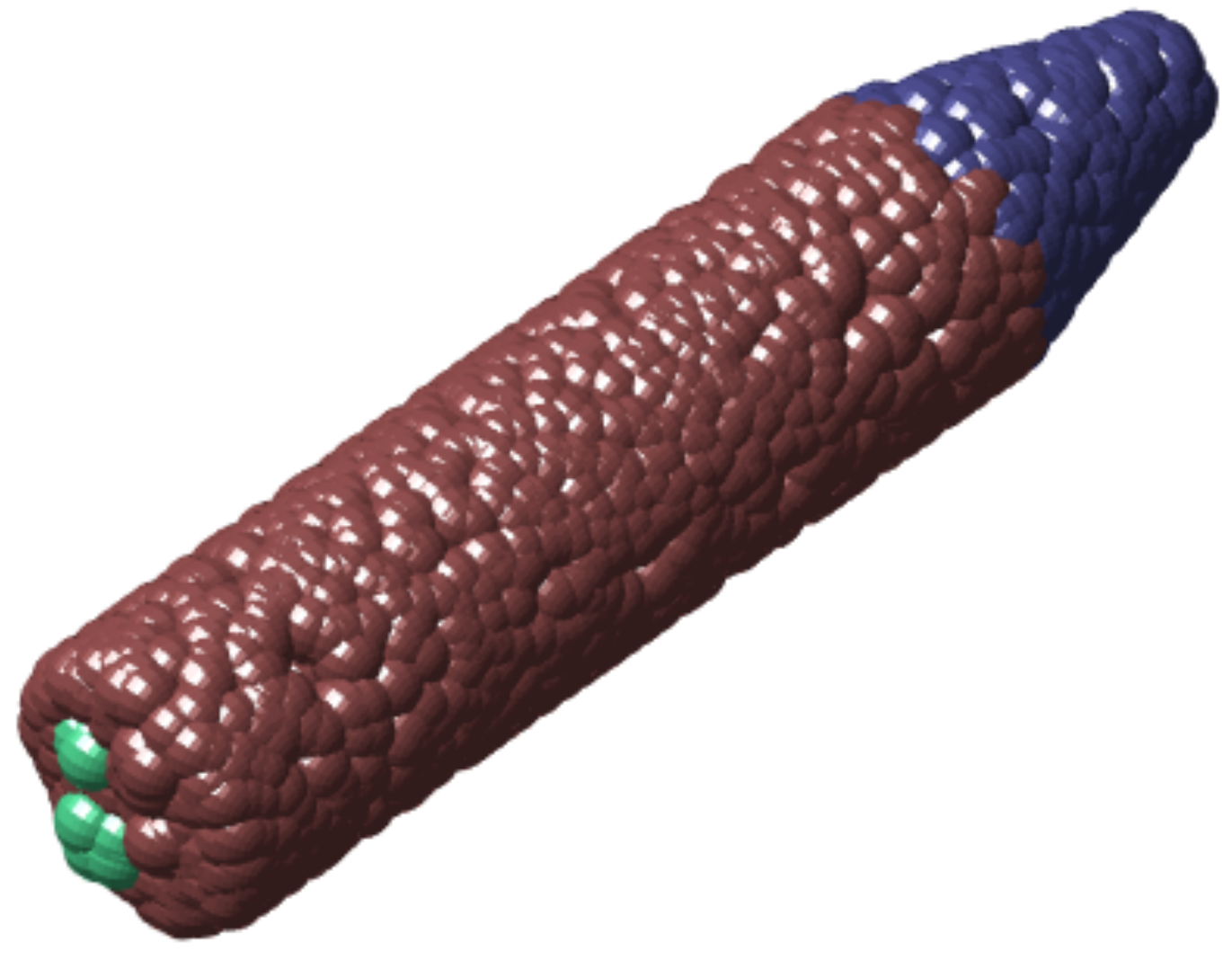}
    \includegraphics[scale=0.13]{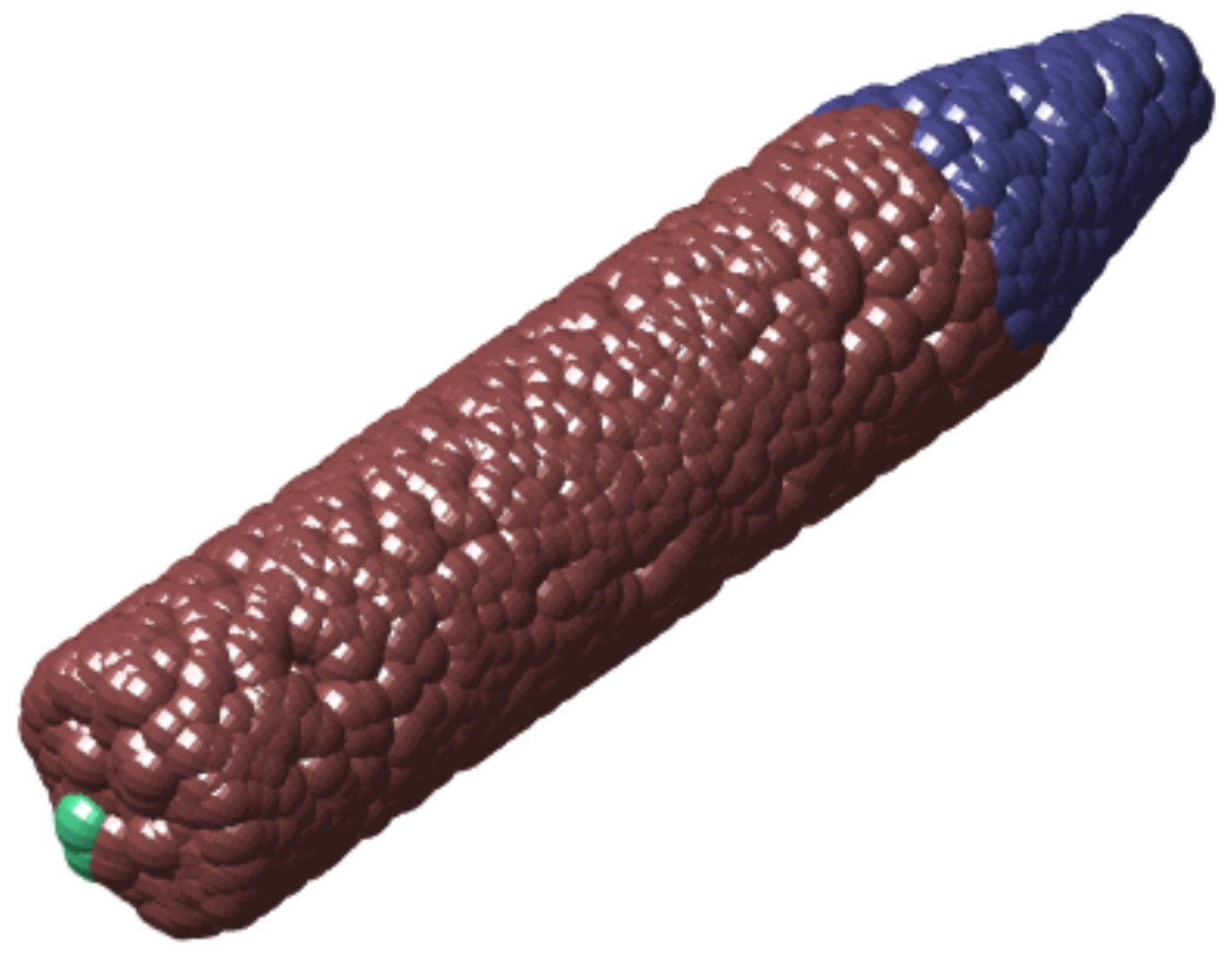}}
    \subfigure[rocket]{\includegraphics[scale=0.13]{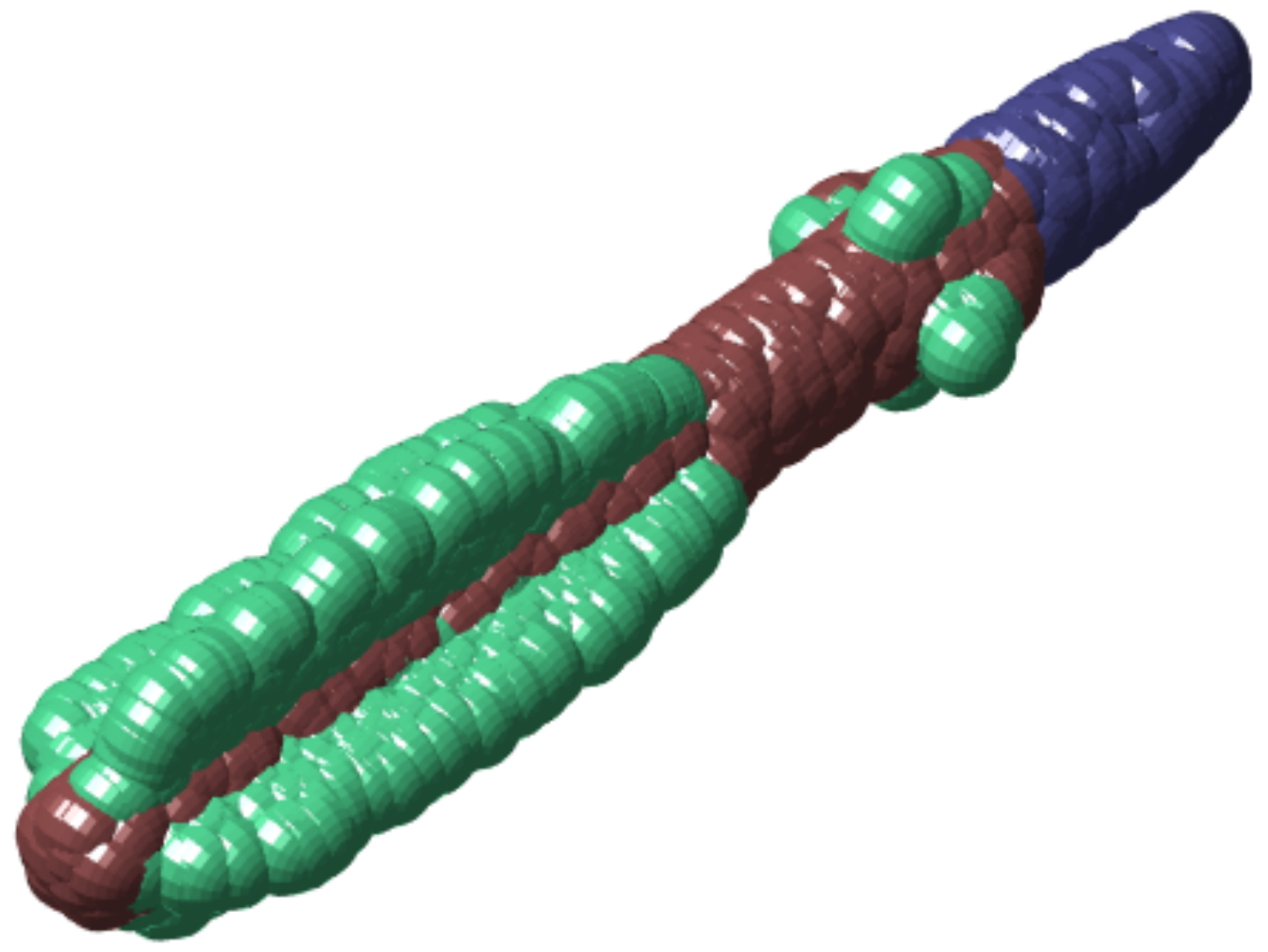}
    \includegraphics[scale=0.13]{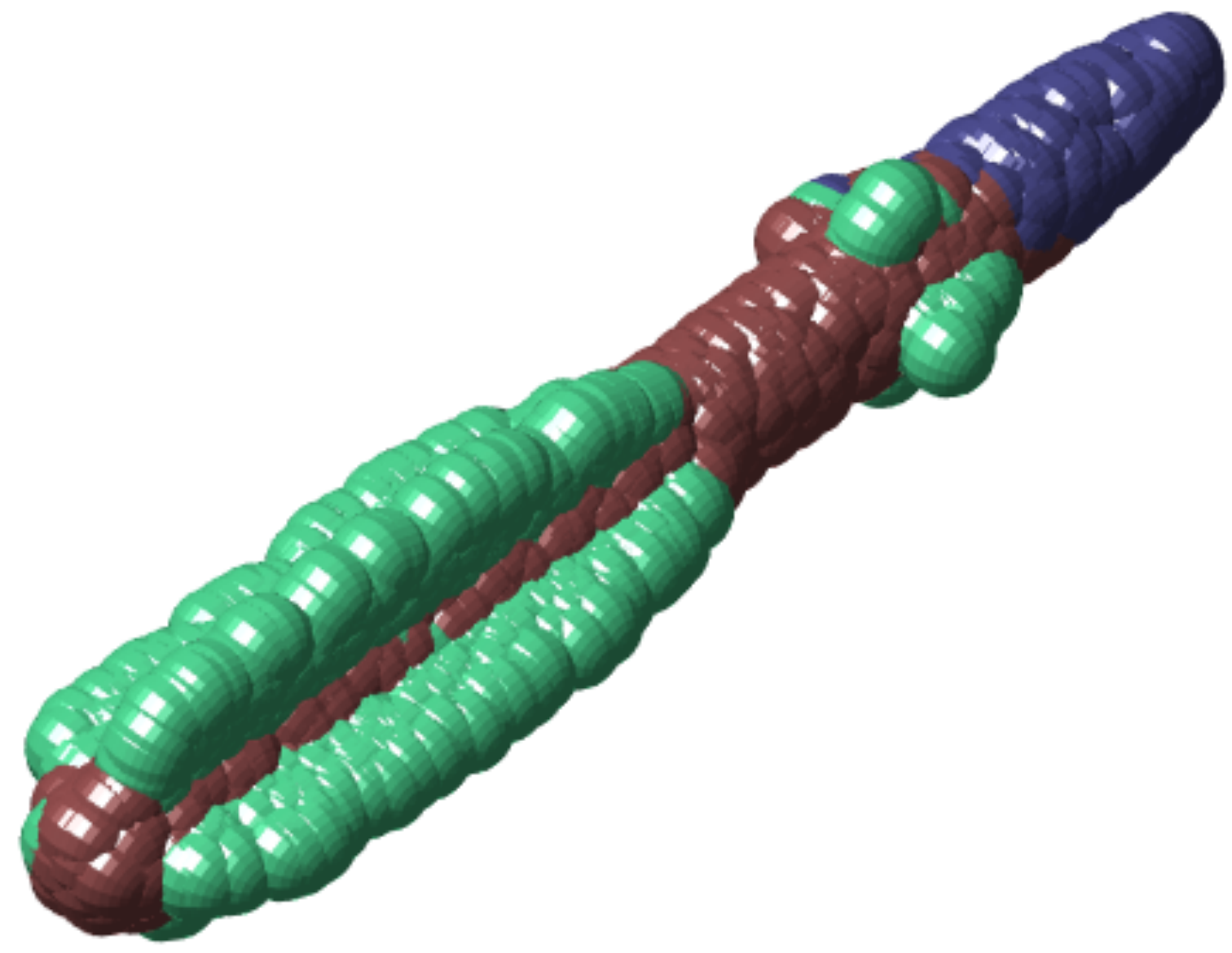}}
    \subfigure[rocket]{\includegraphics[scale=0.13]{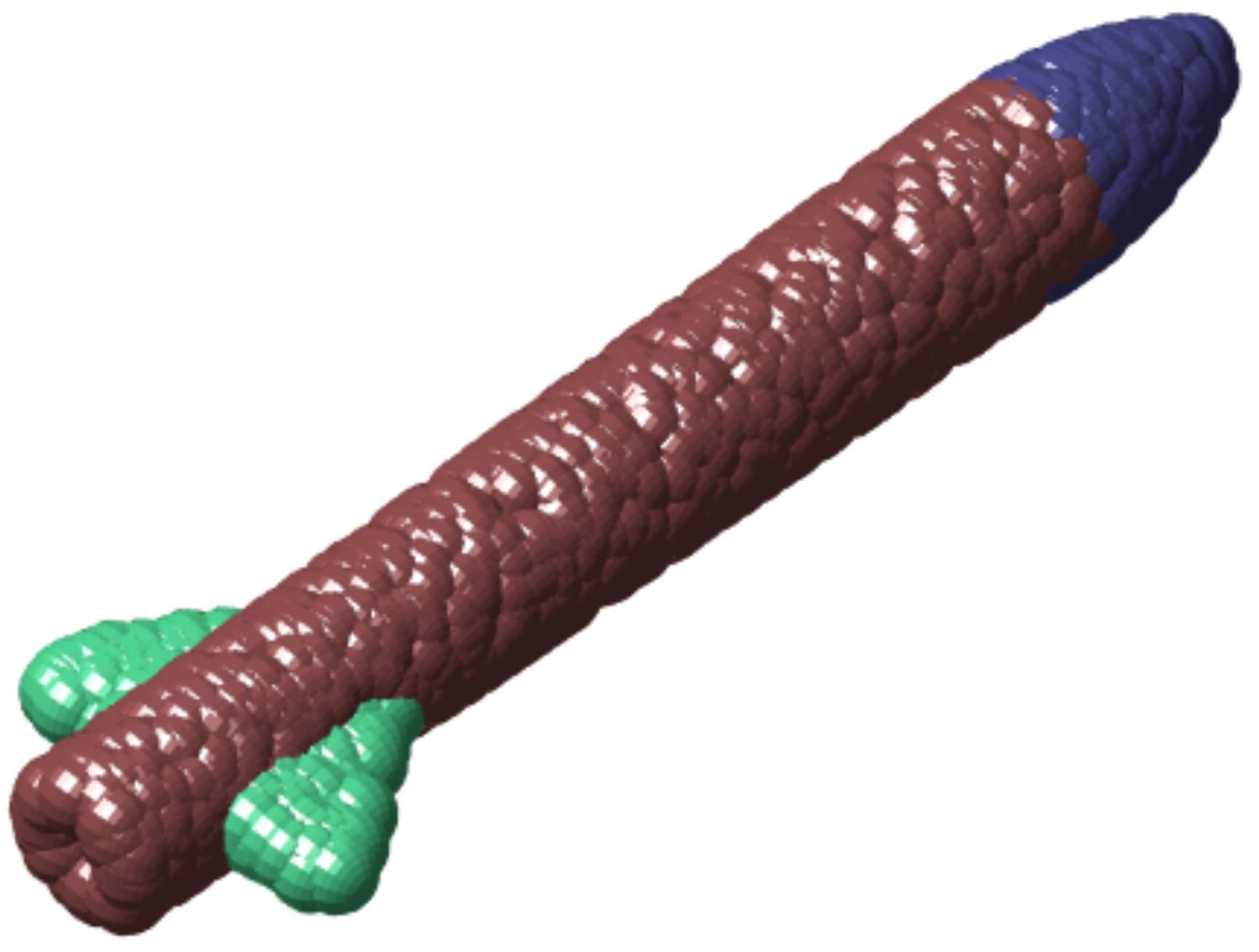}
    \includegraphics[scale=0.13]{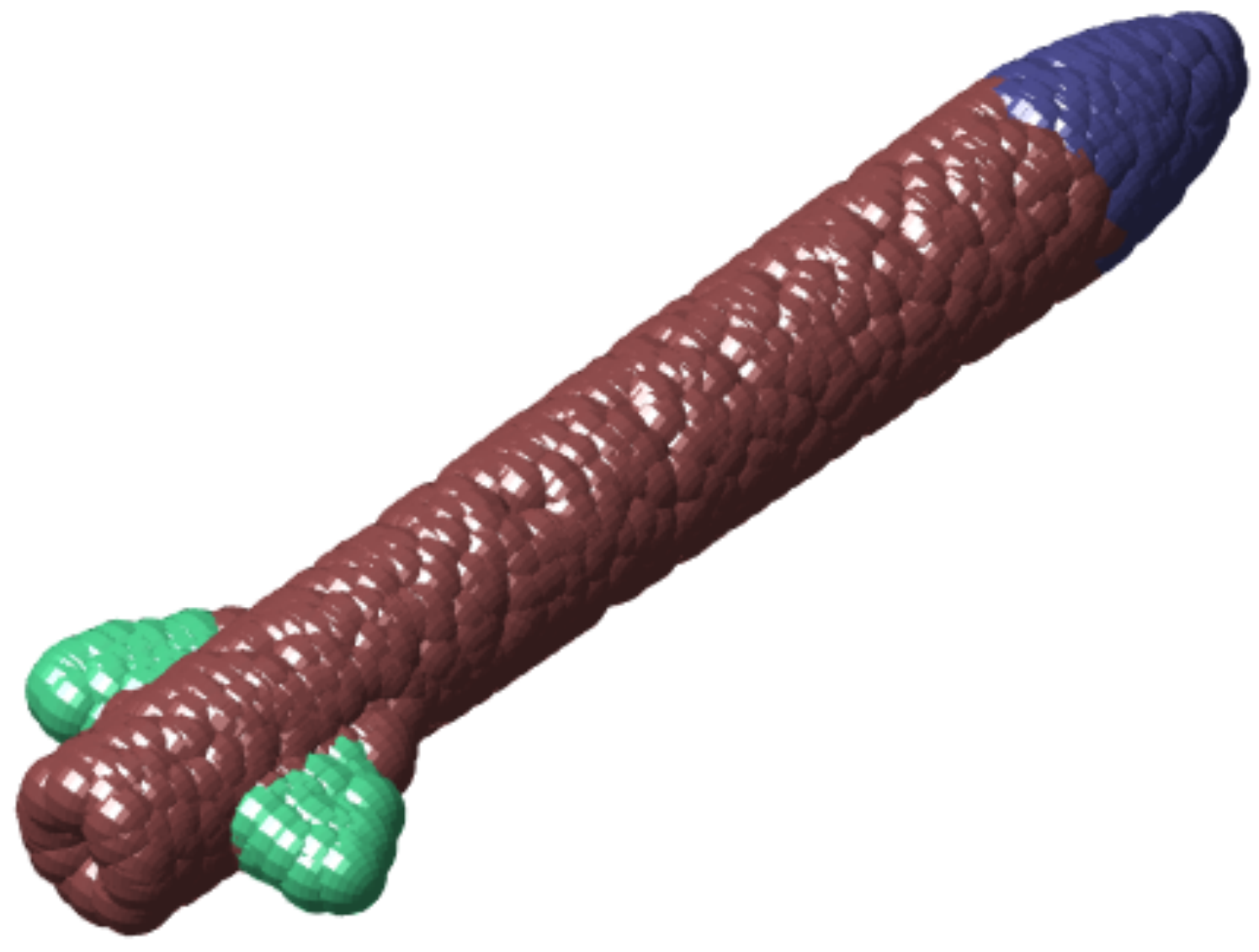}}
    \subfigure[table]{\includegraphics[scale=0.09]{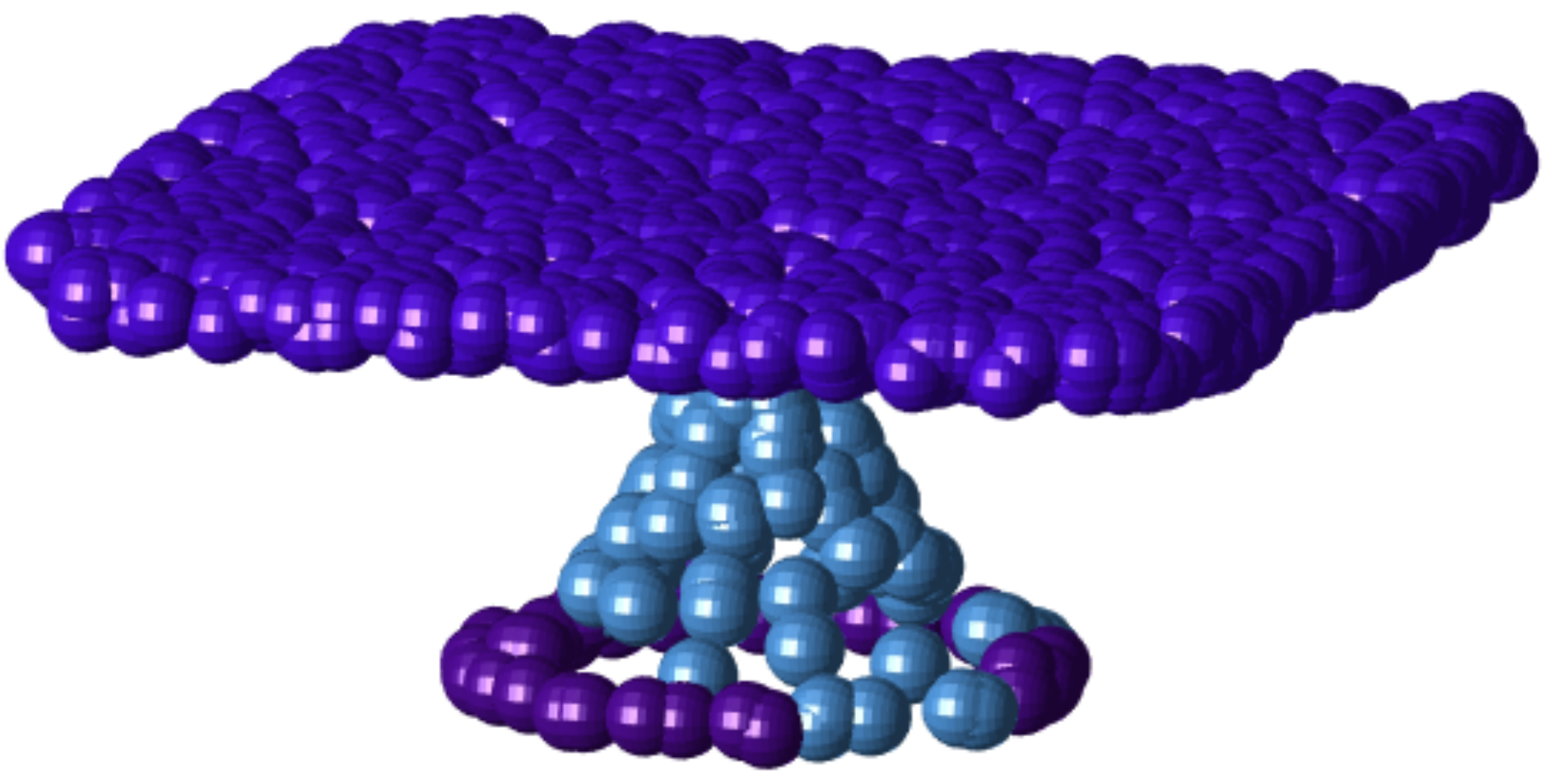}
    \includegraphics[scale=0.09]{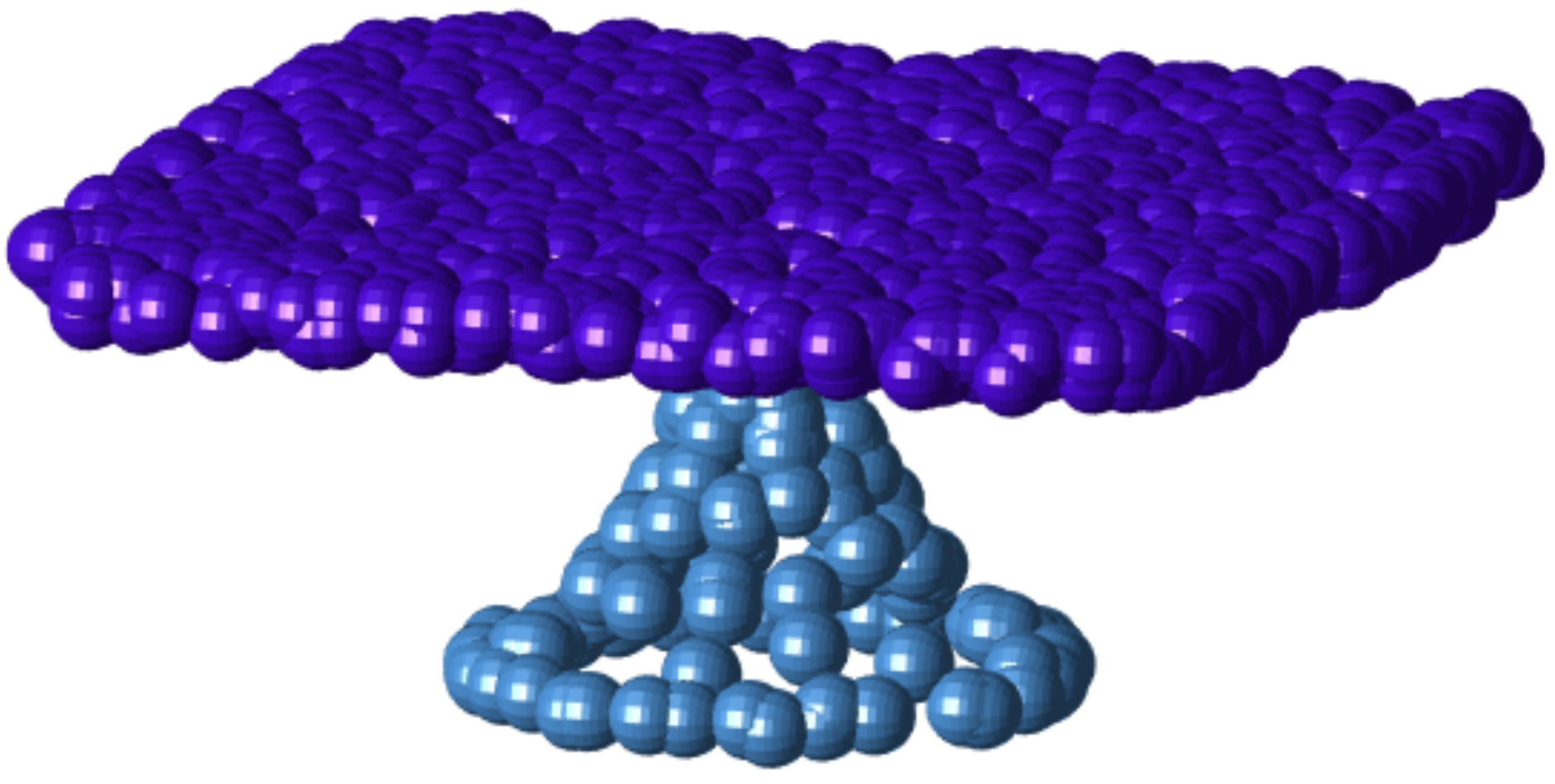}}
    \subfigure[table]{\includegraphics[scale=0.09]{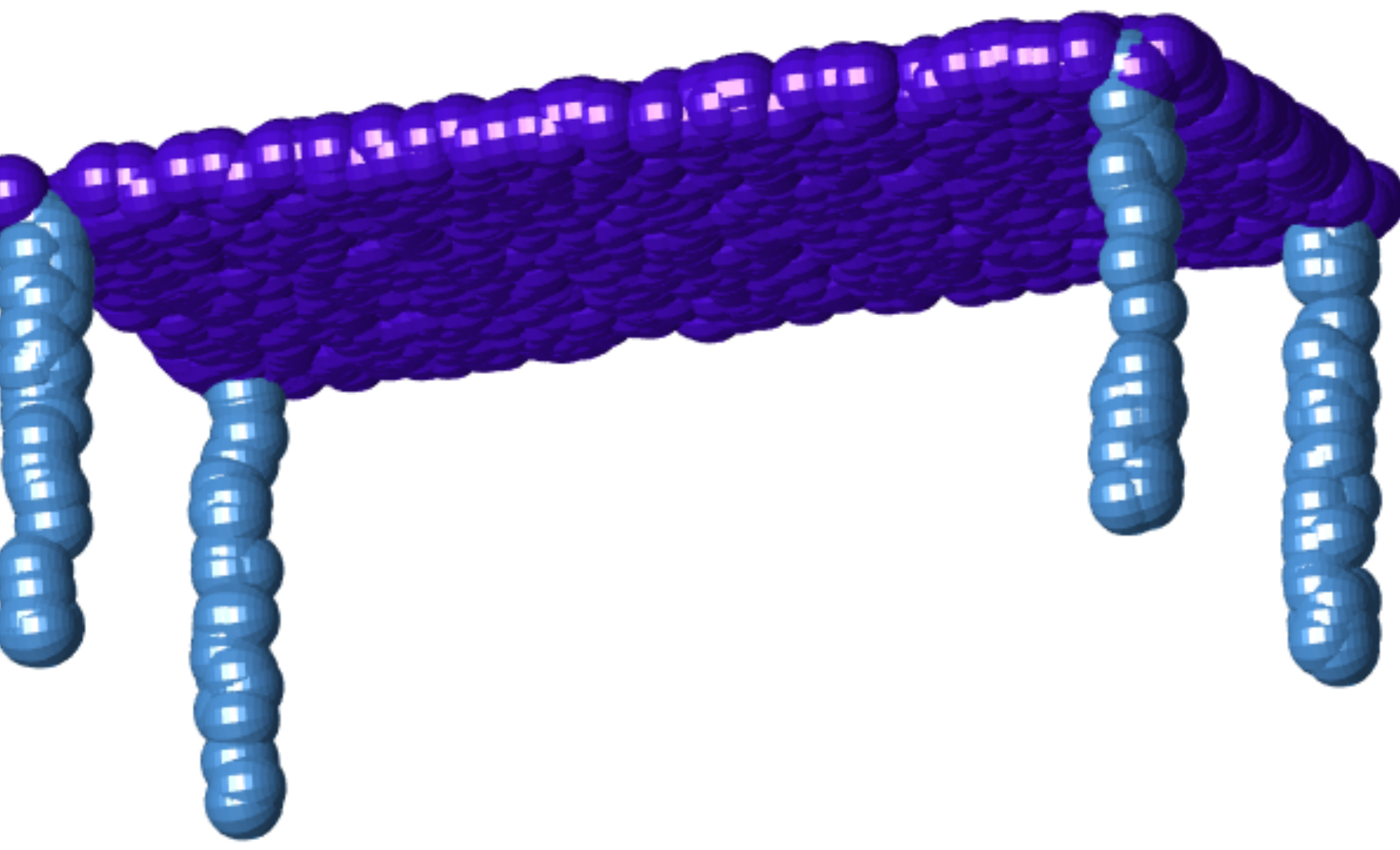}
    \includegraphics[scale=0.09]{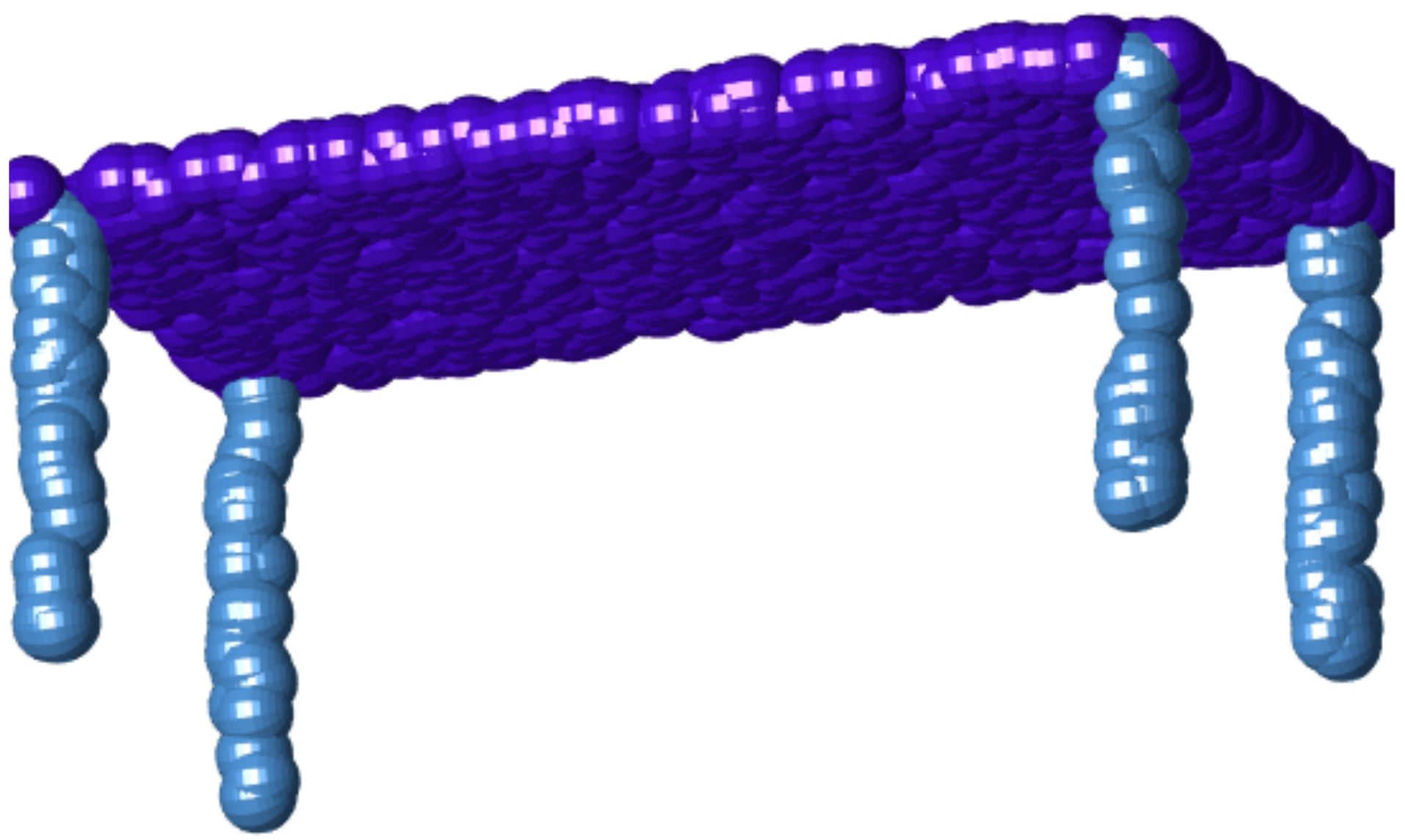}}
    \subfigure[table]{\includegraphics[scale=0.09]{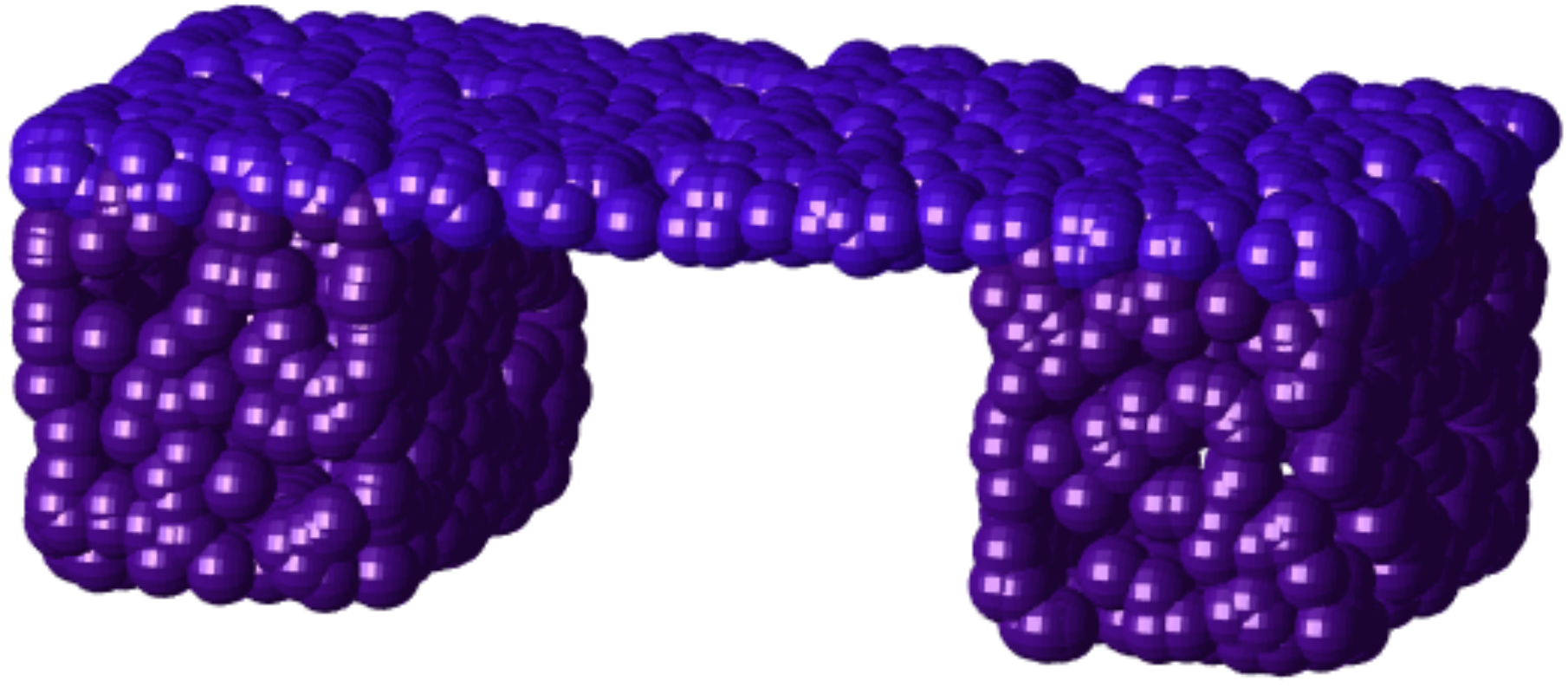}
    \includegraphics[scale=0.09]{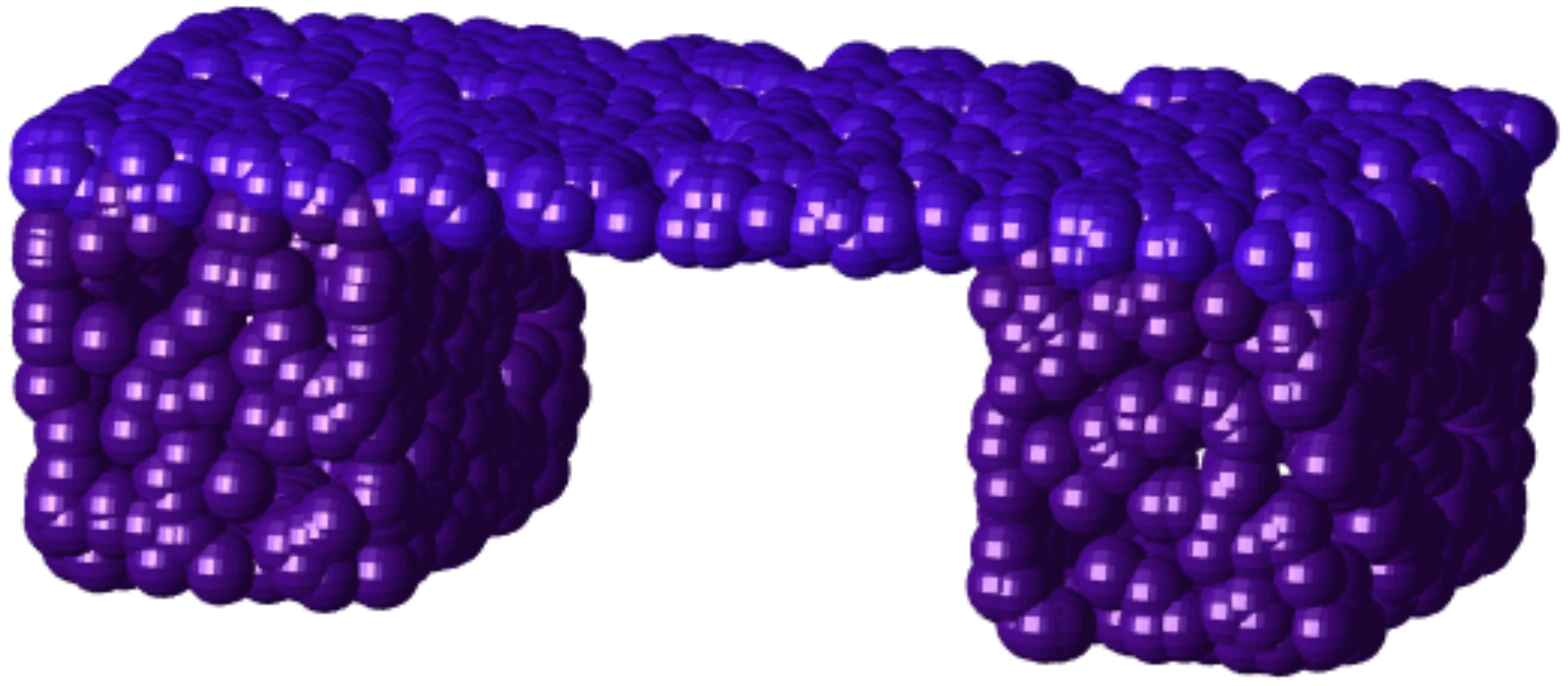}}
    \subfigure[table]{\includegraphics[scale=0.09]{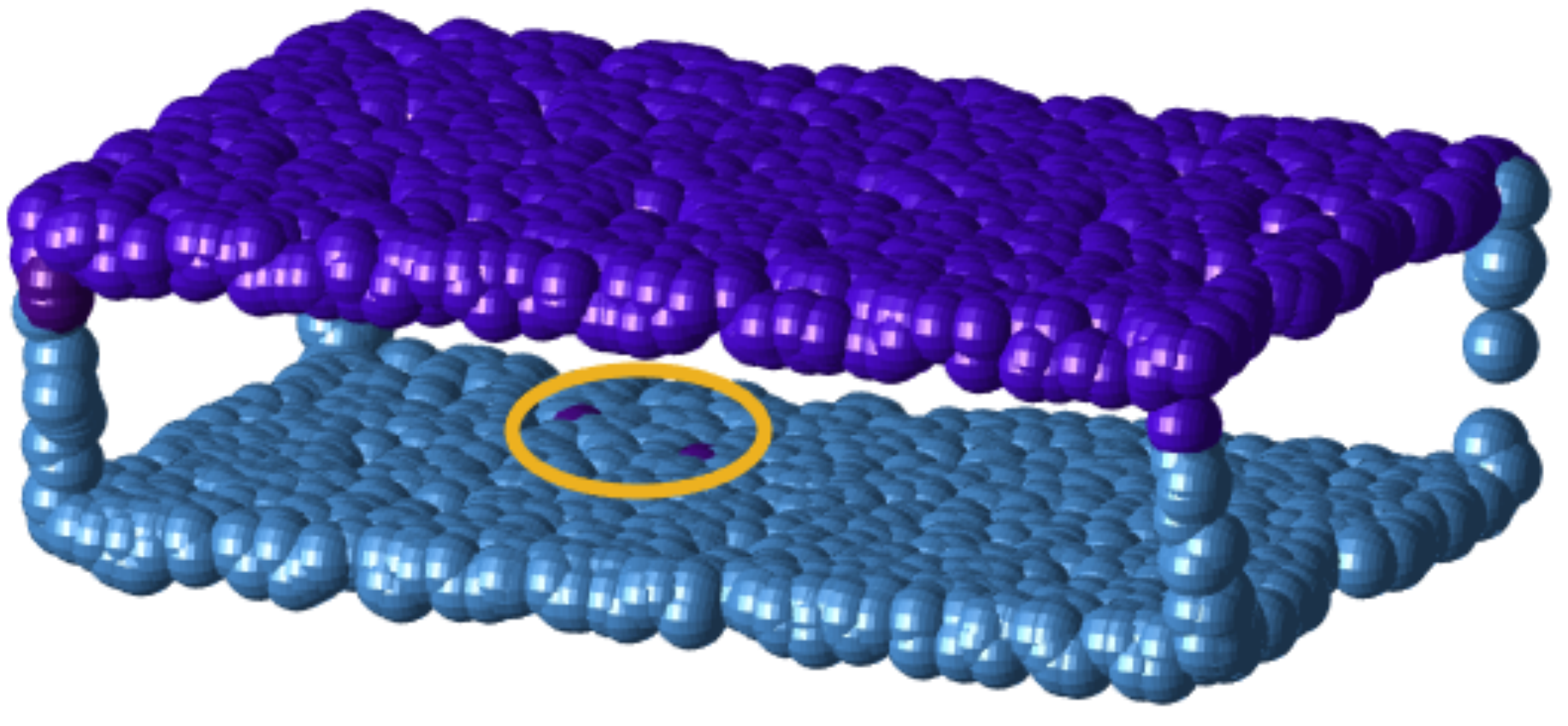}
    \includegraphics[scale=0.09]{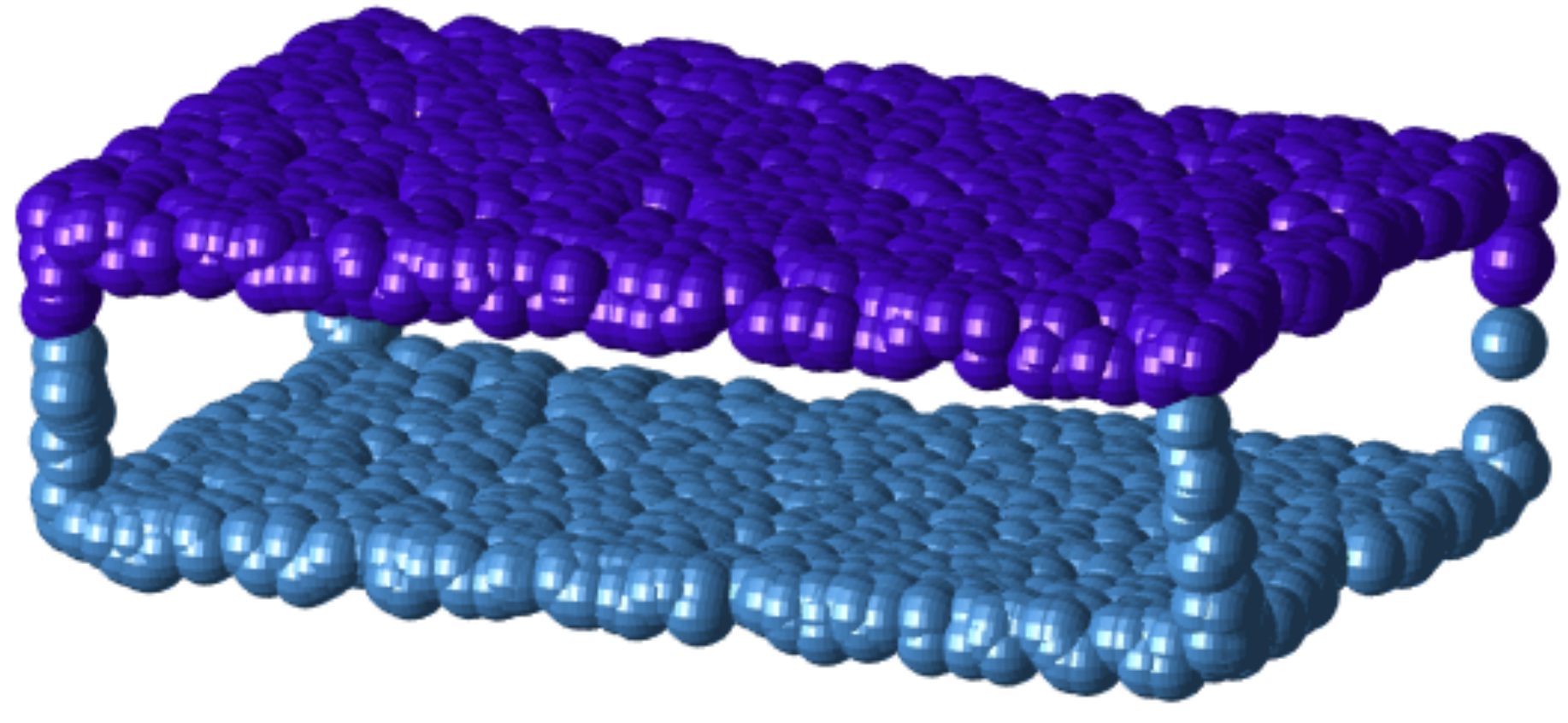}}
    \subfigure[skateboard]{\includegraphics[scale=0.10]{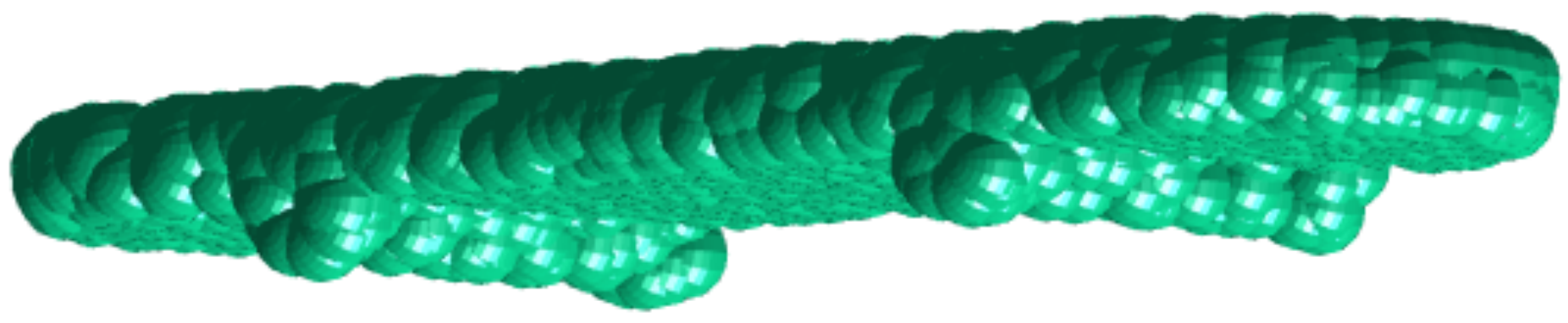}
    \includegraphics[scale=0.10]{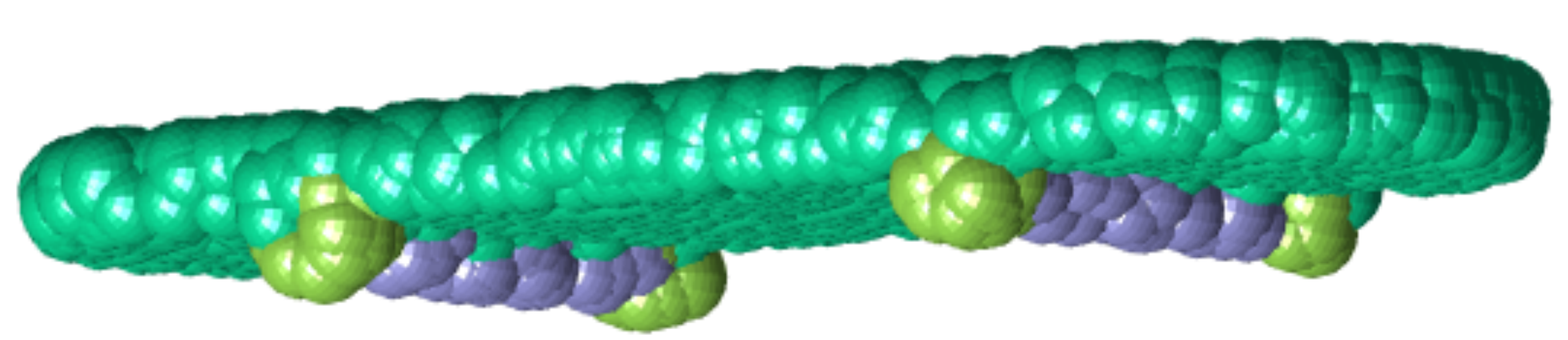}}
    \subfigure[skateboard]{\includegraphics[scale=0.11]{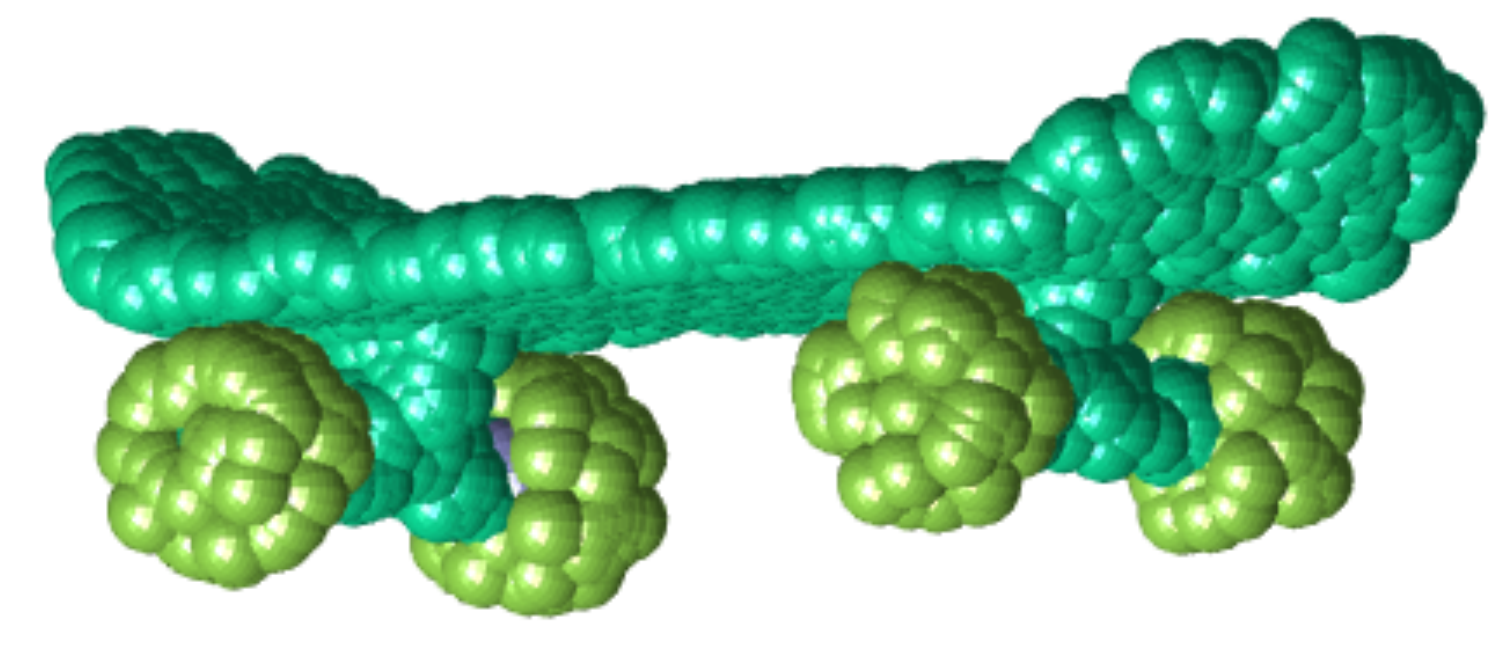}
    \includegraphics[scale=0.11]{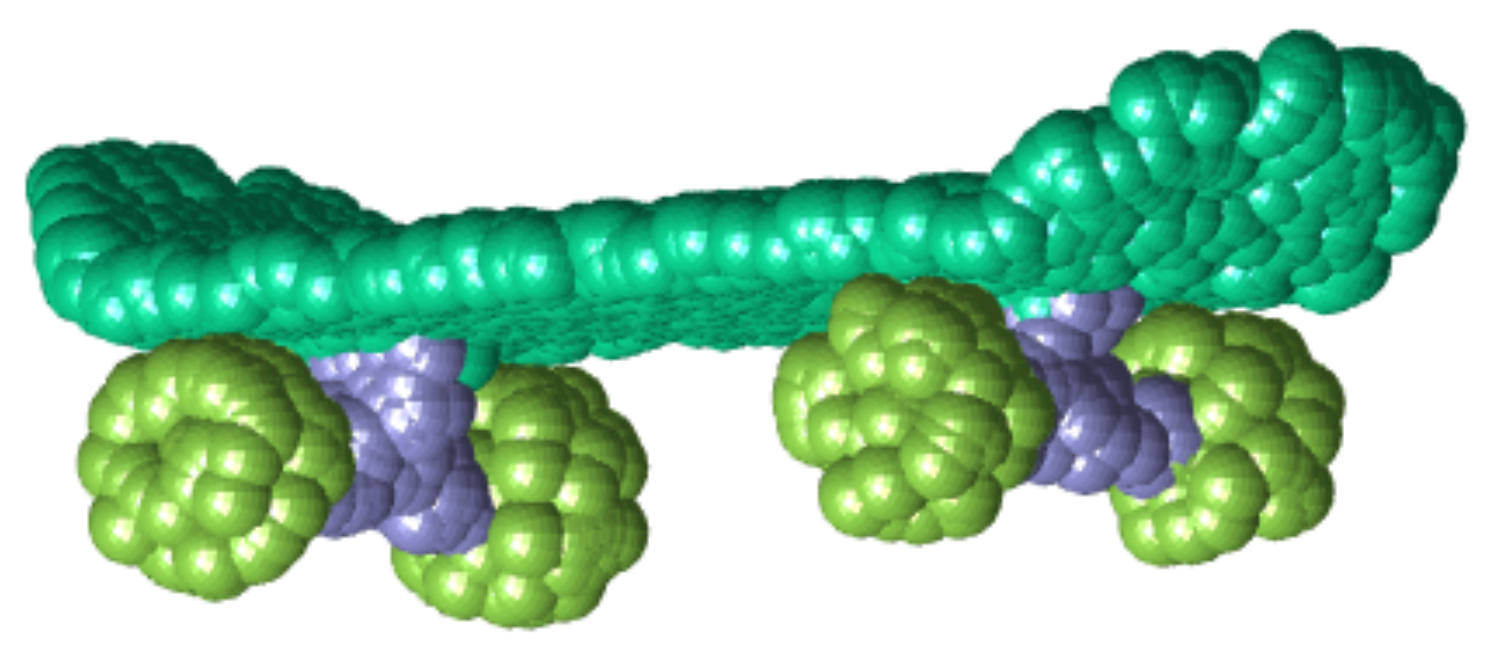}}
    \subfigure[skateboard]{\includegraphics[scale=0.11]{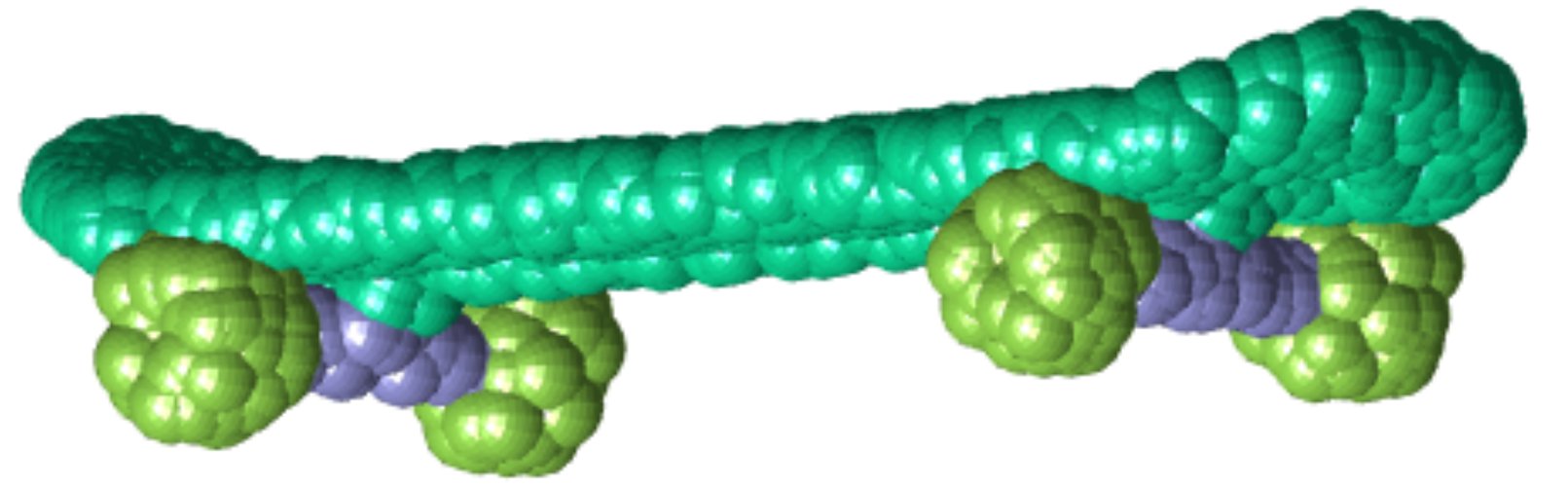}
    \includegraphics[scale=0.11]{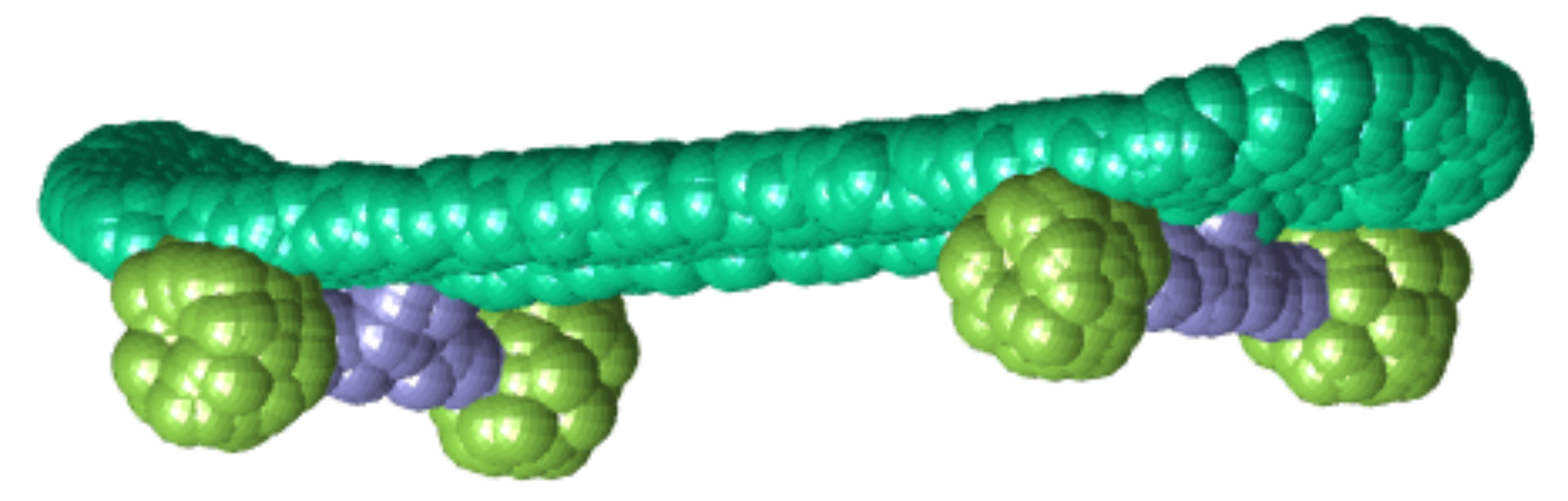}}
    \subfigure[skateboard]{\includegraphics[scale=0.11]{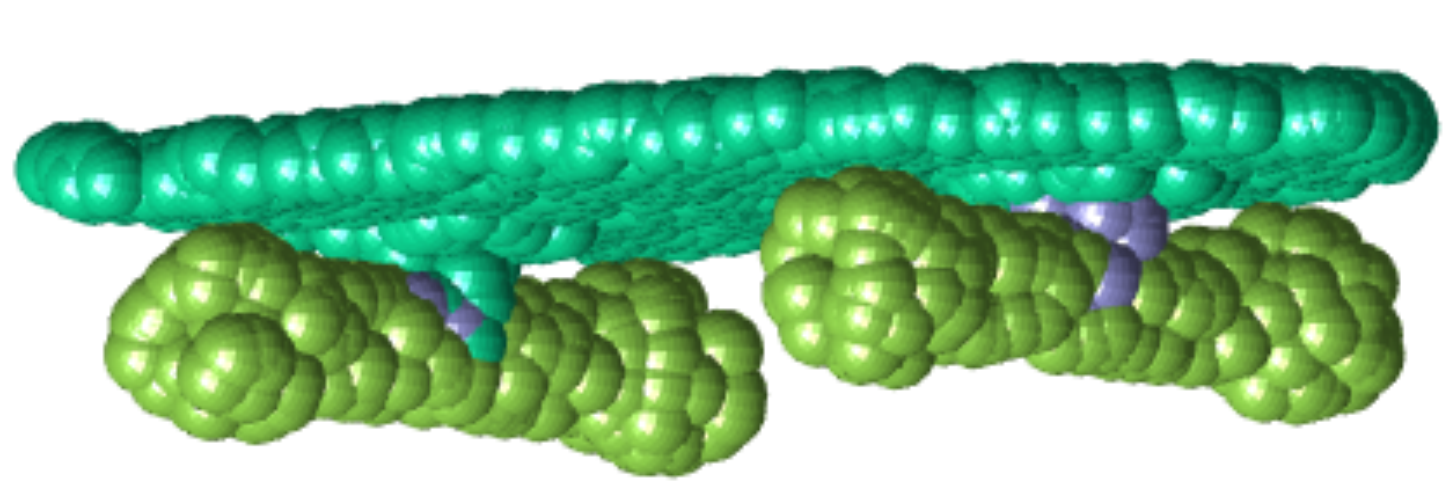}
    \includegraphics[scale=0.11]{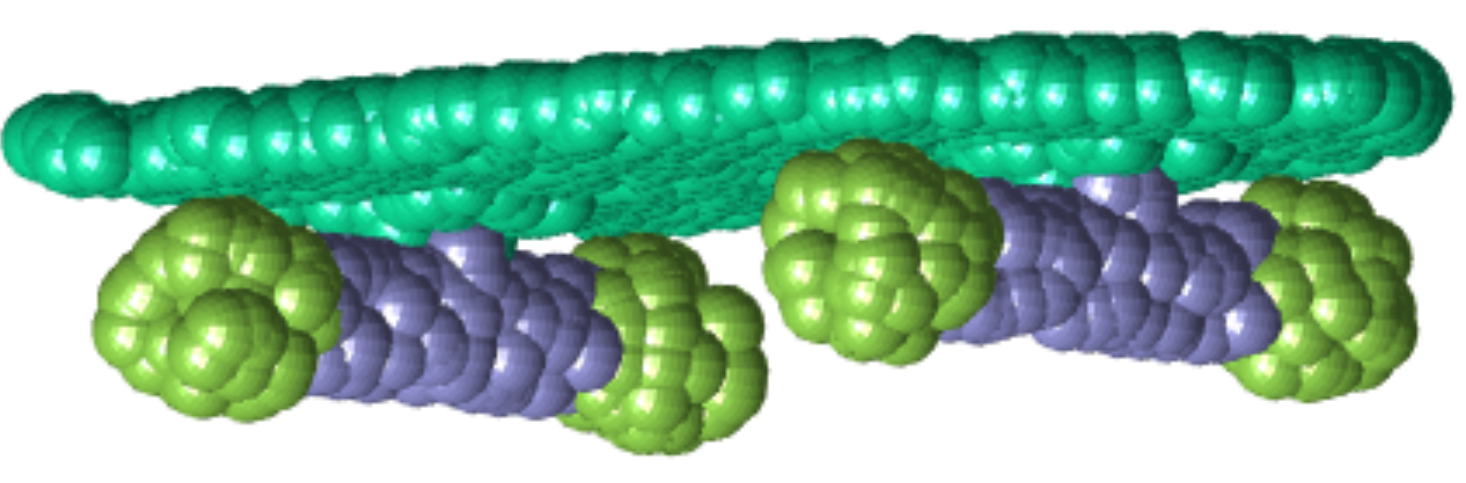}}
\caption{In each subfigure, we show ground truth (left) and our segmentation result (right). }
\label{fig:seg_plots}
\end{figure*}

\section{Conclusions}
Point-cloud helps with understanding 3D geometric shapes. In this work, we proposed a definition of correlation/ convolution on 3D point cloud and have shown that our framework is  ``augmentation-free"and is rotation invariant. Unlike the previous state-of-the-art methods, our proposed framework uses much leaner model by utilizing geometric structures in 3D point-clods. The core of our method is the proposed rotation invariant convolution on point-cloud induced from the topology of  sphere. We performed significantly better result over state-of-the-art algorithms for part segmentation task on shapenet dataset. In classification task, we have achieved similar result on rotated test data without explicit data augmentation. We have also tested on an unsupervised object detection task and detect the corpus callosum shape from an entire 3D brain scan. In future, we like to explore point set completion and object detection in large scale data, e.g. KITTI and various autonomous driving datasets. 

\bibliographystyle{ieeetran}
\bibliography{references_used}

\end{document}